\PassOptionsToPackage{hyphens}{url}
\documentclass[sigconf,nonacm]{aamas}

\usepackage{algorithm}
\usepackage{algorithmic}
\usepackage{longtable}
\usepackage{placeins}

\setcopyright{none}

\title{Constrained Multi-Agent Reinforcement Learning in Imagination}

\author{Raz Shmueli}
\email{razshmue@post.bgu.ac.il}
\author{Shahaf S. Shperberg}
\email{shperbsh@bgu.ac.il}
\affiliation{%
  \institution{Ben-Gurion University of the Negev}
  \city{Beer-Sheva}
  \country{Israel}}

\begin{abstract}
Constrained multi-agent reinforcement learning pays for every policy update with
interactions with the real system, which are slow, costly, and can be unsafe.
We show that a world model, a learned simulator of the system, can move most of
this learning into imagination, onto episodes the model simulates.  \emph{Cost-Aware
MAMBA} (CA-MAMBA) extends MAMBA, a multi-agent world-model learner, to predict
each agent's cost, trains per-agent policies on imagined reward and cost,
and prices cost for the whole team with a Lagrange multiplier, a weight on cost
set from the cost of real episodes.  On two benchmarks, one with discrete and one with continuous control, constrained methods without a world model need at least an order of magnitude
more interactions on most tasks to match CA-MAMBA's cost and task performance.  An
exact decomposition of the gap between imagined and real cost, with a bound
that splits across the agents' communication structure, explains what decides
whether a constraint learned in imagination holds on the real system, and why
the price should be set from measured cost.  In controlled tests on one task per
benchmark, weighting cost as much as reward more than halved a cost-blind learner's cost on one task and met a tight limit that the learned price missed on
the other, without losing task performance.  This suggests the model's cost
estimates support tighter limits than our learned price reaches.  CA-MAMBA cuts
the interactions that constrained multi-agent learning needs.

\end{abstract}

\keywords{Multi-Agent Reinforcement Learning, Safe Reinforcement Learning, World Models, Constrained MDPs}

\begin{document}

\maketitle
\thispagestyle{plain}
\pagestyle{plain}

\section{Introduction}
Cooperative agents that share a safety cost must learn how much task return to
give up for it.  Existing constrained multi-agent methods learn this trade-off
without a model \cite{DBLP:journals/ai/GuKCDYKY23}, so every policy update is
paid for with interactions with the team's real system.  These interactions are
slow and expensive, and each one can incur the very cost the constraint is meant
to limit.  Learned world models change this accounting.  A world model is a
learned simulator of the environment; once it is accurate, most policy updates
can be taken on imagined rollouts instead of collected ones
\cite{DBLP:conf/iclr/HafnerLB020}, and MAMBA extends this to cooperative agents
that each see only part of the environment \cite{DBLP:conf/atal/EgorovS22}.  A
world model that predicts no cost, however, cannot teach a policy to respect a
constraint.

This paper introduces \emph{Cost-Aware MAMBA} (CA-MAMBA), a multi-agent method
that moves the constraint into the world model and so learns it in imagination.
CA-MAMBA predicts each agent's cost from the model's summary of its history, as
SafeDreamer does for one agent \cite{DBLP:conf/iclr/0008JXZ024}, trains
per-agent policies on imagined reward and cost, and prices cost for the team
with one Lagrange multiplier set from measured cost, where SafeDreamer, training
in imagination, sets it from imagined cost.  Its components are known;
we show what combining them buys, where it falls short, and why.

Learning in imagination raises a question that model-free methods do not face:
does a constraint met in imagination hold in the real system?  We answer it
with an exact decomposition of the imagined-to-real cost gap
(Theorem~\ref{thm:cost-transfer}).  The gap is a sum of two model biases, one in
the predicted cost and one in its next-step prediction of the remaining
cost, both averaged over the states the real system visits, and the bound on its
transition part splits across agents by who communicates with whom.  The
decomposition shapes the design: a safety margin sized from model error would
need error bounds that training data cannot usefully estimate, whereas a price
set from measured cost targets the real constraint and is unaffected by a
uniform rescaling of the predicted cost.

Learning in imagination pays off in interaction efficiency.  We test CA-MAMBA on two benchmarks that
differ in cost structure and action space: sparse, irreversible unit losses in
StarCraft battles under discrete control, and dense, reversible speed violations under continuous
control.  At equal interaction budgets it has lower cost than two model-free
constrained baselines on 23 of 24 discrete-control configurations (pairs of a
task and a cost limit) and a higher win rate, the fraction of battles won, on
22; under continuous control it earns $2.8$ to $12$ times their reward while all
three methods end within the limit.  On the median configuration the stronger of the two
baselines needs ten to twenty times as many interactions to match CA-MAMBA's cost
and win rate.

Meeting the limit itself is harder, for every method.  At the matched budget no
method meets a zero-casualty limit, and CA-MAMBA meets 5 of the 12 nonzero ones,
which the stronger baseline matches only with ten times the interactions.
Controlled experiments on one task per benchmark suggest that the model's cost
estimates are not what limits cost.  At parity, the price at which cost and
reward weigh equally in the policy update, these estimates more than halve the
cost of a learner that ignores cost, at a higher win rate, and on continuous
control they meet a tight limit that our learned price misses, at equal reward.
Our learned price nearly matches parity under the zero-casualty limit and meets
the looser continuous-control limit; under the tight one it rises too slowly,
and a tenfold larger step meets that limit at lower reward.  Too high a price
fails the other way: at five times parity the agents avoid combat and win no
battle.  The paper contributes:
\begin{itemize}
    \item CA-MAMBA, which composes MAMBA with SafeDreamer's cost prediction so
    that per-agent cost predictions form one team constraint, priced from
    measured cost;
    \item an exact decomposition of the imagined-to-real cost gap, extended to
    the policy's returns, whose bound splits across agents to expose
    communication and inter-agent correlation;
    \item evidence on both benchmarks that model-free constrained methods need at
    least an order of magnitude more interactions on most tasks to match
    CA-MAMBA;
    \item controlled multiplier experiments, including a cost-blind learner on both
    benchmarks, that separate the price from the model, and evidence that acting
    without messages can cost reward under continuous control.
\end{itemize}

\section{Related Work}
\label{sec:related}
CA-MAMBA joins constrained multi-agent learning, which has no model, and
multi-agent world models, which have no cost.

\noindent \textbf{Constrained multi-agent learning.}
Constrained reinforcement learning maximizes return subject to a limit on
expected cumulative cost.  Constrained policy optimization enforces the limit in
every policy update while keeping the new policy close to the current one, a
\emph{trust region} \cite{DBLP:conf/icml/AchiamHTA17}.
\citet{DBLP:journals/ai/GuKCDYKY23} carry this to teams with two methods, which
are our baselines: MACPO (multi-agent constrained policy optimization) keeps the
trust-region update, and MAPPO-Lagrangian instead subtracts a learned multiple
of the cost from the reward inside MAPPO, the multi-agent form of proximal
policy optimization (PPO), which keeps updates small by clipping the ratio of
new to old action probabilities.  Other work changes the objective, constraining
peak as well as average cost \cite{DBLP:conf/pkdd/LiuGALYX21} or optimizing the
mean of each agent's worst returns \cite{DBLP:conf/nips/QiuWYWHAOR21};
\citet{DBLP:journals/pami/GuYDCWWK24} survey
the field.  All of these methods learn without a model, so every policy update
is paid for with environment interactions, which are what we aim to reduce.  A
complementary approach, shielding, checks each action before it is executed,
in training and deployment alike, and replaces it when a safety model predicts a
violation \cite{DBLP:conf/atal/XiaoLD23}; its guarantees are only as good as that
model.

\noindent \textbf{Learning in imagination.}
World-model methods supply what these methods lack: they learn the
environment's dynamics from collected experience
and train the policy on trajectories the model imagines, so most policy updates
need no new interaction.  Dreamer learns these dynamics in a compact learned
state space with a recurrent network \cite{DBLP:conf/iclr/HafnerLB020} and now
trains agents across many domains \cite{DBLP:journals/nature/HafnerPBL25}, but
has no notion of cost.  Two single-agent methods add a constraint: LAMBDA uses
the uncertainty of a Bayesian world model to stay pessimistic about cost
\cite{DBLP:conf/iclr/AsUCK22}, and SafeDreamer adds a cost predictor, a cost
critic that estimates future cost, and a Lagrange multiplier to Dreamer; it plans
actions online or trains a policy in imagination, and in the latter its multiplier
follows imagined cost \cite{DBLP:conf/iclr/0008JXZ024}.  For a team, the model must also capture how
agents affect one another.  MAMBA, on which we build, gives each agent its own
dynamics and couples them through attention, a learned weighting with which
each agent reads the other agents' states and actions
\cite{DBLP:conf/atal/EgorovS22}; other designs use interacting local models
\cite{DBLP:conf/aaai/WuYCHZ23} or per-agent transformers, stacks of attention
layers, whose representations are aggregated centrally \cite{DBLP:journals/tmlr/ZhangBZYLL25}.  To
our knowledge, none of these multi-agent models predicts cost.  CA-MAMBA
combines MAMBA with SafeDreamer's cost prediction, which raises questions
neither line answers: how per-agent cost predictions form one team constraint,
how the choice of whom each agent attends to and the correlations between agents
enter the gap between imagined and real cost, which signal should drive the
shared multiplier, and, when the cost limit is missed, whether the model or the
multiplier is at fault.

\noindent \textbf{Benchmarks.}
Testing these questions needs constraints that differ in kind.  For continuous
control we use the multi-agent MuJoCo (MAMuJoCo) velocity tasks
of the Safety-Gymnasium benchmark, in which agents jointly move one simulated
robot and incur cost while it exceeds a speed limit; its library SafePO
supplies our baselines \cite{DBLP:conf/nips/JiZZP0SGZD023}.
For discrete control we add a casualty cost to the StarCraft Multi-Agent
Challenge (SMAC) \cite{DBLP:conf/atal/SamvelyanRWFNRH19}, a standard benchmark
in which agents control allied units fighting the game's scripted enemy.  This
cost is constructed, not standard.

\section{CA-MAMBA}
\label{sec:method}
After stating the problem, this section builds CA-MAMBA in three steps: a world
model that also predicts cost (Section~\ref{sec:world-model}), the messages
through which agents' models interact (Section~\ref{sec:message}), and the
imagined actor (policy) and critics (value estimators) with a price on cost set from measured cost
(Section~\ref{sec:actor-critic}).

\subsection{Constrained Dec-POMDP with Messages}
\label{sec:problem}

We model the team as follows.  Let
\[
\begin{aligned}
\mathcal M=\langle&
\mathcal I,\mathcal S,\{\mathcal A_i\}_{i=1}^{N},P,R,
\{C_i\}_{i=1}^{N},\\
&\{\mathcal O_i\}_{i=1}^{N},\Omega,\mu_0,H,\gamma,d\rangle
\end{aligned}
\]
be a finite-horizon decentralized partially observable Markov decision process
(Dec-POMDP) \cite{DBLP:journals/mor/BernsteinGIZ02} extended with costs.  Here
$\mathcal I=\{1,\ldots,N\}$ is the agent set, $\mathcal S$ the state space,
$\mathcal A_i$ and $\mathcal O_i$ agent $i$'s action and observation spaces,
$P(s_{t+1}\mid s_t,\mathbf a_t)$ the transition distribution, $R(s_t,\mathbf
a_t)$ the shared reward, $C_i(s_t,\mathbf a_t)$ agent $i$'s expected cost,
$\Omega=\{\Omega_i\}_{i=1}^{N}$ the observation distributions
$\Omega_i(o_t^i\mid s_t)$, $\mu_0$ the initial-state distribution, $H$ the
episode horizon, $\gamma\in[0,1]$ the discount, and $d\geq0$ the
team-cost limit.  The joint action is $\mathbf a_t=(a_t^1,\ldots,a_t^N)$.

Agent $i$ has local history $\tau_t^i=(o_0^i,a_0^i,\ldots,a_{t-1}^i,o_t^i)$
and, before acting, may receive a message $m_t^i$ from teammates, whose form is
a design choice (Section~\ref{sec:message}).  The joint policy
factorizes as
\[
\pi_{\phi}(\mathbf a_t\mid\boldsymbol\tau_t,\mathbf m_t)
=\prod_{i=1}^{N}
\pi_{\phi^i}^i(a_t^i\mid\tau_t^i,m_t^i),
\]
where $\phi=(\phi^1,\ldots,\phi^N)$ collects the agents' policy parameters,
$\boldsymbol\tau_t=(\tau_t^1,\ldots,\tau_t^N)$, and
$\mathbf m_t=(m_t^1,\ldots,m_t^N)$.  Execution is decentralized in that no
agent uses the state $s_t$; whether it also needs communication depends on the
message function.

Each agent incurs a realized cost $c_t^i$, and the team cost is their mean
$c_t=\tfrac1N\sum_{i=1}^{N}c_t^i$.  The $c_t^i$ differ when each agent is
charged for its own events, such as its own collisions, and all equal $c_t$
when the environment charges every agent the same team cost.  With $r_t$ the
realized shared reward, the team maximizes expected discounted return subject to
a limit on expected undiscounted episode cost:
\[
\begin{aligned}
\max_{\phi}\;\;& J_R(\pi_\phi)=\mathbb E_{\pi_\phi,P}\Bigl[\textstyle\sum_{t=0}^{H-1}\gamma^t r_t\Bigr]\\
\text{s.t.}\;\;& J_C(\pi_\phi)=\mathbb E_{\pi_\phi,P}\Bigl[\textstyle\sum_{t=0}^{H-1}c_t\Bigr]\leq d.
\end{aligned}
\]

\subsection{Cost-Aware Multi-Agent World Model}
\label{sec:world-model}

To solve the constrained problem of Section~\ref{sec:problem} with few
interactions, CA-MAMBA learns a world model, a learned simulator of the team: once it is accurate, policy updates can use generated
trajectories instead of collected ones.  We build on
the recurrent state-space model (RSSM)
\cite{DBLP:conf/icml/HafnerLFVHLD19,DBLP:conf/iclr/HafnerLB020} and add a
per-agent cost predictor (Figure~\ref{fig:training}).  The RSSM compresses each
agent's history into a latent state, learns to predict the next latent state
without the next observation, and trains that prediction against latent states
inferred from observations stored in a \emph{replay} buffer of collected
episodes; imagination then rolls it forward without observations.

\begin{figure}[t]
  \centering
  \includegraphics[width=\columnwidth]{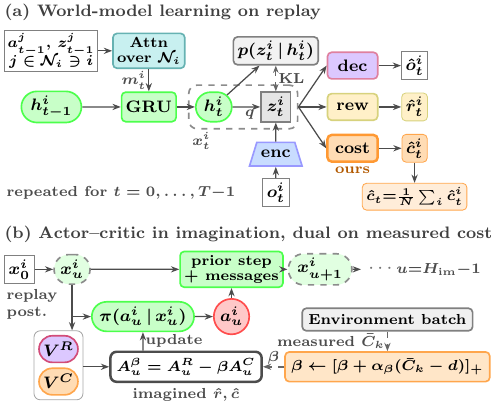}
  \Description{Diagram of the two training phases: world-model learning on
  replay, and actor-critic learning in imagination with a multiplier updated
  from measured cost.}
  \caption{The two training phases for agent $i$; Sections~\ref{sec:world-model}
  to~\ref{sec:actor-critic} define each component (GRU: gated recurrent unit, the recurrent network
  $f_\theta$; enc: the observation encoder; dec: the decoder).
  \textbf{(a)}~On replay, attention over $\mathcal N_i$ turns the previous
  actions and stochastic states into a message $m_t^i$ that drives the
  recurrent update; the posterior combines $h_t^i$ with the encoded
  observation, and a Kullback--Leibler (KL) penalty ties the prior, which predicts $z_t^i$ from $h_t^i$ alone, to it.  The
  decoder reads $x_t^i=(h_t^i,z_t^i)$; the reward and cost heads read $z_t^i$ with
  $h_{t+1}^i$, which carries the agents' actions $a_t$, and the per-agent costs
  are averaged into $\hat c_t$.  \textbf{(b)}~From replay posteriors, the actor
  (policy) and prior roll out $H_{\mathrm{im}}$ imagined steps; the critics
  (value estimators) form $A_u^\beta$ for the actor, while the price $\beta$ steps
  on the \emph{measured} cost $\bar C_k$ of the latest real episode.}
  \label{fig:training}
\end{figure}

\noindent \textbf{Latent state.}
Agent $i$'s latent state $x_t^i=(h_t^i,z_t^i)$ has two parts.  The
\emph{deterministic} state $h_t^i$ is the hidden state of a recurrent network
and accumulates history,
\[
h_t^i=f_\theta\!\left(h_{t-1}^i,m_t^i\right),
\]
where the message $m_t^i$ of Eq.~\eqref{eq:message} carries the previous action
and stochastic state of agent $i$ and of the teammates it attends to.  The
\emph{stochastic} state $z_t^i$ is sampled at each step and represents what
history alone does not determine; as in DreamerV2, Dreamer's discrete-latent version, it is a vector
of categorical variables \cite{DBLP:conf/iclr/HafnerL0B21}.  Together, $x_t^i$
summarizes agent $i$'s history, so the actor and critics read $x_t^i$ instead
of $\tau_t^i$.

\noindent \textbf{Prior and posterior.}
Imagination must predict $z_t^i$ without an observation, so the model keeps two
distributions over it.  The \emph{posterior}
$q_\theta(z_t^i\mid h_t^i,o_t^i)$ also sees the current observation, through an
observation encoder, so it is available only on replay.  The \emph{prior}
$p_\theta(z_t^i\mid h_t^i)$ predicts $z_t^i$ from history alone; imagined steps
have no observation, so imagination samples from the prior.

\noindent \textbf{Heads.}
From the latent state, small networks called heads predict the observable
quantities that training and imagination need.  The observation head, or decoder, reconstructs $o_t^i$ from
$x_t^i$; it is unused in imagination and only trains the latent to retain what
the observation shows.  The reward and cost heads read $z_t^i$ together with the
next deterministic state $h_{t+1}^i$.  That state already contains the action
$a_t^i$ and, through the message, the actions of the teammates it attends to,
so when it attends to all of them each head sees the joint action that team reward and cost
depend on.  The cost head is our addition.  Without it the latent has no
incentive to encode cost, and a cost critic would read a representation blind
to the quantity it estimates.  Each cost head regresses its own $c_t^i$ and
each reward head the shared $r_t$; imagination uses the team means
$\hat c_t=\tfrac1N\sum_{i}\hat c_t^i$ and $\hat r_t=\tfrac1N\sum_{i}\hat r_t^i$.

\noindent \textbf{Objective.}
Training fits the heads to recorded data and pulls the prior toward the
posterior.  For a length-$T$ replay sequence, write
$q_t^i=q_\theta(z_t^i\mid h_t^i,o_t^i)$ and
$p_t^i=p_\theta(z_t^i\mid h_t^i)$.  The parameters $\theta$ minimize
\begin{align}
\mathcal L_{\mathrm{wm}}(\theta)
={}&\sum_{t=0}^{T-1}\sum_{i=1}^{N}
\bigl(\mathcal L_o(o_t^i,\hat o_t^i)
 +\mathcal L_r(r_t,\hat r_t^i)
 +\mathcal L_c(c_t^i,\hat c_t^i)\bigr) \nonumber\\
&+\omega_{\mathrm{KL}}\sum_{t=0}^{T-1}\sum_{i=1}^{N}
D_{\mathrm{KL}}\!\left(q_t^i\Vert p_t^i\right).
\label{eq:world-model-loss}
\end{align}
This is the RSSM training objective, derived from a tractable lower bound on the
log-likelihood of the recorded sequence
\cite{DBLP:conf/icml/HafnerLFVHLD19,DBLP:conf/iclr/HafnerLB020}, with each
log-likelihood term replaced by a smooth-$\ell_1$ regression loss on
observation, reward, and cost ($\mathcal L_o,\mathcal L_r,\mathcal L_c$:
squared error for small residuals, absolute error for large ones).  The
Kullback--Leibler (KL) divergence $D_{\mathrm{KL}}$, weighted by
$\omega_{\mathrm{KL}}\geq0$, penalizes the gap between prior and posterior, so
that rollouts from the prior alone stay close to the latent states the
posterior infers from real observations.  KL balancing
\cite{DBLP:conf/iclr/HafnerL0B21} weights the part of this penalty that moves
the prior more heavily than the part that moves the posterior, so the
posterior is not pulled toward a poor prior.  Three further terms come from
MAMBA: heads that predict whether the episode continues
and which actions are available, and, for discrete actions, a classifier that
recovers the agent's action from its latent state (Appendix~\ref{app:settings}).

\subsection{The Message Function and Its Neighborhood}
\label{sec:message}

The world model couples agents only through the message $m_t^i$, which reads
the model features of a chosen set of teammates:
\begin{equation}
m_t^i=g_\theta\!\left(i,\bigl(j,\,a_{t-1}^j\oplus z_{t-1}^j\bigr)_{j\in\mathcal N_i}\right),
\quad i\in\mathcal N_i\subseteq\mathcal I,
\label{eq:message}
\end{equation}
where $\mathcal N_i$ is agent $i$'s \emph{attention neighborhood} and $\oplus$
concatenates.  The function $g_\theta$ is a Transformer encoder
\cite{DBLP:conf/nips/VaswaniSPUJGKP17}: a stack of attention layers, each of
which updates every agent's input vector by a learned weighting of the other
agents' vectors.  Agent $j$'s input is an embedding of
$a_{t-1}^j\oplus z_{t-1}^j$ plus a fixed sine--cosine code of its index $j$,
and agent $i$ may weight only the agents in $\mathcal N_i$.  The message
$m_t^i$ is the encoder's output at agent $i$'s own position, so no averaging
over agents is needed.  The encoder sits inside the recurrent transition and is
the only input of $f_\theta$ besides $h_{t-1}^i$, as in MAMBA \cite{DBLP:conf/atal/EgorovS22}.
Agents exchange their previous action and stochastic state, not their current
action, so an agent can condition on what its teammates did and believe but not
on what they are about to do.  The actor, one multilayer perceptron shared by
all agents, reads each $x_t^i$ with no attention of its own, so $\mathcal N_i$
is the \emph{only} channel between agents.

Choosing $\mathcal N_i$ changes only whom each agent may attend to.  The
\emph{full} neighborhood $\mathcal N_i=\mathcal I$ needs a reliable channel at
deployment; the \emph{self-only} neighborhood $\mathcal N_i=\{i\}$ needs no
communication, and nearest-neighbor choices lie in between.  The neighborhood
used when acting can differ from the one used to learn the model and to
imagine, which lets a policy trained with messages act without them
(Appendix~\ref{app:execution} diagrams the execution path).

\subsection{Imagined Actor--Critic and Measured-Cost Multiplier}
\label{sec:actor-critic}

With the model and its messages in place, the policy is trained in imagination
under a price on cost.  We relax the constraint $J_C(\pi_\phi)\leq d$ with a multiplier $\beta\geq0$
and train against the Lagrangian $J_R-\beta(J_C-d)$, where $\beta$ is the
price of cost.  This is the standard Lagrangian treatment of constrained
problems \cite{DBLP:conf/icml/StookeAA20}, applied to a world model as
in SafeDreamer \cite{DBLP:conf/iclr/0008JXZ024}.  With factorized policies the
relaxation need not be tight: the best policy at the best price can still miss
the constrained optimum in cooperative multi-agent learning
\cite{DBLP:conf/iclr/ChenZH24}.  We use it as a training signal and claim
nothing that depends on its being tight.  One multiplier serves the team
(Algorithm~\ref{alg:ca-mamba}) because the problem of
Section~\ref{sec:problem} has one constraint, on the team cost $J_C$;
per-agent constraints would need one multiplier each.

\begin{algorithm}[t]
\caption{CA-MAMBA training loop (quantities defined below)}
\label{alg:ca-mamba}
\begin{algorithmic}[1]
\FOR{$k=0,1,2,\ldots$}
\STATE Add the next episode finished by one of the $W$ workers (each acts
with parameters last sent to it and messages over $\mathcal N_i$) to replay;
$\bar C_k\gets$ its team cost; skip the rest in warm-up
\STATE Update $\theta$ on replay sequences by
Eq.~\eqref{eq:world-model-loss}
\FOR{$e=1,\ldots,E$}
\STATE Freeze $\theta$; from replay posteriors, roll out $H_{\mathrm{im}}$
imagined steps under $\pi_\phi$; $\hat r_u,\hat c_u$ average the agents' heads
\STATE Compute $A_u^{\beta,i}=A_u^{R,i}-\beta A_u^{C,i}$; for several passes,
step $\phi$ on $\mathcal J_{\mathrm{clip}}$ and $V^R,V^C$ on $\lambda$-returns
per minibatch
\STATE $\beta\gets[\beta+\alpha_\beta(\bar C_k-d)]_+$, using \emph{measured}
cost only
\ENDFOR
\STATE Send the updated $\theta,\phi$ to the worker that finished
\ENDFOR
\end{algorithmic}
\end{algorithm}

\noindent \textbf{Critics.}
The policy update needs critics, which estimate the return expected from a
state.  From posterior states
sampled from replay, the frozen model rolls out imagined joint latents
$\mathbf x_0,\ldots,\mathbf x_{H_{\mathrm{im}}}$, with
$\mathbf x_u=(x_u^1,\ldots,x_u^N)$.  The first $H_{\mathrm{im}}$ steps,
$u\in\{0,\ldots,H_{\mathrm{im}}-1\}$, carry imagined reward and cost; beyond
them, the critics' own estimate at $\mathbf x_{H_{\mathrm{im}}}$ stands in for
the rest of the return.  One critic network, an attention layer over all agents'
latent states feeding two output heads, estimates for each agent the imagined
reward return $V^R$ and team-cost return $V^C$; separate heads let a changed
$\beta$ reweight two estimates that stay valid.  Both heads
are trained on $\lambda$-returns: averages of the imagined returns of every
length up to the horizon, each completed by the critic, with weights decaying
geometrically in $\lambda\in[0,1]$.  Generalized advantage
estimation \cite{DBLP:conf/iclr/SchulmanMLJA15} turns the critics' prediction
errors into per-agent \emph{advantages} $A_u^{R,i}$ and $A_u^{C,i}$: how much
more reward, or cost, an action led to than the critic expected.  Each is
standardized to zero mean and unit standard deviation over the batch, and the
actor advantage
\[
A_u^{\beta,i}=A_u^{R,i}-\beta A_u^{C,i}
\]
favors an action only if its reward advantage outweighs its cost advantage at
the current price.  After standardization, $\beta$ weights one standard
deviation of cost advantage against one of reward advantage, so $\beta<1$
weights cost below reward.

\noindent \textbf{Actor.}
The actor turns these advantages into a policy update.  Agent $i$ draws $a_u^i$
from the shared actor $\pi_\phi(a_u^i\mid x_u^i)$, so $\phi^1=\cdots=\phi^N$.  Let
$\phi_{\mathrm{old}}$ be the parameters that generated the imagined batch,
$\rho_u^i(\phi)=\pi_\phi(a_u^i\mid x_u^i)/
\pi_{\phi_{\mathrm{old}}}(a_u^i\mid x_u^i)$ agent $i$'s importance ratio,
$\varepsilon>0$ the clipping parameter, and
$\bar\rho_u^i(\phi)=
\operatorname{clip}(\rho_u^i(\phi),1-\varepsilon,1+\varepsilon)$ with
$\operatorname{clip}(v,l,h)=\min\{h,\max\{l,v\}\}$.  The actor
\emph{maximizes} the objective of proximal policy optimization (PPO)
\cite{DBLP:journals/corr/SchulmanWDRK17},
\[
\mathcal J_{\mathrm{clip}}(\phi)
=\mathbb E_{u,i}\!\left[
\min\!\left\{\rho_u^i(\phi)A_u^{\beta,i},
\bar\rho_u^i(\phi)A_u^{\beta,i}\right\}\right],
\]
over imagined steps and agents, plus an entropy bonus that rewards randomness in
the policy to keep it exploring.  Clipping keeps each update near
$\pi_{\phi_{\mathrm{old}}}$ while several gradient steps reuse one imagined
batch; it bounds the policy step, not the model error.

\noindent \textbf{Multiplier.}
The price $\beta$ is the one quantity set from measured rather than imagined
data.  After a warm-up that collects data before any
update, learner step $k$ starts
when one of the $W$ workers, parallel processes that collect episodes, finishes an episode, and $\bar C_k$ is that
episode's undiscounted team cost.  Its worker acted with the parameters
received when its previous episode ended, $\phi_{k-\ell_k}$ for some lag
$\ell_k\geq0$, so $\bar C_k$ is a one-episode estimate of
$J_C(\pi_{\phi_{k-\ell_k}})$, biased only by the order in which episodes finish
(Section~\ref{sec:theory}).  After each of the $E$ policy-update rounds of the
step (each imagines a fresh replay batch and runs several passes over it), the
multiplier takes the projected step
$\beta\gets[\beta+\alpha_\beta(\bar C_k-d)]_+$ with $[v]_+=\max\{0,v\}$ and
step size $\alpha_\beta>0$.  For $\beta\geq0$, repeating this step $E$ times
with the same $\bar C_k$ gives, per learner step,
\begin{equation}
\beta_{k+1}=
\left[\beta_k+E\alpha_\beta(\bar C_k-d)\right]_+ :
\label{eq:dual-update}
\end{equation}
the price rises while the team overspends, falls while it underspends, and stops
at zero.  The actor ascends $\mathcal J_{\mathrm{clip}}$, built from
discounted, standardized \emph{imagined} cost advantages, while the multiplier
reads undiscounted \emph{measured} episode cost.  Even with an exact model these
are not gradients of one objective, so the procedure is an inexact
\emph{primal--dual} method: it alternates policy (primal) updates with price
(dual) updates.  Section~\ref{sec:theory} analyzes the two statistics
separately: Theorem~\ref{thm:cost-transfer} covers the discounted cost the actor
sees, and the dual analysis the episode cost the multiplier measures.

\section{What Model Error Permits}
\label{sec:theory}
CA-MAMBA's actor never sees real cost: it improves the policy on cost that the
world model imagines (Section~\ref{sec:actor-critic}), while the constraint of
Section~\ref{sec:problem} is on real cost.  This raises three questions, which
this section answers in turn.  First, when a policy looks safe in imagination,
how far can its real cost be from its imagined cost, and what does the
difference depend on?  Second, how does the team structure of the model,
per-agent dynamics coupled through messages, enter that difference?  Third,
since the difference cannot be bounded in practice, what does setting the price
from measured cost achieve, and what can it not repair?  We state the main
results here, summarize the others, and prove all of them in
Appendix~\ref{app:theory_proofs}.

\noindent \textbf{The gap between imagined and real cost.}
Consider running one policy $\pi$ twice from the same start for the
$H_{\mathrm{im}}$ steps the actor imagines: once on the real system and once in
the world model.  The difference between the two discounted costs is what the
actor gets wrong about safety.  We compare the two runs in the latent space of
Section~\ref{sec:world-model}.  Let $x_t=(x_t^1,\ldots,x_t^N)$
be the joint latent state, with $x_t^i=(h_t^i,z_t^i)$, and $a_t$ the joint
action drawn from $\pi(a\mid x)=\prod_{i}\pi^i(a^i\mid x^i)$.  On the real
system the latents are computed from observations by the posterior; let
$P_t(\cdot\mid x,a)$ be the resulting distribution of $x_{t+1}$ given
$(x_t,a_t)=(x,a)$, and $c_t(x,a)$ the mean of the team cost $c_t$ given that pair.  In imagination the
world model replaces both: $\widehat P(\cdot\mid x,a)$ is the distribution of
the next latent under the recurrent update and the prior, and $\widehat
c(x,a)$ is the team mean of the cost heads.  Because $P_t$ is defined on the
network's own latents, the results hold whether or not the latents retain all
of the history that predicts the future, which a learned model cannot
guarantee.

\begin{theorem}[Imagination-to-reality cost gap]
\label{thm:cost-transfer}
Fix the policy $\pi$, a horizon $H_{\mathrm{im}}\geq 1$, and a discount
$\gamma\in[0,1]$.  Let the latent space $\mathcal{X}$ be a standard Borel space (regular enough
for conditional distributions to exist),
let both processes draw $x_0$ from the same distribution and their actions from
$\pi$, and let $c_t$ and $\widehat c$ take values in a common interval of length
$c_{\max}<\infty$.  Define the real and imagined discounted costs over
$H_{\mathrm{im}}$ steps as
\[
\begin{aligned}
J_{C,H_{\mathrm{im}}}(\pi)
&=
\mathbb{E}\!\left[
\sum_{t=0}^{H_{\mathrm{im}}-1}\gamma^t c_t(x_t,a_t)\right],\\
\widehat J_{C,H_{\mathrm{im}}}(\pi)
&=
\mathbb{E}_{\widehat P,\pi}\!\left[
\sum_{t=0}^{H_{\mathrm{im}}-1}\gamma^t\widehat c(x_t,a_t)\right],
\end{aligned}
\]
the first along the real process.  Let $\nu_t$ be the distribution of
$(x_t,a_t)$ on the real system, and $\widehat V_k(x)$ the model's expected
discounted cost over $k$ steps from $x$ under $\pi$ (its cost-to-go), with $\widehat V_0\equiv0$.
\begin{enumerate}
\item[(i)] The gap decomposes exactly as
\[
J_{C,H_{\mathrm{im}}}(\pi)-\widehat J_{C,H_{\mathrm{im}}}(\pi)
=\sum_{t=0}^{H_{\mathrm{im}}-1}\gamma^t\bigl(B_c(t)+\gamma B_V(t)\bigr),
\]
where $B_c(t)=\mathbb{E}_{\nu_t}[c_t-\widehat c\,]$ is the cost head's bias and
\[
B_V(t)=\mathbb{E}_{(x,a)\sim \nu_t}\Bigl[
\bigl(\mathbb{E}_{P_t(\cdot\mid x,a)}-\mathbb{E}_{\widehat P(\cdot\mid x,a)}\bigr)
\widehat V_{H_{\mathrm{im}}-t-1}\Bigr]
\]
is the model's one-step bias in predicting the cost-to-go.
\item[(ii)] If, for every $t<H_{\mathrm{im}}$, $|B_c(t)|\leq\epsilon_c$ and the
mean total variation (the largest difference in the probability two distributions give one event)
$\mathbb{E}_{(x,a)\sim \nu_t}\bigl[D_{\mathrm{TV}}\bigl(P_t(\cdot\mid
x,a),\widehat P(\cdot\mid x,a)\bigr)\bigr]\leq\epsilon_P$, then
\begin{equation}
\label{eq:conditional_cost_transfer}
\begin{aligned}
\left|J_{C,H_{\mathrm{im}}}(\pi)-\widehat J_{C,H_{\mathrm{im}}}(\pi)\right|
\leq \Delta_{C,H_{\mathrm{im}}},\\
\Delta_{C,H_{\mathrm{im}}}
:=
\epsilon_c\sum_{t=0}^{H_{\mathrm{im}}-1}\gamma^t +
c_{\max}\epsilon_P\sum_{t=0}^{H_{\mathrm{im}}-1}t\gamma^t ,
\end{aligned}
\end{equation}
and $\widehat J_{C,H_{\mathrm{im}}}(\pi)\leq b-\Delta_{C,H_{\mathrm{im}}}$
implies $J_{C,H_{\mathrm{im}}}(\pi)\leq b$ for any $b$.
\end{enumerate}
\end{theorem}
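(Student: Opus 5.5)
We prove (i) with the simulation lemma's telescoping argument, run on the model's cost-to-go along the real process; (ii) then follows by bounding each term.

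\textbf{Step 1 (recursion).} Write $\widehat V_k(x)$ through the Bellman recursion
\[
\widehat V_{k}(x)=\mathbb{E}_{a\sim\pi(\cdot\mid x)}\bigl[\widehat c(x,a)+\gamma\,\mathbb{E}_{\widehat P(\cdot\mid x,a)}\widehat V_{k-1}\bigr],\qquad \widehat V_0\equiv0 .
\]
It holds because $\widehat P$ and $\widehat c$ are time-homogeneous. The standard Borel assumption guarantees that the conditionals $P_t(\cdot\mid x,a)$ exist and that $\widehat V_k$ is measurable. Since $x_0$ has the same law in both processes, $\widehat J_{C,H_{\mathrm{im}}}(\pi)=\mathbb{E}_{x_0}[\widehat V_{H_{\mathrm{im}}}(x_0)]=\mathbb{E}_{\nu_0}[\cdots]$.

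\textbf{Step 2 (telescope).} Define $D_t=\mathbb{E}[\widehat V_{H_{\mathrm{im}}-t}(x_t)]$ along the real process, for $t=0,\ldots,H_{\mathrm{im}}$. Then $D_{H_{\mathrm{im}}}=0$ and $D_0=\widehat J_{C,H_{\mathrm{im}}}(\pi)$, so
\[
J_{C,H_{\mathrm{im}}}(\pi)-\widehat J_{C,H_{\mathrm{im}}}(\pi)=\sum_{t=0}^{H_{\mathrm{im}}-1}\gamma^t\Bigl(\mathbb{E}_{\nu_t}[c_t]-D_t+\gamma D_{t+1}\Bigr).
\]
This holds because $\sum_t\gamma^t(\gamma D_{t+1}-D_t)=\gamma^{H_{\mathrm{im}}}D_{H_{\mathrm{im}}}-D_0=-D_0$.

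\textbf{Step 3 (identify the terms).} Expand $D_t$ with the recursion. Actions are drawn from $\pi$ in both processes, so $D_t=\mathbb{E}_{\nu_t}[\widehat c+\gamma\,\mathbb{E}_{\widehat P}\widehat V_{H_{\mathrm{im}}-t-1}]$. By the tower property on the real process, $D_{t+1}=\mathbb{E}_{\nu_t}[\mathbb{E}_{P_t}\widehat V_{H_{\mathrm{im}}-t-1}]$; this is the step that uses that $P_t$ is the true conditional law of $x_{t+1}$ given $(x_t,a_t)$. Substituting gives each summand as $B_c(t)+\gamma B_V(t)$, which is (i). No Markov property of the real latents is used, only the conditional law one step ahead.

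\textbf{Step 4 (bound (ii)).} First, $|B_c(t)|\le\epsilon_c$ directly. Second, $\widehat V_k$ is a discounted sum of $k$ values of $\widehat c$, so
\[
\sup\widehat V_k-\inf\widehat V_k\le c_{\max}\sum_{s<k}\gamma^s\le k\,c_{\max}.
\]
The difference of two expectations of a function with range of length $L$ is at most $L\,D_{\mathrm{TV}}$. Taking $k=H_{\mathrm{im}}-t-1$ gives $|B_V(t)|\le (H_{\mathrm{im}}-t-1)c_{\max}\epsilon_P$. It remains to check that
\[
\sum_t\gamma^{t+1}(H_{\mathrm{im}}-t-1)\le\sum_t t\gamma^t .
\]
Reindex with $s=t+1$, so the left side is $\sum_{s=1}^{H_{\mathrm{im}}}\gamma^s(H_{\mathrm{im}}-s)$. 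The claimed comparison with $\sum_{s}s\gamma^s$ is not termwise, and this is the main obstacle.

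The intended bound probably comes from bounding the range of $\widehat V_k$ more sharply, or from ordering the telescope the other way. Expand instead the real cost-to-go along the model process, or equivalently write the deviation at step $t$ as arising from the $t$ earlier transition errors. Then the error accumulated before step $t$ multiplies $c_{\max}$ at weight $\gamma^t$, and step $t$'s cost is exposed to $t$ earlier transition errors. A coupling argument does exactly this: the total-variation distance between the time-$t$ marginals is at most $t\epsilon_P$, giving $c_{\max}t\epsilon_P\gamma^t$ per step.

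I would try that route first to match $\sum_t t\gamma^t$ exactly. However, the averaging in $\epsilon_P$ is over the real $\nu_t$, so the coupling must propagate along the real process. I would verify that the inductive TV bound $\|\nu_t-\widehat\nu_t\|\le\|\nu_{t-1}-\widehat\nu_{t-1}\|+\epsilon_P$ holds using $\nu_{t-1}$-averaged TV. It does: apply the data-processing inequality to the common policy, and the triangle inequality through the kernel $\widehat P$. The $B_c$ term is then handled separately by splitting $c_t-\widehat c$ on $\nu_t$, and the cost term on $\widehat\nu_t$ versus $\nu_t$ is bounded by $c_{\max}\,t\epsilon_P$.

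\textbf{Step 5 (implication).} The final claim is immediate: $J\le\widehat J+\Delta_{C,H_{\mathrm{im}}}\le b$.
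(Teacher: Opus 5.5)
Your part (i) is the paper's proof. It is the same telescoping of the model's cost-to-go along the real process; the paper puts $\gamma^t$ inside $D_t$, which is only bookkeeping. It uses the tower property once for $P_t$ and no Markov property of the real latents.

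For (ii), the obstacle in your Step 4 comes from a bound you loosened yourself. You had the range $c_{\max}\sum_{s<k}\gamma^s$ for $\widehat V_k$ and then relaxed it to $k\,c_{\max}$. After that relaxation, the comparison you need is not merely non-termwise but false for $\gamma\in(0,1)$ and $H_{\mathrm{im}}\geq 3$: at $H_{\mathrm{im}}=3$ it reads $2\gamma+\gamma^2\leq\gamma+2\gamma^2$. The paper keeps the geometric factor, giving $|B_V(t)|\leq c_{\max}\epsilon_P\sum_{j=0}^{H_{\mathrm{im}}-t-2}\gamma^j$. In $\sum_t\sum_j\gamma^{t+1+j}$, each exponent $s\in\{1,\ldots,H_{\mathrm{im}}-1\}$ arises from exactly $s$ pairs $(t,j)$. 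The weighted sum is therefore exactly $\sum_{s<H_{\mathrm{im}}}s\gamma^s$, and (ii) follows from (i) with no second argument.

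Your fallback is nonetheless a correct and genuinely different proof of (ii). The law of $x_{t+1}$ is $\int P_t\,d\nu_t$, and both processes attach the same $\pi$. So the induction $D_{\mathrm{TV}}(\nu_{t+1},\widehat\nu_{t+1})\leq D_{\mathrm{TV}}(\nu_t,\widehat\nu_t)+\epsilon_P$ holds with the $\nu_t$-averaged error. The $P_t$-versus-$\widehat P$ term is handled by convexity of total variation, and the rest by data processing through $\widehat P$ and $\pi$. Then split each $\mathbb E_{\nu_t}[c_t]-\mathbb E_{\widehat\nu_t}[\widehat c\,]$ into $B_c(t)$ plus a difference of $\widehat c$-expectations of size at most $c_{\max}t\epsilon_P$. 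This gives the same $\Delta_{C,H_{\mathrm{im}}}$.

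The two routes buy different things:
\begin{itemize}
\item Your marginal-drift form of the simulation lemma needs only the range of $\widehat c$, not of $\widehat V_k$. It also identifies the mechanism as drift of the state--action marginals.
\item The paper's form makes (ii) a corollary of the exact identity. The bound then controls exactly the two terms whose signs can cancel, and whose transition part the $\hat B_V$ diagnostic estimates. Its telescoping also extends to the bootstrapped returns by replacing $\widehat V_0\equiv0$ with the critic.
\end{itemize}
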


\noindent Here $D_{\mathrm{TV}}(p,q)=\sup_Y|p(Y)-q(Y)|$ over events $Y$;
$\epsilon_c,\epsilon_P\geq0$ are the average cost and transition errors.  The proof
(Appendix~\ref{app:proof-gap}) adapts the simulation lemma
\cite{DBLP:journals/ml/KearnsS02}, which bounds how one-step errors
accumulate, to the cost-to-go along the real trajectory.

Part~(i) says that the gap is exactly the sum, over the imagined steps, of two
errors the model makes on the states the real system actually visits: an error
in the cost of the current step ($B_c$, the bias of the cost heads) and an
error in where the team goes next, counted only through its effect on the
remaining cost ($B_V$, the bias of the prior).  Both are averaged where the real
system goes, though $B_V$ weighs them by the model's cost-to-go, and errors of
opposite sign cancel.
Part~(ii) turns bounds on the average errors into a safety margin: a policy
whose imagined cost is at least $\Delta_{C,H_{\mathrm{im}}}$ below a threshold
also has real $H_{\mathrm{im}}$-step cost at or below it, without worst-case bounds on every state--action
pair.  At our $\gamma=0.99$ and $H_{\mathrm{im}}=14$ the margin is
$13.1\epsilon_c+83.2c_{\max}\epsilon_P$, so a transition error weighs far more than
a cost-head error.  The actor's $\lambda$-returns do not stop at step
$H_{\mathrm{im}}$ but end in the cost critic's estimate;
Corollary~\ref{app:cor:bootstrap} in Appendix~\ref{app:proof-bootstrap} extends
part~(i) to them, adding one term, the critic's bias at the state where each
return is cut off.

\noindent \textbf{Where the agents enter.}
The transition error $\epsilon_P$ of part~(ii) belongs to the team's joint
model, but the world model of Section~\ref{sec:world-model} samples each
agent's next stochastic state independently given the current joint state and
action, so the error splits across agents.  With the full or self-only
neighborhood of Section~\ref{sec:message}, agent $i$'s next state depends only
on the states and actions in $\mathcal N_i$, written
$x^{\mathcal N_i}=(x^j)_{j\in\mathcal N_i}$ and likewise $a^{\mathcal N_i}$; a
general restriction on whom each agent attends to would replace $\mathcal N_i$
by the agents it reaches through the encoder's layers.

\begin{proposition}[Factorized transition error]
\label{prop:factorized}
Let $\widehat P(\cdot\mid x,a)=\bigotimes_{i=1}^N\widehat P_i(\cdot\mid
x^{\mathcal N_i},a^{\mathcal N_i})$, so agents' next states are drawn
independently, each from a distribution $\widehat P_i$ over agent $i$'s
states, and let $P_{t,i}(\cdot\mid x,a)$ be the $i$-th marginal of
$P_t(\cdot\mid x,a)$.  Then
\[
D_{\mathrm{TV}}(P_t,\widehat P)\leq
D_{\mathrm{TV}}\Bigl(P_t,\textstyle\bigotimes_{i}P_{t,i}\Bigr)
+\sum_{i=1}^{N}D_{\mathrm{TV}}(P_{t,i},\widehat P_i)
\]
at every $(x,a)$, so Theorem~\ref{thm:cost-transfer}(ii) holds with $\epsilon_P=\eta+\sum_i\epsilon_P^i$, where $\eta$ and
$\epsilon_P^i$ are the largest averages over $\nu_t$ ($t<H_{\mathrm{im}}$) of the
first term and the $i$-th summand.  For a fixed state space, enlarging
$\mathcal N_i$ cannot raise the smallest $\epsilon_P^i$ that a model with inputs
$(x^{\mathcal N_i},a^{\mathcal N_i})$ can approach.  With the mean team cost,
$B_c(t)$ averages the agents' cost-head biases.
\end{proposition}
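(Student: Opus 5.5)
The plan is to split the joint transition error in two steps with the triangle inequality for total variation, then handle the product of marginals by a telescoping (hybrid) argument. At a fixed $(x,a)$, write $Q=\bigotimes_i P_{t,i}$. Since $D_{\mathrm{TV}}$ is a metric on probability measures on the standard Borel space $\mathcal X^N$, we have
\[
D_{\mathrm{TV}}(P_t,\widehat P)\leq D_{\mathrm{TV}}(P_t,Q)+D_{\mathrm{TV}}(Q,\widehat P).
\]
The first summand is exactly the correlation term of the statement. For the second, both measures are products, and we will show $D_{\mathrm{TV}}\bigl(\bigotimes_i\mu_i,\bigotimes_i\nu_i\bigr)\leq\sum_i D_{\mathrm{TV}}(\mu_i,\nu_i)$. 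To do this, introduce the hybrids $R_k=\nu_1\otimes\cdots\otimes\nu_k\otimes\mu_{k+1}\otimes\cdots\otimes\mu_N$ and bound each consecutive difference. Here $R_{k-1}$ and $R_k$ differ only in the $k$-th factor. For any event $Y$, Fubini conditions on the other coordinates and gives $|R_{k-1}(Y)-R_k(Y)|\leq\sup_{Y'}|\mu_k(Y')-\nu_k(Y')|$. Summing over $k$ gives the claim. Substituting $\mu_i=P_{t,i}(\cdot\mid x,a)$ and $\nu_i=\widehat P_i(\cdot\mid x^{\mathcal N_i},a^{\mathcal N_i})$ yields the pointwise inequality.

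Next, take expectations over $(x,a)\sim\nu_t$, using linearity and measurability of the TV maps; measurability holds because the kernels are Borel on a standard Borel space. Each averaged summand is at most its maximum over $t<H_{\mathrm{im}}$. So the averaged joint TV is at most $\eta+\sum_i\epsilon_P^i$, and this bound can be fed into Theorem~\ref{thm:cost-transfer}(ii) as $\epsilon_P$. The cost statement is immediate from linearity: $\widehat c=\frac1N\sum_i\widehat c^i$ and $c_t=\frac1N\sum_i c_t^i$, so $B_c(t)=\frac1N\sum_i\mathbb E_{\nu_t}[c_t^i-\widehat c^i]$.

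The monotonicity claim needs an argument about model classes rather than a calculation. For fixed $\nu_t$, the achievable $\epsilon_P^i$ is the infimum of $\mathbb E_{\nu_t}[D_{\mathrm{TV}}(P_{t,i},\widehat P_i)]$ over kernels $\widehat P_i$ that are measurable in $(x^{\mathcal N_i},a^{\mathcal N_i})$. Suppose $\mathcal N_i\subseteq\mathcal N_i'$. Then any kernel that reads $(x^{\mathcal N_i},a^{\mathcal N_i})$ is also a kernel of $(x^{\mathcal N_i'},a^{\mathcal N_i'})$: it simply ignores the extra coordinates, as a composition with a projection. The feasible set therefore only grows, and the infimum cannot rise. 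Taking the maximum over $t$ preserves this when the same kernel is used at every $t$, since the feasible set of time-homogeneous kernels also grows.

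I expect the main obstacle to be stating this last point carefully rather than any inequality. ``The smallest $\epsilon_P^i$ a model can approach'' must be read as an infimum over all measurable kernels on those inputs, not over a trained network family. The state space and $\nu_t$ must also be held fixed, since $\nu_t$ depends on the latents that the neighborhood itself shapes; this is why the statement specifies ``for a fixed state space''. I would make that interpretation explicit before giving the inclusion argument.
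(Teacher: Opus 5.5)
Your proposal is correct and follows the paper's proof essentially step for step. Both use the triangle inequality through $\bigotimes_i P_{t,i}$, hybrid measures $R_k$ that swap one factor at a time, averaging under $\nu_t$ with the maximum of a sum bounded by the sum of maxima, the class-inclusion argument via composition with a coordinate projection for the neighborhood claim, and linearity for the cost bias; the only difference is cosmetic: you bound each hybrid step event by event through sections, whereas the paper obtains the equality $D_{\mathrm{TV}}(R_{k-1},R_k)=D_{\mathrm{TV}}(P_{t,k},\widehat P_k)$ from densities, and either suffices.
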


\noindent The proposition, proved in Appendix~\ref{app:proof-factorized}, says
that the team's transition error is at most the sum of one error per agent plus
a team term $\eta$.  Each $\epsilon_P^i$ depends on whom agent $i$ attends to,
so communication enters the safety margin through it.  The team term $\eta$
measures correlations between
agents' next states that a world model sampling agents independently, MAMBA's
included, cannot represent: exact per-agent predictors leave a team error of
exactly $\eta$.

\noindent \textbf{How large the gap is in practice.}
Whether part~(ii) can certify a policy's real cost depends on the error levels,
which we estimate on tasks of Section~\ref{sec:experiments} (\texttt{8m} is a
StarCraft battle, HalfCheetah a simulated robot).  Replay data stand in for $\nu_t$, and $c_{\max}$ is about $1$ for
MAMuJoCo's binary velocity cost and the team size $N$ for SMAC casualties.  On
MAMuJoCo at $d=25$ the cost head's average bias is estimated at no more than
$0.004$ per step (Appendix~\ref{app:cost-accuracy}), so its term in
$\Delta_{C,H_{\mathrm{im}}}$ is about $0.05$.  The transition term cannot be
bounded usefully from logs: through Pinsker's inequality
$D_{\mathrm{TV}}\leq\sqrt{D_{\mathrm{KL}}/2}$, the prior--posterior divergence,
$3.2$ to $6.5$ nats (natural-log units) per agent, bounds each agent's
transition error only by a number above $1$, no better than the trivial bound
(Appendix~\ref{app:model-error}), and the divergence is positive even for a
perfect model, since the posterior also sees the next observation.  Part~(ii) is
therefore uninformative here: any $\epsilon_P$ above
$0.16$ makes $\Delta_{C,H_{\mathrm{im}}}$ exceed
$c_{\max}\sum_{t<14}\gamma^t\approx13.1\,c_{\max}$, the largest gap any model
could produce.  The gap itself can be measured: on $18$ runs
(Appendix~\ref{app:model-error}), we compare
realized and imagined cost over the $14$-step window, a proxy for the left-hand
side of part~(i).  The model under-predicts in $17$ of the $18$ runs, by $33\%$
of realized cost on \texttt{8m} at $d=0$ and by $35\%$ on HalfCheetah
$2{\times}3$ at $d=25$ under the learned multiplier; the exception is a run with
near-zero realized cost.  The cost heads explain little of this (their replay
bias $\hat B_c$ is under $1\%$ of realized cost on \texttt{8m}); the prior's bias
has the matching sign on \texttt{8m} (Appendix~\ref{app:model-error}).  In short, the safety margin of
part~(ii) cannot be computed, and under the learned multiplier the model
under-predicts real cost by about a third.

\noindent \textbf{Why the multiplier reads measured cost.}
The measured gap answers the third question.  A multiplier driven by imagined
cost, as SafeDreamer's is when its policy trains in imagination \cite{DBLP:conf/iclr/0008JXZ024}, would
enforce $\widehat J_{C,H_{\mathrm{im}}}\leq b$ for some threshold $b$, and by
part~(i) the real cost would then miss $b$ by the bias sum, which varies across
configurations and whose measured proxy is positive in 17 of our 18 runs: such a multiplier would
under-enforce the real constraint, and correcting it would need the margin that
part~(ii) cannot supply.  CA-MAMBA's multiplier instead reads the measured
episode cost (Section~\ref{sec:actor-critic}), so if its update settles, it settles where
the real constraint holds, whatever the sign of the model's bias
(Proposition~\ref{app:prop:dual}, proved in Appendix~\ref{app:proof-dual}); whether it settles depends on the model.  The model still sets the actor's
direction.  Because each
advantage is standardized over the batch, a change $A_u^{C,i}\mapsto
sA_u^{C,i}+\delta$ with one $s>0$ and one $\delta$ for the whole batch leaves the
actor's step unchanged; a positive rescaling of $\widehat c$ is such a change
once the cost critic has rescaled with it.  Other errors generally change the step,
including one that keeps each state's ranking of actions but rescales states
differently, and Appendix~\ref{app:counterexample} gives one that no multiplier
repairs.  Section~\ref{sec:dual-ablation} tests the model's cost advantages.

\noindent \textbf{What the measured-cost multiplier guarantees.}
With the true Lagrangian
$\mathcal{L}(\phi,\beta)=J_R(\pi_\phi)-\beta(J_C(\pi_\phi)-d)$ of
Section~\ref{sec:actor-critic}, $\partial\mathcal{L}/\partial\beta=-(J_C-d)$, so
the update of Eq.~\eqref{eq:dual-update} is projected stochastic \emph{descent}
in $\beta$: a noisy gradient step, clipped at zero, on the cost of the policy
that collected the episode.  Episodes reach the learner in the order they
finish, which can depend on their cost, so a single step is not exactly
unbiased; but if an episode's expected cost given the past is $J_C$ of its
starting parameters, the summed bias over $K$ steps is at most $2(W-1)C_{\max}$,
where $C_{\max}$ bounds episode cost (under the scheduling assumptions of Appendix~\ref{app:proof-dual}).  Proposition~\ref{app:prop:dual} then shows that
the average measured cost over $K$ steps exceeds $d$ by at most
$(\beta_K-\beta_0)/(E\alpha_\beta K)$, so if $\beta$ grows slower than
linearly, the long-run average cost, measured and expected, meets the limit.
Any fixed point of the mean update, the projected step taken with the expected
cost, satisfies $J_C\leq d$, with $\beta>0$
only if the constraint holds with equality (complementary slackness).  None of
this bounds the number of violating episodes or proves the iterates converge,
and a cost head blind to an unsafe action can leave no point where the loop settles:
Appendix~\ref{app:counterexample} gives a one-step problem where $\beta$ diverges
while the policy stays unsafe, because the actor has no cost-reducing
direction.  Appendix~\ref{app:beta} shows the loop closing as prescribed where
the cost head is accurate.

\section{Experiments}
\label{sec:experiments}
With the transfer question analyzed, we turn to what CA-MAMBA achieves.  The
evaluation asks three questions, which Sections~\ref{sec:smac}--\ref{sec:nocomm} answer in turn.  \emph{RQ1:} how many environment
interactions does CA-MAMBA save relative to model-free constrained baselines, in
cost and in task performance?  \emph{RQ2:} when CA-MAMBA misses its cost limit, is
the world model or the multiplier responsible?  \emph{RQ3:} what does a
policy optimized with messages lose without them?  Appendices~\ref{app:algorithm}
and~\ref{app:settings} give the training procedure, every hyperparameter, and
every loss; the code and the scripts behind the main tables and figures will be released once the paper is accepted.

\noindent \textbf{Environments and metrics.}
We answer the three questions on the two benchmarks of Section~\ref{sec:related}.  In the StarCraft Multi-Agent Challenge (SMAC)
\cite{DBLP:conf/atal/SamvelyanRWFNRH19} each allied unit is one agent that sees
only a local field of view and picks a discrete move, attack, or stop action
against the game's scripted enemy.  A \emph{map} is a fixed battle scenario
whose name lists each side's units by count and type: \texttt{8m} pits eight
Marines (basic infantry) against eight, and \texttt{3s5z\_vs\_3s6z} three
Stalkers (ranged) and five Zealots (melee) against three and six; \texttt{c}
stands for Colossus, and \texttt{bane\_vs\_bane} is a battle of Zerglings and
Banelings.  Every agent is charged the team cost: the number of allied units
lost in a step, so a unit that sacrifices itself counts against the whole team.
An episode that ends with enemies alive also charges the units still alive,
unless a unit died on that final step.  Episode cost thus counts lost units, plus
survivors when enemies remain: a sparse, discrete, irreversible signal.  A \emph{configuration} is a map paired with a
cost limit $d$.  We report \emph{win rate}, the fraction of battles won, and
cost.  MAMuJoCo partitions a simulated
robot, the four-legged Ant or the two-legged HalfCheetah, among agents that each
actuate and observe a disjoint subset of its joints, so Ant $4{\times}2$ means
four agents of two joints each; the team cost, shared by all agents, accrues
while the body's speed exceeds a threshold \cite{DBLP:conf/nips/JiZZP0SGZD023}.
Because both benchmarks charge every agent the same cost, the team mean of
Section~\ref{sec:problem} equals each agent's cost.  One \emph{interaction} is one joint
environment step.  Each metric is a training-time mean over the episodes in a
window before a checkpoint, averaged over three runs (two if marked), with
sample standard deviations.  The window is CA-MAMBA's last 5k interactions; on SMAC the
baselines log per rollout batch, with wider windows
(Appendix~\ref{app:full-results}).

\noindent \textbf{Baselines.}
We compare with MACPO \cite{DBLP:journals/ai/GuKCDYKY23}, in a discrete-action
adaptation for SMAC (settings in Appendix~\ref{app:baseline-hyperparams}), and
with MAPPO-Lagrangian (MAPPO-Lag in tables and figures), which replaces MACPO's
trust-region step with clipped gradient steps on a Lagrangian.
Both come from the SafePO code base
\cite{DBLP:conf/nips/JiZZP0SGZD023} and run on both benchmarks.  We read each
baseline at the matched budget and at larger budgets from the same runs.  SafeDreamer
is single-agent; in its place, Section~\ref{sec:dual-ablation} runs CA-MAMBA with
the price set from imagined cost, as SafeDreamer sets it.

\subsection{Interaction Efficiency on SMAC}
\label{sec:smac}

We answer RQ1 first on SMAC.  At the matched budget of 100k interactions,
CA-MAMBA has lower mean cost than
both MACPO and MAPPO-Lagrangian on 23 of the 24 configurations and a higher win
rate on 22 (Appendix~\ref{app:full-results}).  On \texttt{3s5z\_vs\_3s6z} at $d=0$
all three methods incur the full cost of $8$; on \texttt{bane\_vs\_bane} at $d=0$
CA-MAMBA trades win rate for far lower cost.  Configurations on one map are not
independent, so we also test per map: CA-MAMBA is ahead of MAPPO-Lagrangian on
every configuration of 11 of the 12 maps in cost and of 10 in win rate
(one-sided sign tests, which compare these counts with fair coin flips:
$p=0.003$ and $p=0.02$).

\begin{figure}[t]
  \centering
  \includegraphics[width=\columnwidth]{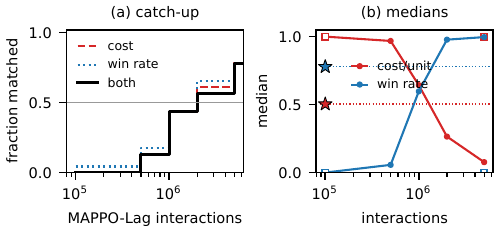}
  \Description{Two panels. Left: fraction of SMAC configurations on which
  MAPPO-Lagrangian matches CA-MAMBA's 100k cost and win rate, against
  MAPPO-Lagrangian's interactions. Right: median casualties per allied unit and
  median win rate against interactions.}
  \caption{How many interactions MAPPO-Lagrangian needs to catch up with
  CA-MAMBA at 100k on SMAC.  \textbf{(a)}~Among the 23 configurations where
  CA-MAMBA wins any episode, the fraction on which MAPPO-Lagrangian has matched
  CA-MAMBA's 100k mean cost (dashed), mean win rate (dotted), or both (solid) by
  each checkpoint; the solid curve is the distribution of $m(g)$, defined in the
  text.  \textbf{(b)}~Medians over the 24 configurations of casualties per
  allied unit (red) and win rate (blue): MAPPO-Lagrangian (lines), MACPO at
  100k and 5M (squares), CA-MAMBA at 100k (stars).}
  \label{fig:catchup}
\end{figure}

Figure~\ref{fig:catchup} turns this into the quantity that matters for a
costly system: how many more interactions the baseline needs.  For each
configuration $g$, the interaction multiple $m(g)$ is the smallest multiple of
100k, among $1$, $5$, $10$, $20$, and $50$, at which MAPPO-Lagrangian's mean cost
is at most and its mean win rate at least CA-MAMBA's at 100k.  Over the 23
configurations where CA-MAMBA wins any episode, the median multiple is $20$ on
this grid, so more than $10$ (Table~\ref{tab:interaction-multiple} in
Appendix~\ref{app:full-results} gives the counts and a check that redraws each configuration's runs at random).
MAPPO-Lagrangian does not match CA-MAMBA within $50$ times on 5
configurations, all on maps where its win rate stays below $0.01$ through 5M.
With ten times
the data the comparison is roughly even (CA-MAMBA lower in cost on 14 of 24,
higher in win rate on 13); at $50$ times MAPPO-Lagrangian is ahead in both on
18.  CA-MAMBA's advantage is therefore the interactions it saves.  Against
MACPO the ordering largely persists at 5M (lower cost on 16 of 24, higher win rate on 17); Appendix~\ref{app:full-results}
lists every configuration for all three methods, and Appendix~\ref{app:curves}
plots learning curves against MACPO on four maps.

Much of the lower cost at 100k comes from learning the task faster; pricing cost
adds a further reduction: on \texttt{8m} at $d=0$ the same learner with
its price fixed at zero incurs cost $5.28$ at win rate $0.63$, against
CA-MAMBA's $2.05$ at $0.95$ (Section~\ref{sec:dual-ablation}),
and removing the cost critic as well raised cost on \texttt{1c3s5z} within 20k
interactions (Appendix~\ref{app:safety-mechanism}).  Under a second, dense SMAC team cost that
counts pairs of moving allied units too close together, CA-MAMBA at 100k still
has a higher win rate than MACPO at 1M on 9 of 10 maps, but lower cost on only
4, with one tie (Appendix~\ref{app:collision}).

\subsection{Interaction Efficiency on MAMuJoCo}
\label{sec:mamujoco}

MAMuJoCo tests whether these savings hold when cost structure and action space both
change: cost is dense and reversible, actions are continuous, and reward opposes
cost.  We use SafePO's default limit $d=25$ and run CA-MAMBA for 1M
interactions.

\begin{table}[t]
\centering
\small
\setlength{\tabcolsep}{3pt}
\begin{tabular}{@{}lcc@{}}
\toprule
Method (interactions) & Reward $\uparrow$ & Cost $\downarrow$ \\
\midrule
\multicolumn{3}{@{}l}{\emph{Ant $2{\times}4$}} \\
\addlinespace[1pt]
MACPO & $268 \mathbin{\pm} 70$ & $0.2 \mathbin{\pm} 0.1$ \\
MAPPO-Lag & $439 \mathbin{\pm} 46$ & $0.1 \mathbin{\pm} 0.2$ \\
\textbf{CA-MAMBA} & $\mathbf{1836} \mathbin{\pm} 678$ & $17.4 \mathbin{\pm} 15.1$ \\
\quad no comm. & $1491 \mathbin{\pm} 80$ & $19.4 \mathbin{\pm} 30.3$ \\
\midrule
\multicolumn{3}{@{}l}{\emph{Ant $4{\times}2$}} \\
\addlinespace[1pt]
MACPO & $374 \mathbin{\pm} 96$ & $0.3 \mathbin{\pm} 0.2$ \\
MAPPO-Lag & $562 \mathbin{\pm} 52$ & $0.3 \mathbin{\pm} 0.3$ \\
\textbf{CA-MAMBA} & $1574 \mathbin{\pm} 304$ & $0.5 \mathbin{\pm} 0.8$ \\
\quad no comm. & $\mathbf{1657} \mathbin{\pm} 162$ & $0.5 \mathbin{\pm} 0.9$ \\
\midrule
\multicolumn{3}{@{}l}{\emph{HalfCheetah $2{\times}3$}} \\
\addlinespace[1pt]
MACPO & $207 \mathbin{\pm} 26^{\ast}$ & $0.0 \mathbin{\pm} 0.0$ \\
MAPPO-Lag & $791 \mathbin{\pm} 66$ & $0.2 \mathbin{\pm} 0.3$ \\
\textbf{CA-MAMBA} & $\mathbf{2527} \mathbin{\pm} 229$ & $16.6 \mathbin{\pm} 2.0$ \\
\quad no comm. & $982 \mathbin{\pm} 1303$ & $64.4 \mathbin{\pm} 103.8^{\dagger}$ \\
\bottomrule
\end{tabular}
\caption{MAMuJoCo at the SafePO default limit $d{=}25$, every method at a matched budget of 1M interactions. \emph{No comm.} acts without messages ($\mathcal N_i=\{i\}$) but learns and imagines with them. Mean $\mathbin{\pm}$ sample standard deviation over three seeds ($\ast$: two). Bold: highest mean reward; $\dagger$: mean cost above $d$.}
\label{tab:mamujoco-main}
\end{table}

At the matched budget CA-MAMBA earns $2.8$ to $12$ times the reward of either
baseline in all three environments (Table~\ref{tab:mamujoco-main}).  At the 1M
checkpoint every method's mean cost is under $d=25$, apart from acting without
messages on HalfCheetah (Section~\ref{sec:nocomm}), so the comparison is
decided on reward; the baselines' near-zero cost reflects policies that have
not yet learned to move.  CA-MAMBA's cost exceeds the limit at some earlier
checkpoints, up to $83$ on Ant $2{\times}4$ at 800k
(Table~\ref{tab:mamujoco-schedule} in Appendix~\ref{app:mamujoco}).  With ten
times the interactions MAPPO-Lagrangian exceeds CA-MAMBA's 1M reward only on
Ant $4{\times}2$ ($1866$ against $1574$); MACPO trails on all three.

\subsection{Why Cost Limits Are Sometimes Missed}
\label{sec:dual-ablation}

Saving interactions does not by itself meet the cost limit, which RQ2 examines.
Only MAPPO-Lagrangian meets a zero-casualty limit within 5M interactions, on
\texttt{2m\_vs\_1z} at 2M.  At the
12 nonzero limits CA-MAMBA's mean cost meets 5 at 100k, as many as
MAPPO-Lagrangian at 1M and MACPO at 5M; MAPPO-Lagrangian meets 8 at 5M.  At
the tight MAMuJoCo limits ($d=0.2$, $1$, and $5$ on Ant $2{\times}4$, Ant
$4{\times}2$, and HalfCheetah) CA-MAMBA at $\alpha_\beta=10^{-5}$ exceeds the limit
in all three environments, while with ten times the interactions
MAPPO-Lagrangian meets all three and MACPO only HalfCheetah's, at exactly the limit
(Appendix~\ref{app:mamujoco}).  The model is partly responsible: on the two
configurations measured under the learned multiplier, its imagined cost falls
short of realized cost by about a third (Section~\ref{sec:theory}).  A multiplier that
reads measured cost targets the real constraint despite that bias, but ours
moves slowly: at $d=0$, $\beta$ reaches only $0.025$ to $0.54$, below parity,
after 100k interactions on five maps (Appendix~\ref{app:beta}).

\noindent \textbf{Multiplier rules on \texttt{8m}.}
To separate the multiplier from the model, we run \texttt{8m} at $d=0$, a limit
CA-MAMBA misses on every map, with six multiplier rules: an imagined input,
which feeds the multiplier the mean imagined cost return instead of $\bar C_k$,
as SafeDreamer does when it trains its policy in imagination
\cite{DBLP:conf/iclr/0008JXZ024}, there with a different step rule;
the measured-cost update of Eq.~\eqref{eq:dual-update}; PID control on measured
cost \cite{DBLP:conf/icml/StookeAA20}, which sets $\beta$ from the current
violation $\bar C_k-d$, its running sum, and its change, with gains $K_P$,
$K_I$, $K_D$; fixed $\beta=1$, which weights the
standardized advantages equally, and fixed $\beta=5$; and $\beta=0$, the
cost-blind learner.  A fixed price is the reward-penalty baseline: it optimizes
reward minus a fixed multiple of cost, applied to standardized advantages.  The
cost-blind learner tests the model: if pricing its cost advantages lowers cost,
those advantages are informative enough.  The fixed prices test the multiplier:
if one lowers cost well below the learned multiplier, the multiplier is too
slow.

\IfFileExists{generated/mechanism_8m.tex}{\begin{table}[t]
\centering
\footnotesize
\setlength{\tabcolsep}{4pt}
\begin{tabular}{@{}lccc@{}}
\toprule
 & \multicolumn{3}{c}{At 100k} \\
\cmidrule(lr){2-4}
Multiplier & Cost $\downarrow$ & Win rate $\uparrow$ & $\beta$ \\
\midrule
Imagined & $2.49 \mathbin{\pm} 0.43$ & $0.95 \mathbin{\pm} 0.02$ & $0.22 \mathbin{\pm} 0.01$ \\
Measured & $2.05 \mathbin{\pm} 0.42$ & $0.95 \mathbin{\pm} 0.02$ & $0.54 \mathbin{\pm} 0.02$ \\
PID & $8.00 \mathbin{\pm} 0.00$ & $0.00 \mathbin{\pm} 0.00$ & $8.28 \mathbin{\pm} 0.01$ \\
$\beta{=}5$ & $8.00 \mathbin{\pm} 0.00$ & $0.00 \mathbin{\pm} 0.00$ & $5.00 \mathbin{\pm} 0.00$ \\
$\beta{=}1$ & $1.75 \mathbin{\pm} 0.20$ & $0.97 \mathbin{\pm} 0.02$ & $1.00 \mathbin{\pm} 0.00$ \\
$\beta{=}0$ & $5.28 \mathbin{\pm} 0.95$ & $0.63 \mathbin{\pm} 0.18$ & $0.00 \mathbin{\pm} 0.00$ \\
\bottomrule
\end{tabular}
\caption{Multiplier rules on SMAC \texttt{8m} at $d=0$. Mean $\mathbin{\pm}$ sample standard deviation over three runs (seeds 1--3) of each run's episodes between 95k and 100k interactions; $\beta$ at 100k. \emph{Imagined} and \emph{Measured} update $\beta$ ($\alpha_\beta=10^{-5}$) from imagined or measured cost; \emph{PID} uses $K_P=1$, $K_I=10^{-5}$, $K_D=1$; the $\beta{=}5,1,0$ rows hold $\beta$ fixed, so $d$ does not enter them.}
\label{tab:mechanism-8m}
\end{table}
}{}

Table~\ref{tab:mechanism-8m} answers both.  The model's cost advantages work:
priced at parity, they cut cost from $5.28$ for the cost-blind learner to
$1.75$ and raise the win rate from $0.63$ to $0.97$.  The measured-cost
multiplier reaches $\beta=0.54$ and cost $2.05\pm0.42$, most of the reduction
parity gives, but no rule meets the limit.  SafeDreamer's imagined input sets a lower price:
its $\beta$ reaches only $0.22$, and its cost ends at $2.49\pm0.43$.  Too high a
price, set before the model has learned the task, fails differently: with gains
chosen for MAMuJoCo ($K_P=1$), PID's current-violation term $K_P(\bar C_k-d)$
is $8$ once episode cost reaches $8$, $\beta$ passes $8$ within $600$
interactions, and the agents learn to avoid combat until nearly every episode
times out at full cost.  A fixed $\beta=5$ fails the same way, so the
reward-penalty baseline works only when its weight is chosen well.

\noindent \textbf{The same test on HalfCheetah.}
To check the \texttt{8m} findings beyond discrete control, we repeat the test on HalfCheetah
$2{\times}3$ (Table~\ref{tab:hc-multiplier} in Appendix~\ref{app:mamujoco}).
At the tight limit $d=5$, the learned multiplier misses by a factor of $3.6$,
while a fixed $\beta=1$ ends all six of its runs within the limit (two sets: cost
$0.6\pm0.3$ and $2.4\pm2.1$) at the same reward ($1954$ and $1985$ against
$1974$).  At $d=25$, which the learned multiplier already meets, the fixed
price is more cautious than needed: cost $2.7$ and $1.2$ for $10\%$ and $25\%$
less reward.  A tenfold $\alpha_\beta$ also meets $d=5$, for $25\%$ less
reward; none of the three multiplier rules we ran on Ant meets the tight Ant
limits (Appendix~\ref{app:mamujoco}).  Without a price, cost ends at $474$ and
$400$, far above both limits; on both tasks the model's cost advantages lower
cost once priced.

\subsection{Acting Without Messages}
\label{sec:nocomm}

Finally, RQ3 asks what a policy optimized with messages loses when it acts
without them.  All our runs learn the world model and imagine with
$\mathcal N_i=\mathcal I$; in the no-communication runs, the agents nevertheless
act with $\mathcal N_i=\{i\}$ in every collected episode
(Section~\ref{sec:message}).  At 1M,
reward does not drop on Ant $4{\times}2$ ($1657$ versus $1574$,
Table~\ref{tab:mamujoco-main}), drops $19\%$ on Ant $2{\times}4$, within
run-to-run spread, and falls from $2527$ to $982$ on HalfCheetah, a gap present
from 500k on (Table~\ref{tab:mamujoco-nocomm} in Appendix~\ref{app:mamujoco}).
Messages thus appear to matter for reward on HalfCheetah, with no consistent effect on cost.

\section{Limitations}
\label{sec:limitations}
Five limitations qualify these results.  First, the evidence supports lower cost
(SMAC) or more reward within the limit (MAMuJoCo) at a matched budget, not a
guarantee that no episode exceeds
the limit, and not safer learning: on Ant $2{\times}4$, CA-MAMBA's cumulative
cost above the limit over the first 1M interactions exceeds MACPO's
(Appendix~\ref{app:remaining-evidence}).  Deployments that cannot tolerate a
single violation would also need a shield (Section~\ref{sec:related}).

Second, the model-error evidence is average-case, and on the sparse SMAC
casualty cost we have not measured how often the cost head predicts the steps
on which a unit is actually lost.  Third, both benchmarks charge every agent
the same team cost, so constraints on costs charged to individual agents remain
untested.  Fourth, the communication experiment removes messages only when
acting; training a self-only model would test whether a domain needs them and
how they change the per-agent errors of Proposition~\ref{prop:factorized}.
Fifth, the multiplier experiments cover one task per benchmark, at one limit on
\texttt{8m} and two on HalfCheetah, with three runs per setting.

\section{Conclusion}
Predicting cost in a multi-agent world model moves most constrained policy
updates into imagination: the stronger model-free baseline needs $10$ to $20$
times CA-MAMBA's interactions on the median SMAC configuration, and CA-MAMBA
earns $2.8$ to $12$ times the baselines' MAMuJoCo reward at equal budget.  In
controlled tests on one task per benchmark, pricing the model's cost advantages
at parity more than halves a cost-blind learner's cost and meets a tight
continuous-control limit the learned multiplier misses, suggesting the model's
cost estimates are not the bottleneck.

\bibliographystyle{ACM-Reference-Format}
\bibliography{references_aaai}

\appendix
\section{Training Procedure}
\label{app:algorithm}

This section expands Algorithm~\ref{alg:ca-mamba}.  Let $\theta$ denote
world-model parameters, $\phi$ actor parameters, $\psi$ the parameters of the
critic network, whose two heads give $V^R_\psi$ and $V^C_\psi$, $\beta\geq0$ the
shared team multiplier, $H_{\mathrm{im}}=14$ the number of costed imagined
steps, $d$ the team-cost limit, and $\mathcal D$ the replay buffer.  $W=4$
workers collect episodes asynchronously; each acts with the parameters it
received when its previous episode ended.

\begin{enumerate}
    \item Receive the next finished episode from any worker.  Store local
    observations, joint actions, shared rewards, local costs, termination
    flags, and agent-availability masks in $\mathcal D$.  Set $\bar C_k$ to
    this episode's undiscounted team cost.
    \item Sample length-$T$ sequences from $\mathcal D$.  Infer posterior
    recurrent states and minimize the observation, reward, local-cost, and
    KL terms in Eq.~\eqref{eq:world-model-loss} with respect to $\theta$.
    \item Repeat $E=4$ times:
    \begin{enumerate}
    \item Freeze $\theta$.  Sample posterior start states from replay and roll
    out $15$ imagined latent states.  Store predicted rewards $\hat r_u$ and
    team costs $\hat c_u=\tfrac1N\sum_{i=1}^{N}\hat c_u^i$ for
    $u\in\{0,\ldots,H_{\mathrm{im}}-1\}$, action log-probabilities, and
    termination predictions.  The mean matches the realized aggregation of
    Section~\ref{sec:method}, so prediction and target share granularity.
    \item Construct per-agent reward and cost $\lambda$-returns that bootstrap
    from the critic at the last imagined state.  Fit $\psi$ by minimizing the
    squared target errors of both heads.  Standardize each advantage over the
    batch, form $A_u^{\beta,i}=A_u^{R,i}-\beta A_u^{C,i}$, and \emph{maximize}
    the per-agent clipped PPO surrogate with respect to $\phi$.
    \item Update the shared multiplier from the measured episode cost:
    $\beta\gets[\beta+\alpha_\beta(\bar C_k-d)]_+$.
    \end{enumerate}
    \item Send the updated $\theta,\phi$ to the worker that finished, and return
    to Step 1.
\end{enumerate}

Over one learner step the $E$ multiplier updates compose to
$\beta_{k+1}=[\beta_k+E\alpha_\beta(\bar C_k-d)]_+$, the dual update of
Section~\ref{sec:method}.
The multiplier update is projected descent in the dual variable for the
reward-maximization Lagrangian, not actor maximization.  Because the episode was
collected before the update, $\bar C_k$ is a statistic of the policy that
generated it, $\pi_{\phi_{k-\ell_k}}$ with lag $\ell_k\geq0$, rather than of the
policy that follows.

\section{Execution Under the Full Neighborhood}
\label{app:execution}

Figure~\ref{fig:execution} draws the execution path the main text describes.
Every agent runs its own copy of the world model and a parameter-tied actor, and
the only inter-agent channel is the attention over the neighborhood
$\mathcal N_i$.  Under $\mathcal N_i=\mathcal I$ the agents exchange their
previous actions and stochastic latents before acting; setting
$\mathcal N_i=\{i\}$ removes that exchange and changes nothing else.  The
communication ablation makes this substitution at action selection in every
collected episode, while world-model learning and imagination keep
$\mathcal N_i=\mathcal I$.  No privileged global state enters either setting.

\begin{figure}[ht]
  \centering
  \includegraphics[width=\columnwidth]{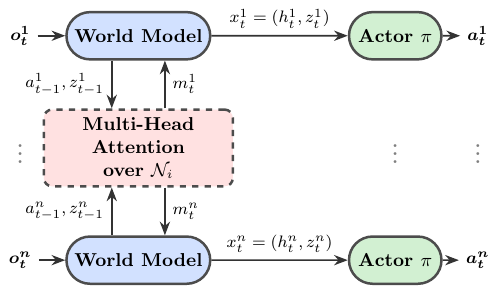}
  \Description{Execution diagram: each agent's world model exchanges previous
  actions and latents through attention and feeds its latent to a shared
  actor.}
  \caption{Execution: every agent runs its own world model and a parameter-tied
  actor.  Each world model sends its previous action and stochastic latent
  $a^i_{t-1},z^i_{t-1}$ to an attention layer over the neighborhood
  $\mathcal N_i$ and receives the message $m^i_t$, which drives its recurrent
  update; the actor then reads the latent $x^i_t=(h^i_t,z^i_t)$.  Under
  $\mathcal N_i=\mathcal I$ the attention is all-to-all; setting
  $\mathcal N_i=\{i\}$ removes the exchange and changes nothing else.  No
  privileged global state is used.}
  \label{fig:execution}
\end{figure}

\section{Verified Implementation Settings}
\label{app:settings}

The values in Tables~\ref{tab:reported-settings} and~\ref{tab:arch-settings}
are read directly from our training code (the configuration classes, the
world-model and actor--critic losses, and the Lagrangian update), so they
describe the runs that produced the reported numbers rather than an intended
configuration.  CA-MAMBA runs on SMAC use the casualty cost
\texttt{dead\_allies\_incremental} (the collision study uses the dense
\texttt{collision} cost), MAMuJoCo runs use the velocity cost of Appendix~\ref{app:mamujoco}, and every
configuration has three independent runs, labeled
seeds $1$ to $3$ (the label does not seed the random number generators; see
below).

\begin{table}[ht]
\centering
\begin{tabular}{lc}
\toprule
Quantity & Value \\
\midrule
World-model learning rate & $2\times10^{-4}$ \\
Actor learning rate (Adam, wd $10^{-5}$) & $5\times10^{-4}$ \\
Critic learning rate & $5\times10^{-4}$ \\
Multiplier step $\alpha_\beta$ (applied $E=4$ times per step) & $10^{-5}$ \\
Initial multiplier $\beta_0$ & $10^{-4}$ \\
PPO clip $\varepsilon$ & $0.2$ \\
Entropy coefficient (anneal $0.99998$) & $0.001$ \\
Model/agent batch size & $40$ \\
Sequence length $T$ & $20$ \\
Replay capacity / min.\ buffer & $250{,}000$ / $500$ \\
Imagined states / costed steps $H_{\mathrm{im}}$ & $15$ / $14$ \\
Discount $\gamma$ & $0.99$ \\
GAE parameter $\lambda$ & $0.95$ \\
Gradient clip (model / policy) & $100$ / $100$ \\
Advantage standardization (per batch) & on \\
KL weight $\omega_{\mathrm{KL}}$ (on the mean over categoricals) & $1$ \\
KL balance (prior / posterior) & $0.8$ / $0.2$ \\
\bottomrule
\end{tabular}
\caption{Optimization and training settings, read from the training code.  Every
term of Eq.~(1), including the divergence and the three terms inherited from MAMBA (the continuation and action-availability heads and, for discrete actions, the action classifier),
enters the world-model loss with weight $1$; KL balancing splits the divergence
$0.8$ toward the direction that updates the prior and $0.2$ toward the
posterior.}
\label{tab:reported-settings}
\end{table}

\paragraph{Update schedule.}
The learner updates once per collected episode.  Each update performs $60$
world-model gradient steps on minibatches of $40$ length-$20$ sequences, then
$4$ policy-update rounds; each round generates a fresh imagined rollout and
runs $5$ PPO epochs over minibatches of $2{,}000$ imagined transitions.  The
imagined reward and cost returns are $\lambda$-returns at $\gamma=0.99$; the
shared multiplier takes one step after each of the $4$ rounds, all
from the same measured undiscounted episode cost.  Data collection uses four
parallel workers.  The divergence in Eq.~(1) is averaged over the $32$
categorical variables of $z_t^i$, so relative to a divergence summed over the
full latent the effective weight is $1/32$.

\begin{table}[ht]
\centering
\small
\begin{tabular}{ll}
\toprule
Component & Specification \\
\midrule
Deterministic state $h$ & $256$ (GRU) \\
Stochastic state $z$ & $32\!\times\!32$ categ.\ (DreamerV2) \\
Feature $x=(h,z)$ & $1280$ \\
Observation embedding & $256$ \\
Reward / cost heads & 2-layer MLP, width $256$ \\
Continuation head & 2-layer MLP, width $256$ \\
Actor (per agent, tied) & 2-layer MLP, width $256$ \\
Critic (augmented) & 1 attn.\ layer, 1-layer heads \\
Transition attention & 8 heads, 3 layers, drop $0.1$ \\
Positional encoding & sinusoidal in agent index, $\leq30$ agents \\
\bottomrule
\end{tabular}
\caption{Network architecture, read from the training code.  The transition
attention is the only inter-agent communication path; the actor is a per-agent
MLP over the communication-conditioned latent.}
\label{tab:arch-settings}
\end{table}

\paragraph{Continuous-control settings.}
The MAMuJoCo runs share the architecture above but override the optimizer
settings on the command line, since a Gaussian policy over a squashed
continuous action is far less tolerant of the SMAC step sizes: world-model rate
$10^{-4}$, actor rate $3\times10^{-5}$, critic rate $10^{-4}$, gradient clip
$10$ (model) and $5$ (policy), with $\mathrm{PPO}$ epochs, policy-update rounds,
imagination horizon, and $\alpha_\beta$ unchanged.  The policy is a Normal
distribution with a $\tanh$ squash; its log-probability is evaluated at the
pre-squash sample rather than the squashed action, without which the PPO
importance ratio is wrong and the policy collapses through entropy death
within roughly $55$k interactions.  Communication mode is selected by a
\texttt{--comm\_mode} flag (\texttt{full} or \texttt{none}).

\paragraph{Compute.}
Every run used a single NVIDIA RTX 3090 on a Slurm cluster, one GPU per job,
with four parallel data-collection workers.  Per-job timing was not retained for
every run, so we report it for three runs of each timed configuration.  On
HalfCheetah $2{\times}3$ at $d=25$, 1M interactions take about $4.5$ hours ($4.9\pm0.5$
hours to 1.075M, three runs).  The learner updates once per finished
episode, so wall-clock time scales with the number of episodes rather than of
interactions: \texttt{8m} episodes are about $30$ times shorter than
HalfCheetah's $1000$ steps, and 100k interactions on \texttt{8m} take
$16.3\pm0.4$ hours (three runs).  In wall-clock time the model-free baselines
are faster per interaction: on \texttt{8m}, MACPO takes $2.5$ hours to 1M and
$12.0$ to 5M, and MAPPO-Lagrangian $6.3\pm1.3$ and $33.4\pm7.4$ hours; on
HalfCheetah, 1M takes $1.1$ and $0.7$ hours.  These times exclude queueing.
CA-MAMBA jobs request $64$ GB of memory; the cluster did not record peak
usage.

\paragraph{Determinism.}
The training entry point accepts a \texttt{--seed} argument and all reported
results use seeds $\{1,2,3\}$, but that value is used to label runs and to seed
an environment curriculum that no reported experiment uses.  It is \emph{not} propagated to Python's
\texttt{random}, NumPy, PyTorch, CUDA, or the SMAC and MAMuJoCo environment
processes, and no deterministic-algorithm flags are set.  Consequently the
three runs per configuration are genuinely independent samples, which is what
the reported sample standard deviations describe, but an individual run is not
bit-reproducible from its seed.  Fixing this requires seeding those five
sources at process start; it is a defect of the implementation, not of the
protocol, and we state it rather than imply stronger reproducibility than the
code supports.  Library versions are likewise unpinned in
\texttt{requirements.txt}.

The hyperparameters of MACPO on SMAC, a discrete-action adaptation, are listed in
Appendix~\ref{app:baseline-hyperparams}.  Its discrete-SMAC adaptation is not
part of the CA-MAMBA code, so its commit and implementation details accompany
that code rather than this one.

\section{MAMuJoCo Details}
\label{app:mamujoco}

This section documents the MAMuJoCo runs behind Sections~\ref{sec:mamujoco} and~\ref{sec:dual-ablation}.

\paragraph{Environments and provenance.}
The three velocity-constrained tasks are
\texttt{Safety2x4AntVelocity-v0} (2 agents, 4 actuators each, 29-dimensional
local observation, velocity threshold $2.522$),
\texttt{Safety4x2AntVelocity-v0} (4 agents, 2 actuators, 31 dimensions,
threshold $2.418$), and
\texttt{Safety2x3HalfCheetahVelocity-v0} (2 agents, 3 actuators, 19 dimensions,
threshold $3.227$).  Continuous actions use a Gaussian policy with a
$\tanh$ squash; the log-probability is evaluated at the pre-squash sample, which
is required for the PPO importance ratio to be correct.  The schedule, step-size
and no-communication tables of this section are generated by
\path{scripts/generate_mamujoco_tables.py} from the aggregated run record
\path{data/mamujoco_experiment8_phase5_full.csv}, reading each run as the mean over its episodes in the last 5k
interactions before a checkpoint; seeds 2 and 3 of HalfCheetah at $d=5$ and $\alpha_\beta=10^{-5}$ are read
from their full training histories by the same rule.  Both
baselines are our own runs of the SafePO implementations
\cite{DBLP:conf/nips/JiZZP0SGZD023} at 10M interactions and three seeds (two for MACPO on HalfCheetah $2{\times}3$ at $d=25$) under
the settings that library ships, so the budget is the only deliberate asymmetry
against CA-MAMBA's 1M.

\paragraph{Interaction budget.}
Table~\ref{tab:mamujoco-schedule} traces CA-MAMBA at $d=25$ across checkpoints
and gives both baselines at the matched 1M budget and at 10M, the two checkpoints
of the same runs.  Mean cost is $0.0$ at 100k interactions on both Ant partitions
and grows as the policy converts constraint slack into reward, which is the
expected behavior of a Lagrangian method below its limit; on Ant $2{\times}4$
it overshoots the limit between 700k and 900k before returning under it.
Reward at 100k already exceeds the 10M-interaction MACPO run on Ant $2{\times}4$
($836$ versus $798$) but not on Ant $4{\times}2$ or HalfCheetah, so the
cross-environment statement in the main text is made at 1M rather than 100k.
The 1M baseline rows are what make the main text's comparison matched: at that
budget neither model-free learner has left the low-velocity regime, so their
near-zero cost reflects an undeveloped gait rather than constraint satisfaction
that CA-MAMBA failed to achieve.

\begin{table}[t]
\centering
\small
\setlength{\tabcolsep}{3pt}
\begin{tabular}{@{}lcccc@{}}
\toprule
Environment & Steps & $n$ & Reward $\uparrow$ & Cost $\downarrow$ \\
\midrule
Ant $2{\times}4$ & 100k & 3 & $836 \mathbin{\pm} 93$ & $0.0 \mathbin{\pm} 0.0$ \\
 & 500k & 3 & $1086 \mathbin{\pm} 412$ & $6.8 \mathbin{\pm} 11.5$ \\
 & 700k & 3 & $1614 \mathbin{\pm} 553$ & $53.1 \mathbin{\pm} 47.1$ \\
 & 800k & 3 & $1794 \mathbin{\pm} 672$ & $82.9 \mathbin{\pm} 72.0$ \\
 & 900k & 3 & $1838 \mathbin{\pm} 583$ & $32.7 \mathbin{\pm} 33.7$ \\
 & 1M & 3 & $1836 \mathbin{\pm} 678$ & $17.4 \mathbin{\pm} 15.1$ \\
\textit{MACPO} & 1M & 3 & $268 \mathbin{\pm} 70$ & $0.2 \mathbin{\pm} 0.1$ \\
\textit{MACPO} & 10M & 3 & $798 \mathbin{\pm} 109$ & $10.5 \mathbin{\pm} 1.7$ \\
\textit{MAPPO-Lag} & 1M & 3 & $439 \mathbin{\pm} 46$ & $0.1 \mathbin{\pm} 0.2$ \\
\textit{MAPPO-Lag} & 10M & 3 & $1580 \mathbin{\pm} 185$ & $1.4 \mathbin{\pm} 0.3$ \\
\midrule
Ant $4{\times}2$ & 100k & 3 & $902 \mathbin{\pm} 49$ & $0.0 \mathbin{\pm} 0.0$ \\
 & 500k & 3 & $997 \mathbin{\pm} 83$ & $0.0 \mathbin{\pm} 0.0$ \\
 & 700k & 3 & $1193 \mathbin{\pm} 87$ & $0.0 \mathbin{\pm} 0.0$ \\
 & 800k & 3 & $1266 \mathbin{\pm} 102$ & $0.0 \mathbin{\pm} 0.0$ \\
 & 900k & 3 & $1407 \mathbin{\pm} 172$ & $0.0 \mathbin{\pm} 0.0$ \\
 & 1M & 3 & $1574 \mathbin{\pm} 304$ & $0.5 \mathbin{\pm} 0.8$ \\
\textit{MACPO} & 1M & 3 & $374 \mathbin{\pm} 96$ & $0.3 \mathbin{\pm} 0.2$ \\
\textit{MACPO} & 10M & 3 & $1287 \mathbin{\pm} 148$ & $23.6 \mathbin{\pm} 6.0$ \\
\textit{MAPPO-Lag} & 1M & 3 & $562 \mathbin{\pm} 52$ & $0.3 \mathbin{\pm} 0.3$ \\
\textit{MAPPO-Lag} & 10M & 3 & $1866 \mathbin{\pm} 42$ & $0.1 \mathbin{\pm} 0.2$ \\
\midrule
HalfCheetah $2{\times}3$ & 100k & 3 & $1136 \mathbin{\pm} 565$ & $30.1 \mathbin{\pm} 30.2$ \\
 & 500k & 3 & $2000 \mathbin{\pm} 121$ & $5.4 \mathbin{\pm} 6.9$ \\
 & 700k & 3 & $2588 \mathbin{\pm} 181$ & $22.3 \mathbin{\pm} 22.2$ \\
 & 800k & 3 & $2545 \mathbin{\pm} 212$ & $37.6 \mathbin{\pm} 9.4$ \\
 & 900k & 3 & $2528 \mathbin{\pm} 163$ & $3.9 \mathbin{\pm} 3.7$ \\
 & 1M & 3 & $2527 \mathbin{\pm} 229$ & $16.6 \mathbin{\pm} 2.0$ \\
\textit{MACPO} & 1M & 2 & $207 \mathbin{\pm} 26$ & $0.0 \mathbin{\pm} 0.0$ \\
\textit{MACPO} & 10M & 2 & $1374 \mathbin{\pm} 86$ & $22.0 \mathbin{\pm} 4.2$ \\
\textit{MAPPO-Lag} & 1M & 3 & $791 \mathbin{\pm} 66$ & $0.2 \mathbin{\pm} 0.3$ \\
\textit{MAPPO-Lag} & 10M & 3 & $1141 \mathbin{\pm} 36$ & $0.0 \mathbin{\pm} 0.0$ \\
\bottomrule
\end{tabular}
\caption{CA-MAMBA at $d{=}25$ against interaction budget, with both model-free baselines at the matched 1M budget and at 10M. The baseline rows are the same runs read at two checkpoints. Cost is not monotone in the budget: it exceeds $d$ on Ant $2{\times}4$ at 700k to 900k and on HalfCheetah $2{\times}3$ at 100k and 800k.}
\label{tab:mamujoco-schedule}
\end{table}

\paragraph{Tight cost limits.}
Table~\ref{tab:mamujoco-pid-tight} repeats the comparison at much tighter
limits, $d=0.2$, $1.0$, and $5.0$ for Ant $2{\times}4$, Ant $4{\times}2$, and
HalfCheetah $2{\times}3$, near the bounds $0.2$, $0.2$, and $5$ (five times looser on Ant $4{\times}2$) that
\citet{DBLP:journals/ai/GuKCDYKY23} set for their obstacle costs.  No CA-MAMBA
multiplier rule meets either Ant limit at 1M.  The model-free baselines meet
these limits at 1M only at very low reward, MACPO on all three tasks and
MAPPO-Lagrangian on two; with ten times the interactions MAPPO-Lagrangian meets
all three, at reward $1549$ and $1723$ on the Ant partitions, and MACPO only
HalfCheetah's.  HalfCheetah is where CA-MAMBA meets the tight limit: at
$\alpha_\beta{=}10^{-4}$ its mean cost is $4.6$ at reward $1481$ after 1M
interactions, against $1317$ for MACPO and $1138$ for MAPPO-Lagrangian after
10M.

\begin{table*}[t]
\centering
\small
\setlength{\tabcolsep}{4pt}
\begin{tabular}{@{}lccccccc@{}}
\toprule
 &  & \multicolumn{2}{c}{Ant $2{\times}4$, $d=0.2$} & \multicolumn{2}{c}{Ant $4{\times}2$, $d=1.0$} & \multicolumn{2}{c}{HalfCheetah $2{\times}3$, $d=5.0$} \\
\cmidrule(lr){3-4}\cmidrule(lr){5-6}\cmidrule(lr){7-8}
Method & Steps & Reward $\uparrow$ & Cost $\downarrow$ & Reward $\uparrow$ & Cost $\downarrow$ & Reward $\uparrow$ & Cost $\downarrow$ \\
\midrule
CA-MAMBA, $\alpha_\beta{=}10^{-5}$ & 1M & $2268 \mathbin{\pm} 134$ & $24.27 \mathbin{\pm} 5.62^{\dagger}$ & $1241 \mathbin{\pm} 566$ & $2.53 \mathbin{\pm} 3.11^{\dagger}$ & $1973 \mathbin{\pm} 506$ & $18.17 \mathbin{\pm} 15.95^{\dagger}$ \\
CA-MAMBA, $\alpha_\beta{=}10^{-4}$ & 1M & $1618 \mathbin{\pm} 641$ & $4.07 \mathbin{\pm} 5.84^{\dagger}$ & $1559 \mathbin{\pm} 359$ & $4.20 \mathbin{\pm} 7.27^{\dagger}$ & $1481 \mathbin{\pm} 98$ & $4.56 \mathbin{\pm} 1.99$ \\
CA-MAMBA, PID & 1M & $1289 \mathbin{\pm} 421$ & $0.91 \mathbin{\pm} 0.54^{\dagger}$ & $1417 \mathbin{\pm} 196$ & $2.40 \mathbin{\pm} 1.60^{\dagger}$ & $2271 \mathbin{\pm} 282$ & $5.22 \mathbin{\pm} 2.87^{\dagger}$ \\
MACPO & 1M & $299 \mathbin{\pm} 42$ & $0.19 \mathbin{\pm} 0.17$ & $315 \mathbin{\pm} 47$ & $0.60 \mathbin{\pm} 0.66$ & $181 \mathbin{\pm} 43$ & $0.00 \mathbin{\pm} 0.00$ \\
MAPPO-Lag & 1M & $520 \mathbin{\pm} 37$ & $0.29 \mathbin{\pm} 0.37^{\dagger}$ & $552 \mathbin{\pm} 58$ & $0.92 \mathbin{\pm} 0.07$ & $775 \mathbin{\pm} 60$ & $0.23 \mathbin{\pm} 0.32$ \\
\midrule
MACPO & 10M & $859 \mathbin{\pm} 33$ & $1.21 \mathbin{\pm} 0.69^{\dagger}$ & $1098 \mathbin{\pm} 97$ & $3.36 \mathbin{\pm} 1.02^{\dagger}$ & $1317 \mathbin{\pm} 116$ & $5.00 \mathbin{\pm} 1.74$ \\
MAPPO-Lag & 10M & $1549 \mathbin{\pm} 43$ & $0.13 \mathbin{\pm} 0.15$ & $1723 \mathbin{\pm} 46$ & $0.43 \mathbin{\pm} 0.59$ & $1138 \mathbin{\pm} 41$ & $0.10 \mathbin{\pm} 0.00$ \\
\bottomrule
\end{tabular}
\caption{Reward and cost on the multi-agent MuJoCo velocity tasks at tight cost limits: $d=0.2$ on Ant $2{\times}4$, $d=1.0$ on Ant $4{\times}2$, $d=5.0$ on HalfCheetah $2{\times}3$. CA-MAMBA updates its multiplier $\beta$ from the measured episode cost with step size $\alpha_\beta$, or sets it with a PID controller on the same signal ($K_P=1$, $K_I=10^{-5}$, $K_D=1$), and is read after 1M interactions; MACPO and MAPPO-Lag are read after 1M and 10M. Each run's reward and cost average its episodes (CA-MAMBA) or logged interval means (MACPO, MAPPO-Lag) in the last 5k interactions before the budget; entries are mean $\mathbin{\pm}$ sample standard deviation over three seeds per cell. $\dagger$: mean cost above $d$. The PID multiplier reached its clip value $\beta=100$ within the first 1M interactions in 1 of 3 runs on HalfCheetah $2{\times}3$.}
\label{tab:mamujoco-pid-tight}
\end{table*}

\paragraph{Multiplier learning rate.}
Table~\ref{tab:mamujoco-laglr} gives the full $\alpha_\beta$ sweep.  The
asymmetry is the useful part: raising the rate helps only where a persistent
positive residual exists.  Under the tight limits it removes $83\%$ of the cost
on Ant $2{\times}4$ for $29\%$ of reward and $75\%$ on HalfCheetah for $25\%$,
making the latter feasible at 1M, though not at 800k or 900k; on Ant
$4{\times}2$ cost does not fall ($2.5$ to $4.2$, within run-to-run spread).
HalfCheetah seeds
2 and 3 at the slow rate were run twice; we use the documented reruns, and the
earlier pair would give a $10\%$ reward drop instead of $25\%$.  A PID controller on the
measured cost ($K_P=1$, $K_I=10^{-5}$, $K_D=1$) reaches, at 1M, cost $0.91$ and
reward $1289$ on Ant $2{\times}4$, $2.40$ and $1417$ on Ant $4{\times}2$, and
$5.22$ and $2271$ on HalfCheetah: the lowest cost of CA-MAMBA's three rules on
both Ant partitions and the highest reward on HalfCheetah, still above every
limit (Table~\ref{tab:mamujoco-pid-tight}).  Under $d=25$, where the constraint is already met, raising $\alpha_\beta$ tenfold reduces
HalfCheetah reward by $55\%$ in exchange for driving an already-satisfied cost
from $16.6$ to $0.0$.  A faster dual is a remedy for violation, not a free
improvement.

\begin{table}[t]
\centering
\small
\setlength{\tabcolsep}{2.5pt}
\begin{tabular}{@{}lccccc@{}}
\toprule
Environment & $d$ & $\alpha_\beta$ & Reward $\uparrow$ & Cost $\downarrow$ & $\bar{c}\leq d$ \\
\midrule
Ant $2{\times}4$ & 0.2 & $10^{-5}$ & $2268 \mathbin{\pm} 134$ & $24.3 \mathbin{\pm} 5.6$ & $\times$ \\
 &  & $10^{-4}$ & $1618 \mathbin{\pm} 641$ & $4.1 \mathbin{\pm} 5.8$ & $\times$ \\
\cmidrule(l){2-6}
 & 25 & $10^{-5}$ & $1836 \mathbin{\pm} 678$ & $17.4 \mathbin{\pm} 15.1$ & \checkmark \\
 &  & $10^{-4}$ & $1552 \mathbin{\pm} 475$ & $22.6 \mathbin{\pm} 18.3$ & \checkmark \\
\midrule
Ant $4{\times}2$ & 1.0 & $10^{-5}$ & $1241 \mathbin{\pm} 566$ & $2.5 \mathbin{\pm} 3.1$ & $\times$ \\
 &  & $10^{-4}$ & $1559 \mathbin{\pm} 359$ & $4.2 \mathbin{\pm} 7.3$ & $\times$ \\
\cmidrule(l){2-6}
 & 25 & $10^{-5}$ & $1574 \mathbin{\pm} 304$ & $0.5 \mathbin{\pm} 0.8$ & \checkmark \\
 &  & $10^{-4}$ & $1504 \mathbin{\pm} 520$ & $2.9 \mathbin{\pm} 4.8$ & \checkmark \\
\midrule
HalfCheetah $2{\times}3$ & 5.0 & $10^{-5}$ & $1974 \mathbin{\pm} 506$ & $18.2 \mathbin{\pm} 15.9$ & $\times$ \\
 &  & $10^{-4}$ & $1481 \mathbin{\pm} 98$ & $4.6 \mathbin{\pm} 2.0$ & \checkmark \\
\cmidrule(l){2-6}
 & 25 & $10^{-5}$ & $2527 \mathbin{\pm} 229$ & $16.6 \mathbin{\pm} 2.0$ & \checkmark \\
 &  & $10^{-4}$ & $1149 \mathbin{\pm} 313$ & $0.0 \mathbin{\pm} 0.0$ & \checkmark \\
\bottomrule
\end{tabular}
\caption{Multiplier learning rate $\alpha_\beta$ at 1M interactions, at both cost limits per environment.}
\label{tab:mamujoco-laglr}
\end{table}

\paragraph{Fixed price and parity start.}
Table~\ref{tab:hc-multiplier} gives the HalfCheetah comparison of the main text's multiplier section at $d=5$ and $d=25$.  The parity-start runs and
fixed-price set (A) were trained on the current code and logged locally; the
imagined-input, PID, fixed-price (B), and $\beta=0$ rows are a second October
2026 set on the same code; \emph{Learned} is the June--July record of the main
MAMuJoCo table.  Learned runs on the current code at $d=25$ give reward
$2511\pm377$ and cost $10.2\pm4.4$ (three runs), within spread in reward and lower in cost.  The PID row
here is a separate set of three runs from that of Table~\ref{tab:mamujoco-pid-tight}, with the same gains;
both miss $d=5$.  Every
endpoint uses the reading rule of this section.

\begin{table}[t]
\centering
\footnotesize
\setlength{\tabcolsep}{3pt}
\begin{tabular}{@{}lcccc@{}}
\toprule
 & \multicolumn{2}{c}{$d=5$} & \multicolumn{2}{c}{$d=25$} \\
\cmidrule(lr){2-3}\cmidrule(lr){4-5}
Multiplier & Reward $\uparrow$ & Cost $\downarrow$ & Reward $\uparrow$ & Cost $\downarrow$ \\
\midrule
Learned & $1974 \mathbin{\pm} 506$ & $18.2^{\dagger} \mathbin{\pm} 15.9$ (1/3) & $2527 \mathbin{\pm} 229$ & $16.6 \mathbin{\pm} 2.0$ \\
Parity start & $2266 \mathbin{\pm} 480$ & $4.0 \mathbin{\pm} 6.1$ (2/3) & $2605 \mathbin{\pm} 235$ & $19.1 \mathbin{\pm} 25.5$ (2/3) \\
Imagined & $1971 \mathbin{\pm} 310$ & $29.1^{\dagger} \mathbin{\pm} 48.0$ (2/3) & $1913 \mathbin{\pm} 395$ & $87.0^{\dagger} \mathbin{\pm} 75.0$ (1/3) \\
PID & $2077 \mathbin{\pm} 45$ & $6.1^{\dagger} \mathbin{\pm} 4.4$ (1/3) & $1555 \mathbin{\pm} 22$ & $25.3^{\dagger} \mathbin{\pm} 9.1$ (2/3) \\
\midrule
$\beta{=}1$ (A) & $1954 \mathbin{\pm} 298$ & $0.6 \mathbin{\pm} 0.3$ (3/3) & $2282 \mathbin{\pm} 219$ & $2.7 \mathbin{\pm} 1.4$ (3/3) \\
$\beta{=}1$ (B) & $1985 \mathbin{\pm} 560$ & $2.4 \mathbin{\pm} 2.1$ (3/3) & $1905 \mathbin{\pm} 713$ & $1.2 \mathbin{\pm} 2.1$ (3/3) \\
$\beta{=}0$ & $2683 \mathbin{\pm} 1402$ & $473.7^{\dagger} \mathbin{\pm} 318.4$ (0/3) & $2550 \mathbin{\pm} 1623$ & $399.5^{\dagger} \mathbin{\pm} 381.3$ (0/3) \\
\bottomrule
\end{tabular}
\caption{Multiplier rules on HalfCheetah $2{\times}3$ after 1M interactions.  Each run's reward and cost average its episodes in the last 5k interactions; cells give mean $\pm$ sample standard deviation over three runs and, in parentheses, how many runs end within $d$ (not available for \emph{Learned} at $d=25$).  \emph{Learned} updates $\beta$ from the measured episode cost ($\alpha_\beta=10^{-5}$, $\beta_0=10^{-4}$; the June--July 2026 runs of the main MAMuJoCo table); \emph{Parity start} starts it at $\beta_0=1$; \emph{Imagined} feeds it the imagined cost return instead; \emph{PID} sets $\beta$ by PID control on the measured cost ($K_P=1$, $K_I=10^{-5}$, $K_D=1$).  The last three rows hold $\beta$ fixed; $\beta=1$ weights the standardized cost and reward advantages equally, and (A) and (B) are two independent sets of runs with the same settings.  With $\beta$ fixed, $d$ does not enter training, so the two columns of a fixed row differ only by run-to-run variation.  $\dagger$: mean cost above $d$.}
\label{tab:hc-multiplier}
\end{table}

\paragraph{Communication ablation.}
The no-communication rows of Table~\ref{tab:mamujoco-main} come from runs with
\texttt{--comm\_mode none}.  In the collection worker this sets the attention
mask to the boolean complement of the identity, so each agent attends only to
itself when selecting an environment action.  The learner path does not receive
a mask, so world-model learning and imagined rollouts remain all-to-all.  The
ablation therefore measures whether a policy optimized with communication still
works when it acts without the channel in every collected episode; it is not a
test of training the world model without communication, which we did not run.
Table~\ref{tab:mamujoco-nocomm} pairs the two settings at every checkpoint from
500k and at both limits.  We report only the MAMuJoCo environments for this
ablation.

\begin{table*}[t]
\centering
\small
\begin{tabular}{@{}llccccc@{}}
\toprule
Environment & $d$ & Steps & Reward, messages & Reward, none & Cost, messages & Cost, none \\
\midrule
Ant $2{\times}4$ & 25 & 500k & $1086 \mathbin{\pm} 412$ & $1083 \mathbin{\pm} 115$ & $6.8 \mathbin{\pm} 11.5$ & $0.0 \mathbin{\pm} 0.0$ \\
 &  & 700k & $1614 \mathbin{\pm} 553$ & $1216 \mathbin{\pm} 244$ & $53.1 \mathbin{\pm} 47.1$ & $1.0 \mathbin{\pm} 1.1$ \\
 &  & 800k & $1794 \mathbin{\pm} 672$ & $1221 \mathbin{\pm} 60$ & $82.9 \mathbin{\pm} 72.0$ & $1.3 \mathbin{\pm} 1.3$ \\
 &  & 900k & $1838 \mathbin{\pm} 583$ & $1506 \mathbin{\pm} 346$ & $32.7 \mathbin{\pm} 33.7$ & $4.7 \mathbin{\pm} 7.6$ \\
 &  & 1M & $1836 \mathbin{\pm} 678$ & $1491 \mathbin{\pm} 80$ & $17.4 \mathbin{\pm} 15.1$ & $19.4 \mathbin{\pm} 30.3$ \\
\midrule
 & 0.2 & 500k & $1410 \mathbin{\pm} 395$ & $816 \mathbin{\pm} 181$ & $2.0 \mathbin{\pm} 1.8$ & $8.6 \mathbin{\pm} 14.8$ \\
 &  & 700k & $1566 \mathbin{\pm} 857$ & $827 \mathbin{\pm} 252$ & $18.0 \mathbin{\pm} 11.8$ & $3.9 \mathbin{\pm} 6.8$ \\
 &  & 800k & $1627 \mathbin{\pm} 159$ & $1102 \mathbin{\pm} 223$ & $43.5 \mathbin{\pm} 8.2$ & $2.6 \mathbin{\pm} 4.5$ \\
 &  & 900k & $1902 \mathbin{\pm} 359$ & $1126 \mathbin{\pm} 282$ & $43.9 \mathbin{\pm} 29.7$ & $1.7 \mathbin{\pm} 3.0$ \\
 &  & 1M & $2268 \mathbin{\pm} 134$ & $1192 \mathbin{\pm} 445$ & $24.3 \mathbin{\pm} 5.6$ & $0.9 \mathbin{\pm} 1.2$ \\
\midrule
Ant $4{\times}2$ & 25 & 500k & $997 \mathbin{\pm} 83$ & $1165 \mathbin{\pm} 176$ & $0.0 \mathbin{\pm} 0.0$ & $0.0 \mathbin{\pm} 0.0$ \\
 &  & 700k & $1193 \mathbin{\pm} 87$ & $1302 \mathbin{\pm} 286$ & $0.0 \mathbin{\pm} 0.0$ & $0.0 \mathbin{\pm} 0.0$ \\
 &  & 800k & $1266 \mathbin{\pm} 102$ & $1335 \mathbin{\pm} 279$ & $0.0 \mathbin{\pm} 0.0$ & $0.0 \mathbin{\pm} 0.0$ \\
 &  & 900k & $1407 \mathbin{\pm} 172$ & $1516 \mathbin{\pm} 254$ & $0.0 \mathbin{\pm} 0.0$ & $0.0 \mathbin{\pm} 0.0$ \\
 &  & 1M & $1574 \mathbin{\pm} 304$ & $1657 \mathbin{\pm} 162$ & $0.5 \mathbin{\pm} 0.8$ & $0.5 \mathbin{\pm} 0.9$ \\
\midrule
 & 1.0 & 500k & $965 \mathbin{\pm} 34$ & $851 \mathbin{\pm} 102$ & $0.3 \mathbin{\pm} 0.6$ & $0.0 \mathbin{\pm} 0.0$ \\
 &  & 700k & $886 \mathbin{\pm} 111$ & $1105 \mathbin{\pm} 194$ & $0.0 \mathbin{\pm} 0.0$ & $0.0 \mathbin{\pm} 0.0$ \\
 &  & 800k & $1134 \mathbin{\pm} 170$ & $1104 \mathbin{\pm} 140$ & $0.1 \mathbin{\pm} 0.1$ & $0.0 \mathbin{\pm} 0.0$ \\
 &  & 900k & $1141 \mathbin{\pm} 33$ & $1159 \mathbin{\pm} 190$ & $0.5 \mathbin{\pm} 0.8$ & $0.0 \mathbin{\pm} 0.0$ \\
 &  & 1M & $1241 \mathbin{\pm} 566$ & $1032 \mathbin{\pm} 57$ & $2.5 \mathbin{\pm} 3.1$ & $0.0 \mathbin{\pm} 0.0$ \\
\midrule
HalfCheetah $2{\times}3$ & 25 & 500k & $2000 \mathbin{\pm} 121$ & $960 \mathbin{\pm} 942$ & $5.4 \mathbin{\pm} 6.9$ & $45.2 \mathbin{\pm} 52.7$ \\
 &  & 700k & $2588 \mathbin{\pm} 181$ & $1582 \mathbin{\pm} 590$ & $22.3 \mathbin{\pm} 22.2$ & $41.8 \mathbin{\pm} 40.3$ \\
 &  & 800k & $2545 \mathbin{\pm} 212$ & $1120 \mathbin{\pm} 468$ & $37.6 \mathbin{\pm} 9.4$ & $18.2 \mathbin{\pm} 17.5$ \\
 &  & 900k & $2528 \mathbin{\pm} 163$ & $824 \mathbin{\pm} 465$ & $3.9 \mathbin{\pm} 3.7$ & $2.9 \mathbin{\pm} 1.8$ \\
 &  & 1M & $2527 \mathbin{\pm} 229$ & $982 \mathbin{\pm} 1303$ & $16.6 \mathbin{\pm} 2.0$ & $64.4 \mathbin{\pm} 103.8$ \\
\midrule
 & 5.0 & 500k & $1642 \mathbin{\pm} 306$ & $717 \mathbin{\pm} 963$ & $2.2 \mathbin{\pm} 2.5$ & $15.4 \mathbin{\pm} 21.4$ \\
 &  & 700k & $1819 \mathbin{\pm} 585$ & $729 \mathbin{\pm} 1041$ & $7.0 \mathbin{\pm} 7.1$ & $0.6 \mathbin{\pm} 0.5$ \\
 &  & 800k & $1736 \mathbin{\pm} 249$ & $839 \mathbin{\pm} 1161$ & $10.6 \mathbin{\pm} 10.0$ & $3.2 \mathbin{\pm} 3.0$ \\
 &  & 900k & $1982 \mathbin{\pm} 388$ & $722 \mathbin{\pm} 439$ & $15.1 \mathbin{\pm} 19.9$ & $4.8 \mathbin{\pm} 7.7$ \\
 &  & 1M & $1974 \mathbin{\pm} 506$ & $1166 \mathbin{\pm} 513$ & $18.2 \mathbin{\pm} 15.9$ & $11.1 \mathbin{\pm} 18.9$ \\
\bottomrule
\end{tabular}
\caption{CA-MAMBA acting with messages ($\mathcal N_i=\mathcal I$) and without them ($\mathcal N_i=\{i\}$ in every collected episode; world-model learning and imagination use $\mathcal N_i=\mathcal I$ in both), $\alpha_\beta=10^{-5}$, mean $\pm$ sample standard deviation over three runs per cell.}
\label{tab:mamujoco-nocomm}
\end{table*}

\section{SMAC Learning Curves}
\label{app:curves}

Figure~\ref{fig:curves-supp} gives the $d=0$ learning curves for four maps.
They separate into two regimes.  On the easier
maps (\texttt{bane\_vs\_bane}, \texttt{2m\_vs\_1z}) MACPO-Discrete eventually
matches CA-MAMBA's win rate, but only after the full 5M interactions, against CA-MAMBA's
100k.  On the harder maps (\texttt{8m}, \texttt{1c3s5z}) it never leaves zero
win rate at any budget we ran, while CA-MAMBA converges within 100k.  The
CA-MAMBA curves use the imagined-cost multiplier input, so their values differ
from the measured-input runs of the main results; they are shown for dynamics,
not as a separate measurement.

\begin{figure*}[t]
    \centering
    \setlength{\tabcolsep}{2pt}
    \renewcommand{\arraystretch}{0.4}
    \begin{tabular}{cc}
        \includegraphics[width=0.49\textwidth]{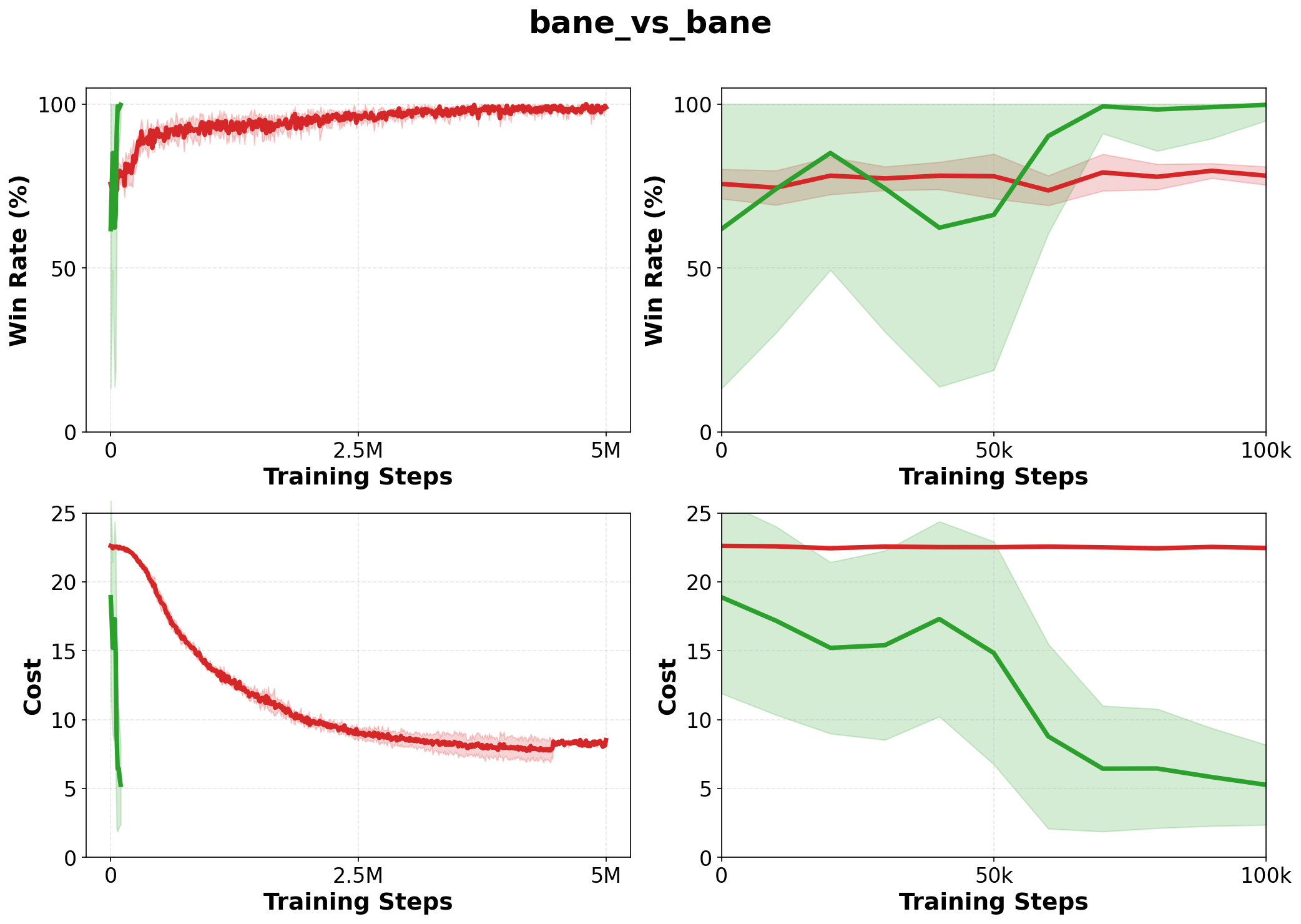} &
        \includegraphics[width=0.49\textwidth]{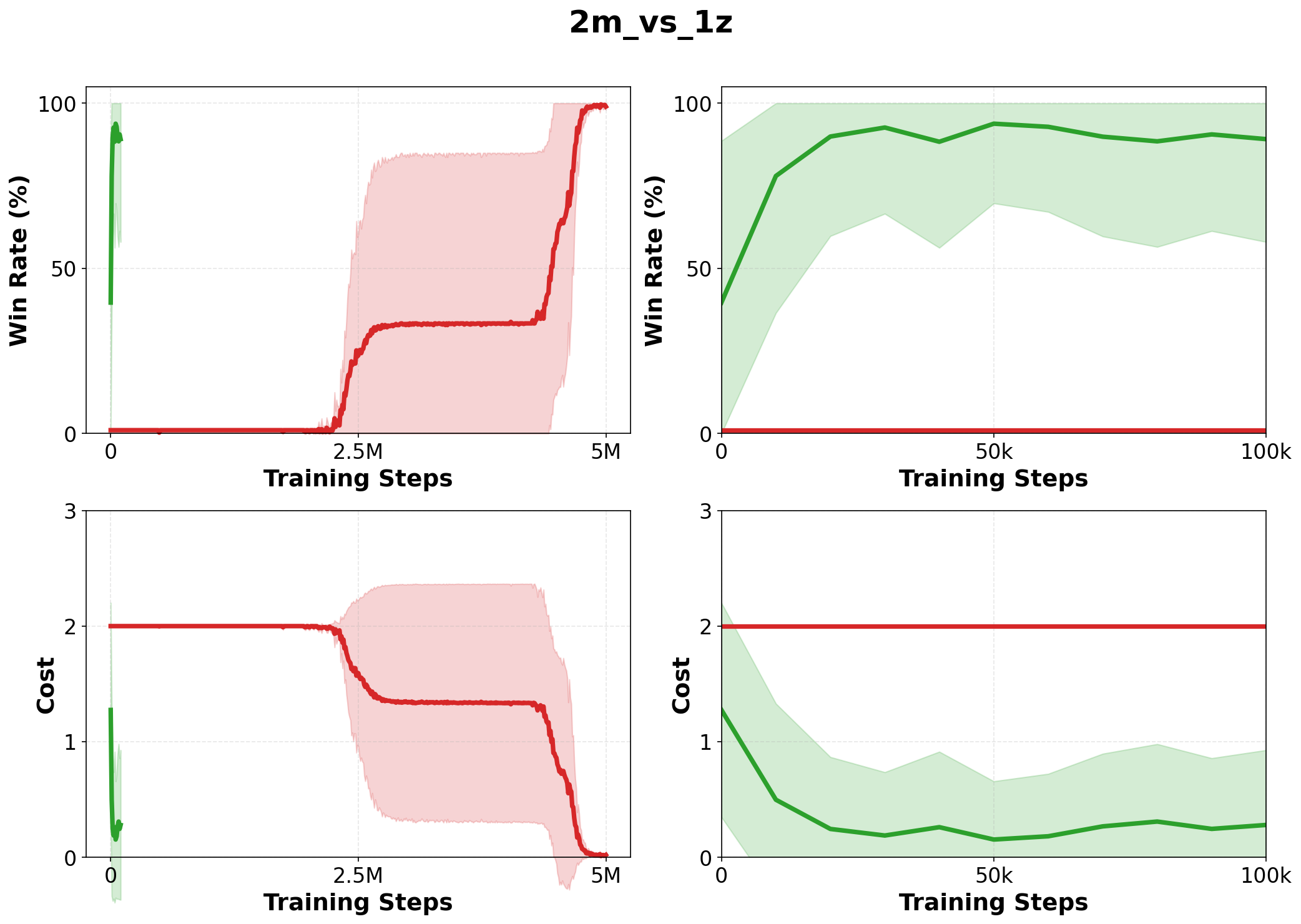} \\
        \includegraphics[width=0.49\textwidth]{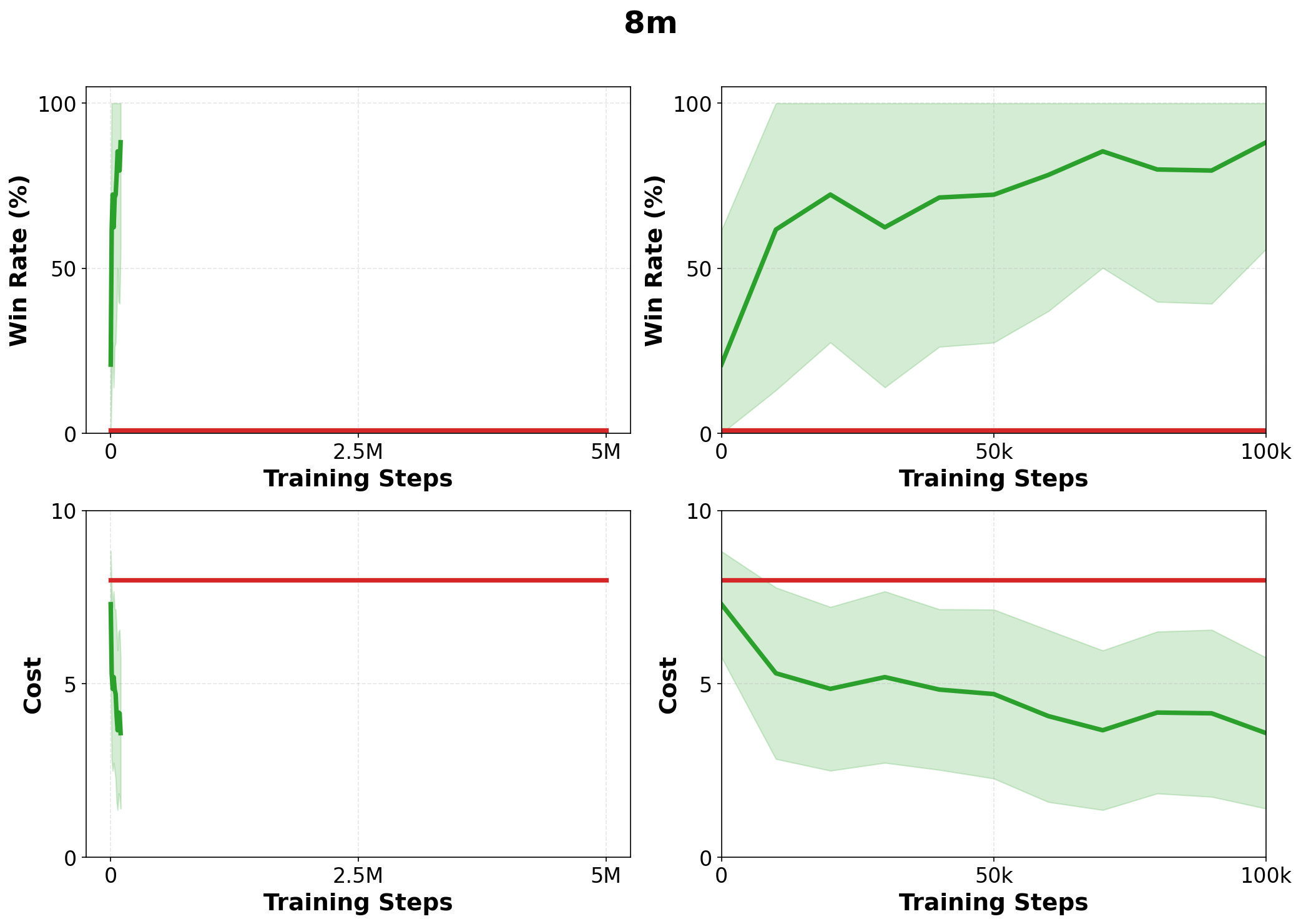} &
        \includegraphics[width=0.49\textwidth]{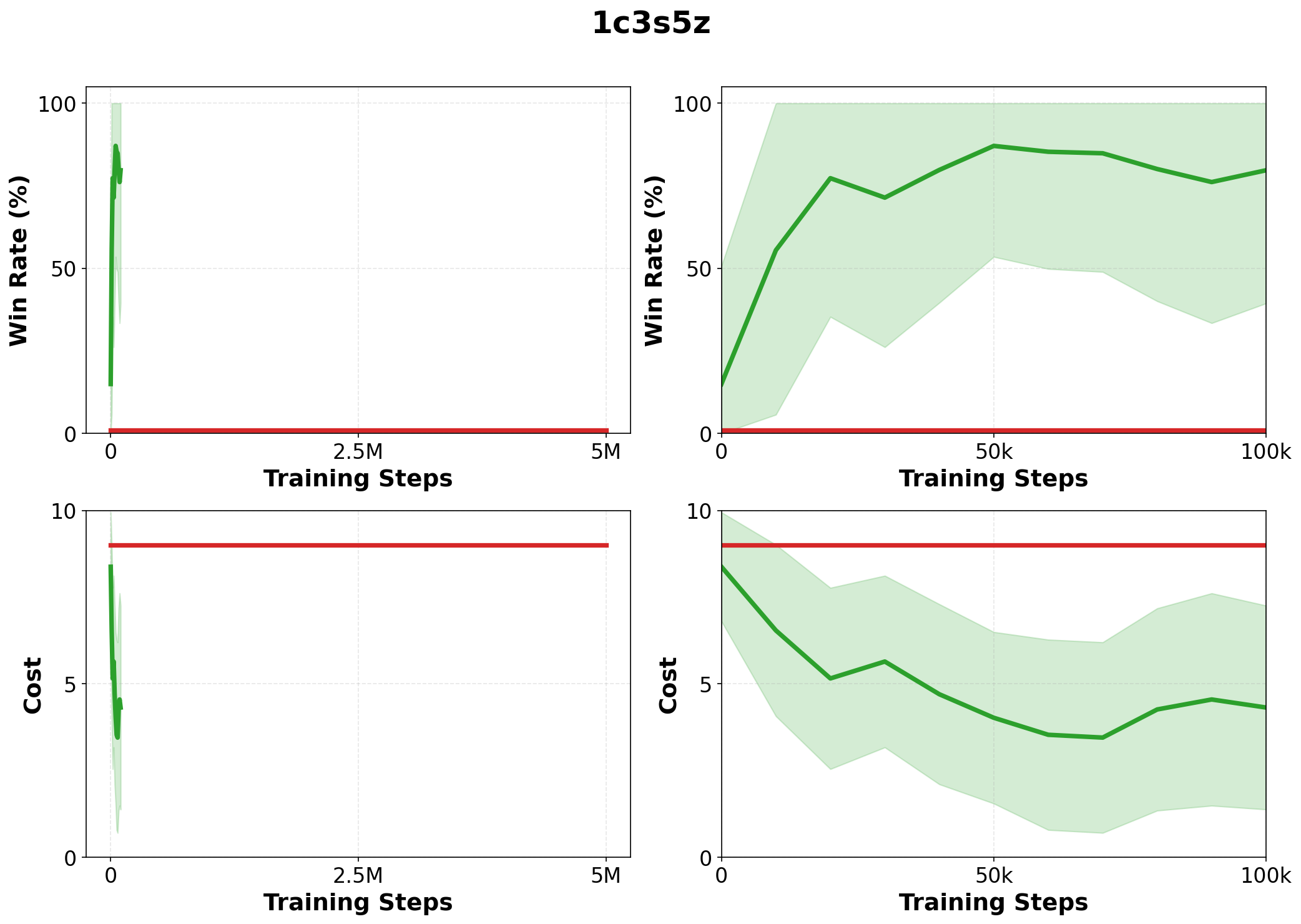} \\
    \end{tabular}
    \Description{Four panels of SMAC learning curves at d=0 for bane_vs_bane,
    2m_vs_1z, 8m and 1c3s5z, each comparing CA-MAMBA (green) with MACPO-Discrete
    (red) over the full 0 to 5M range and a zoomed 0 to 100k range.}
    \caption{Training dynamics at $d=0$.  Top row, the easier maps
    \texttt{bane\_vs\_bane} and \texttt{2m\_vs\_1z}, where MACPO-Discrete
    catches up only at 5M; bottom row, \texttt{8m} and \texttt{1c3s5z}, where it
    stays at zero win rate.  Each panel pairs the full $0$--$5$M baseline range
    with the zoomed $0$--$100$k range.  CA-MAMBA is green, MACPO-Discrete red.}
    \label{fig:curves-supp}
\end{figure*}

\section{Dense Collision-Cost Results}
\label{app:collision}

As a second SMAC cost, dense and reversible unlike casualties, each step
charges one unit for every pair of moving allied units closer than a fixed
distance.  Table~\ref{tab:collision_comparison_100k_1m} gives the per-map
endpoint results at $d=0$, with seed standard deviations.  CA-MAMBA at 100k
attains a higher score and win rate than MACPO at 1M on 9 of 10 maps,
but a higher collision cost on 5: the lead in task performance carries over to
this cost structure, the lead in cost only partly.  At 1M, MACPO still
wins no episode on 7 of the 10 maps.

\begin{table*}[t]
\centering
\small
\setlength{\tabcolsep}{4pt}
\begin{tabular}{@{}lcccccc@{}}
\toprule
& \multicolumn{3}{c}{MACPO (1M)} & \multicolumn{3}{c}{\textbf{CA-MAMBA} (100k)} \\
\cmidrule(lr){2-4}\cmidrule(l){5-7}
Scenario & Score $\uparrow$ & Cost $\downarrow$ & Win rate $\uparrow$ & Score $\uparrow$ & Cost $\downarrow$ & Win rate $\uparrow$ \\
\midrule
\texttt{2s\_vs\_1sc} & $12.23 \mathbin{\pm} 0.48$ & $\mathbf{0.00} \mathbin{\pm} 0.00$ & $0.11 \mathbin{\pm} 0.04$ & $\mathbf{19.86} \mathbin{\pm} 0.22$ & $\mathbf{0.00} \mathbin{\pm} 0.00$ & $\mathbf{0.95} \mathbin{\pm} 0.03$ \\
\texttt{3m} & $11.20 \mathbin{\pm} 0.96$ & $0.12 \mathbin{\pm} 0.03$ & $0.34 \mathbin{\pm} 0.07$ & $\mathbf{19.03} \mathbin{\pm} 0.91$ & $\mathbf{0.02} \mathbin{\pm} 0.02$ & $\mathbf{0.92} \mathbin{\pm} 0.07$ \\
\texttt{2s3z} & $9.24 \mathbin{\pm} 0.21$ & $\mathbf{0.07} \mathbin{\pm} 0.02$ & $0.00 \mathbin{\pm} 0.00$ & $\mathbf{18.72} \mathbin{\pm} 0.44$ & $0.16 \mathbin{\pm} 0.11$ & $\mathbf{0.79} \mathbin{\pm} 0.07$ \\
\texttt{3s\_vs\_3z} & $8.32 \mathbin{\pm} 0.07$ & $\mathbf{0.00} \mathbin{\pm} 0.00$ & $0.00 \mathbin{\pm} 0.00$ & $\mathbf{19.89} \mathbin{\pm} 0.05$ & $0.02 \mathbin{\pm} 0.03$ & $\mathbf{0.97} \mathbin{\pm} 0.01$ \\
\texttt{3s\_vs\_4z} & $5.77 \mathbin{\pm} 0.13$ & $\mathbf{0.00} \mathbin{\pm} 0.00$ & $0.00 \mathbin{\pm} 0.00$ & $\mathbf{15.49} \mathbin{\pm} 3.71$ & $0.26 \mathbin{\pm} 0.23$ & $\mathbf{0.06} \mathbin{\pm} 0.09$ \\
\texttt{3s\_vs\_5z} & $4.78 \mathbin{\pm} 0.26$ & $\mathbf{0.00} \mathbin{\pm} 0.00$ & $0.00 \mathbin{\pm} 0.00$ & $\mathbf{13.72} \mathbin{\pm} 2.97$ & $0.08 \mathbin{\pm} 0.07$ & $\mathbf{0.01} \mathbin{\pm} 0.03$ \\
\texttt{8m} & $7.21 \mathbin{\pm} 0.32$ & $0.70 \mathbin{\pm} 0.05$ & $0.00 \mathbin{\pm} 0.00$ & $\mathbf{17.49} \mathbin{\pm} 0.43$ & $\mathbf{0.58} \mathbin{\pm} 0.06$ & $\mathbf{0.71} \mathbin{\pm} 0.05$ \\
\texttt{MMM} & $6.92 \mathbin{\pm} 0.15$ & $1.50 \mathbin{\pm} 0.02$ & $0.00 \mathbin{\pm} 0.00$ & $\mathbf{15.12} \mathbin{\pm} 0.17$ & $\mathbf{1.20} \mathbin{\pm} 0.52$ & $\mathbf{0.39} \mathbin{\pm} 0.03$ \\
\texttt{3s5z\_vs\_3s6z} & $6.83 \mathbin{\pm} 0.07$ & $\mathbf{0.10} \mathbin{\pm} 0.01$ & $0.00 \mathbin{\pm} 0.00$ & $\mathbf{10.65} \mathbin{\pm} 0.82$ & $0.86 \mathbin{\pm} 0.08$ & $\mathbf{0.01} \mathbin{\pm} 0.01$ \\
\texttt{bane\_vs\_bane} & $\mathbf{19.66} \mathbin{\pm} 0.09$ & $0.65 \mathbin{\pm} 0.00$ & $\mathbf{0.88} \mathbin{\pm} 0.06$ & $18.20 \mathbin{\pm} 0.59$ & $\mathbf{0.15} \mathbin{\pm} 0.13$ & $0.57 \mathbin{\pm} 0.10$ \\
\bottomrule
\end{tabular}
\caption{Dense collision cost at $d=0$: CA-MAMBA after 100k interactions versus
MACPO after 1M.  Score is SMAC's episode return, which credits damage
and kills as well as the win (near $20$ for a clean win).  Mean
$\mathbin{\pm}$ sample standard deviation over three seeds; bold marks the better mean
of the two methods per metric.}
\label{tab:collision_comparison_100k_1m}
\end{table*}

\section{Safety-Mechanism Ablation}
\label{app:safety-mechanism}

Removing the cost critic and the multiplier reduces CA-MAMBA to the
unconstrained MAMBA world-model learner.  Figure~\ref{fig:safety-ablation}
contrasts the two at $d=0$.  On \texttt{1c3s5z} the constrained agent separates
within roughly 20k interactions; the curves meet again near 100k, and it ends two to three
casualties below the unconstrained one.  On \texttt{8m} the two are
comparable until about 150k and separate after it, ending about one casualty
apart.  Return does not drop in either, and on \texttt{1c3s5z} the constrained agent wins
more often: the cost-aware components buy the cost reduction without paying for it in task
performance, and the world model alone does not deliver it.  These are separate runs from
those of the main results, so we read them for shape and do not attach endpoint statistics
to them.

\begin{figure*}[t]
  \centering
  \includegraphics[width=0.49\textwidth]{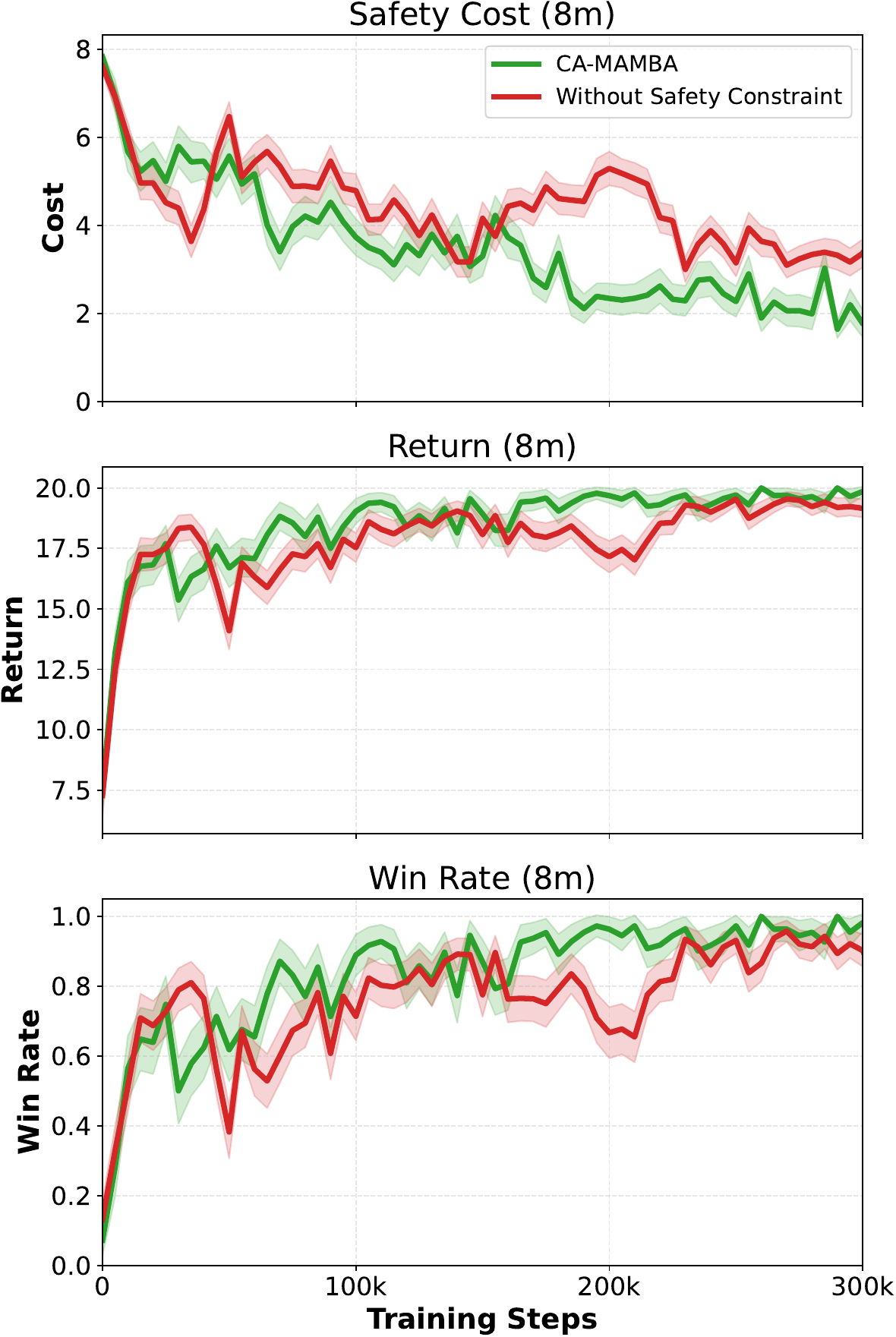}\hfill
  \includegraphics[width=0.49\textwidth]{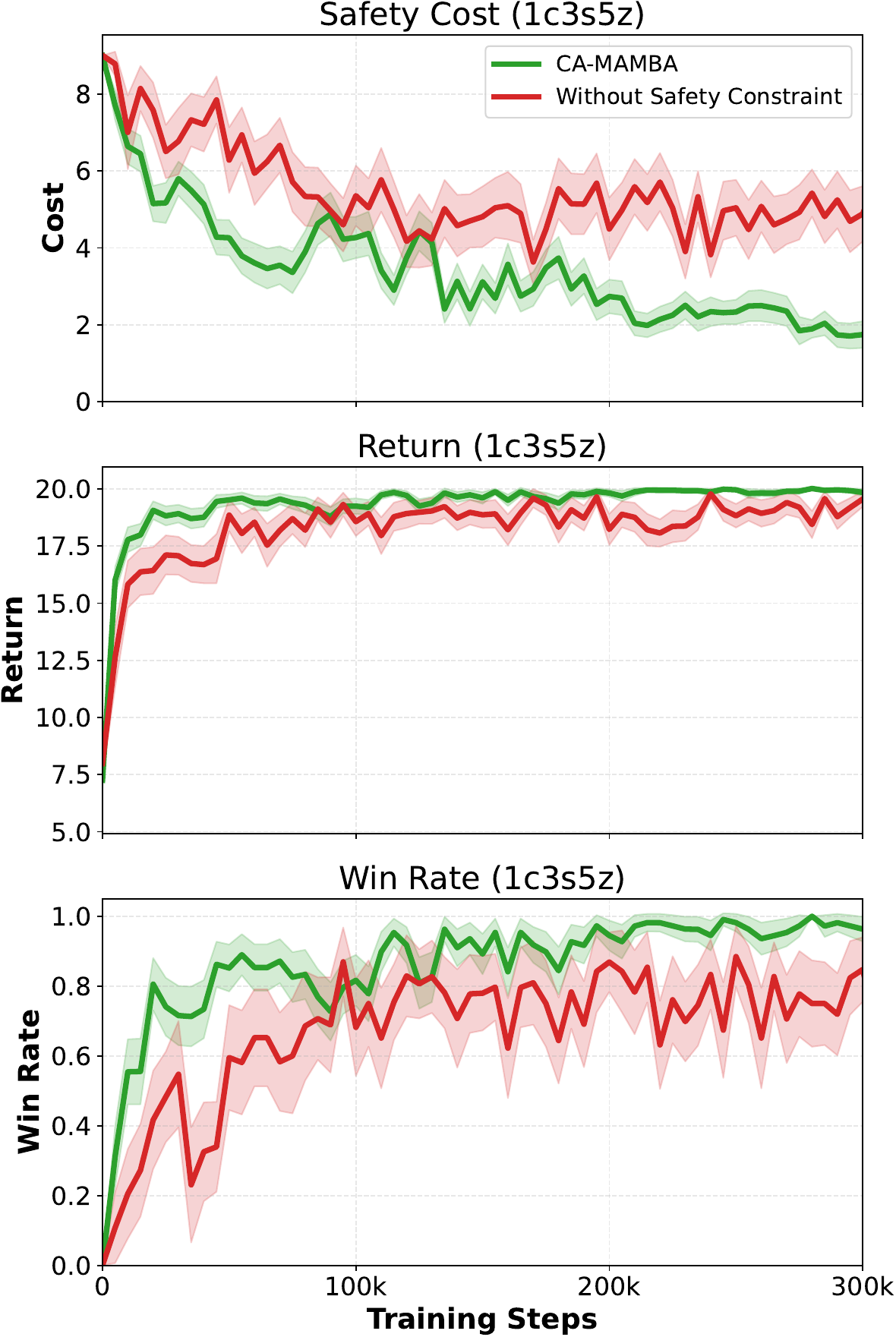}
  \Description{Two panels, 8m and 1c3s5z at d=0, comparing cost, return and win
  rate of constrained CA-MAMBA (green) and the unconstrained MAMBA learner (red)
  over training.}
  \caption{Safety-mechanism ablation on \texttt{8m} (left) and \texttt{1c3s5z}
  (right), $d=0$.  Constrained CA-MAMBA (green) versus the unconstrained
  MAMBA learner (red), over cost, return, and win rate.}
  \label{fig:safety-ablation}
\end{figure*}

\section{Cost-Model Accuracy}
\label{app:cost-accuracy}

Theorem~\ref{app:thm:cost-transfer} below needs the cost head's bias
$B_c=\mathbb E_{\nu_t}[c-\widehat c\,]$ on the states the real process visits, so
this section measures it.  Throughout the first 1M interactions
of the MAMuJoCo runs at $d=25$ and $\alpha_\beta=10^{-5}$, we log the cost head's
predicted team cost alongside the realized team cost on the same training
batches.  Both quantities are per-timestep averages over a batch rather than
episode totals, which keeps the vertical scales small (from $0$ to $0.15$ in
Figures~\ref{fig:costacc-hc} and~\ref{fig:costacc-ant2x4}).  Curves are unsmoothed;
the band is one sample standard deviation over three seeds.

Because each point averages a batch, the difference between the two curves
estimates $B_c$ directly rather than a per-sample error, and averaging its
absolute value over training gives an upper estimate of $|B_c|$, inflated by
batch noise.  That average is $0.004$ on HalfCheetah $2{\times}3$, $0.003$ on Ant
$2{\times}4$, and $0.001$ on Ant $4{\times}2$; on the first two it is roughly
$3\%$ of peak cost.  Two features matter beyond
the magnitude.  The HalfCheetah run traverses a full rise-and-fall arc as the
multiplier takes hold, and the prediction tracks it without visible lag or
systematic offset, so the head is following the cost process rather than
regressing to its mean.  And both Ant partitions spend a long opening phase at
exactly zero cost, which is the sparse regime in miniature: the head neither
invents cost there nor misses the onset when cost appears at roughly 0.4M
interactions on Ant $2{\times}4$ (Figure~\ref{fig:costacc-ant2x4}).  The visible negative excursions on
Ant $4{\times}2$ (Figure~\ref{fig:costacc-ant4x2}) are regression noise around a zero target, of about $0.01$.

Two limits on what this establishes.  Replay batches stand in for the laws
$\nu_t$ of the theorem, and the estimate pools checkpoints across training, so it
fixes the scale of $\epsilon_c$ for these runs rather than certifying it for one
policy.  And it is measured on the continuous velocity cost; the
sparse, irreversible SMAC casualty cost is not covered here, which is why
Appendix~\ref{app:remaining-evidence} still lists positive-event recall as
outstanding.  At $H_{\mathrm{im}}=14$ costed steps the measured value puts the
cost-head term of $\Delta_{C,H_{\mathrm{im}}}$ near $0.05$ per $14$-step window.
The episode budget lives on a different scale: with 1000-step episodes, $d=25$
is $0.025$ per step, or about $0.33$ per window, so even this term is about a
sixth of the budget.  The
transition term cannot be bounded from the logs at all
(Appendix~\ref{app:model-error}), so the measured gap between imagined and
realized cost, not the margin, is the evidence on transfer.

\begin{figure}[t]
  \centering
  \includegraphics[width=\columnwidth]{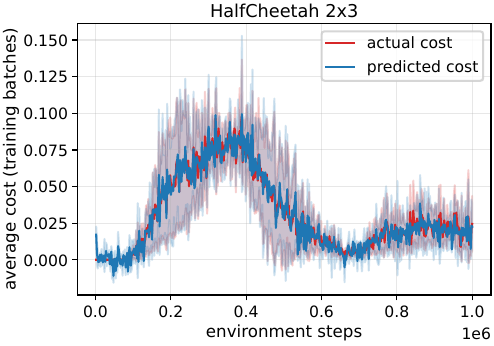}
  \Description{Predicted and realized team cost on HalfCheetah 2x3 at d=25
  against training interactions; the prediction tracks the rise and fall of
  realized cost.}
  \caption{HalfCheetah $2{\times}3$, $d=25$: predicted against realized team cost
  on training batches.  The prediction follows the full rise-and-fall arc.}
  \label{fig:costacc-hc}
\end{figure}

\begin{figure}[t]
  \centering
  \includegraphics[width=\columnwidth]{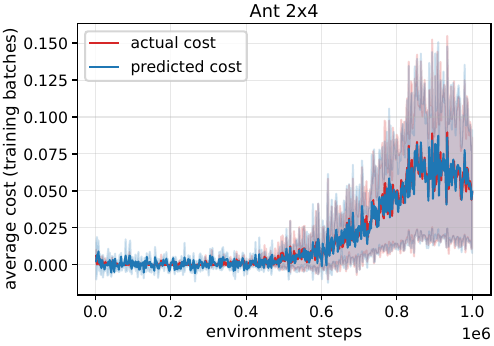}
  \Description{Predicted and realized team cost on Ant 2x4 at d=25 against
  training interactions; both stay at zero until about 0.4M interactions, then
  rise together.}
  \caption{Ant $2{\times}4$, $d=25$: cost is zero until roughly 0.4M
  interactions.  The head predicts the zero phase and the onset.}
  \label{fig:costacc-ant2x4}
\end{figure}

\begin{figure}[t]
  \centering
  \includegraphics[width=\columnwidth]{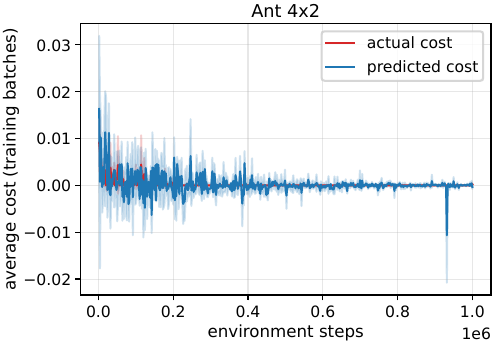}
  \Description{Predicted and realized team cost on Ant 4x2 at d=25 against
  training interactions; both stay near zero throughout.}
  \caption{Ant $4{\times}2$, $d=25$: cost stays near zero throughout and the
  prediction stays with it, within regression noise.}
  \label{fig:costacc-ant4x2}
\end{figure}

\section{Model-Error Estimates}
\label{app:model-error}

Theorem~\ref{app:thm:cost-transfer} relates the gap between imagined and real
cost to two model errors.  This section reports what training logs say about
each, and the gap itself (Table~\ref{tab:model-error}), for the $18$ runs whose
local logs record the exact length of every episode: \texttt{8m} at $d=0$ under
the learned multiplier, and HalfCheetah $2{\times}3$ under three price rules at
$d=25$ and two at $d=5$.  The realized cost per step needs those lengths, and the
exported logs of the other configurations do not record them, so we do not
estimate the gap there.  Every quantity is an average over the last $10\%$ of a
run's interactions.

The transition error is logged as the divergence between the prior and the
posterior over each agent's stochastic latent.  The logged value averages over
the latent's $32$ categorical variables, so the divergence of one agent's full
latent is $32$ times larger, $3.2$ to $6.5$ nats here.  Pinsker's and Jensen's
inequalities then bound each agent's expected total variation only by $1.27$ or
more, above its trivial value $1$, so the logs give no useful $\epsilon_P$.  The
real next-latent law is the posterior averaged over the next observation, so by
joint convexity this divergence does upper-bound the one the theorem needs.  But
the proxy cannot certify a small $\epsilon_P$ even for an exact model: it then
equals the information the next observation adds about the latent, which is
positive whenever that observation is informative.  The imagined and realized costs are both scaled to the discounted
window of $H_{\mathrm{im}}=14$ costed steps: imagined team cost per imagined
step, and realized episode cost divided by episode length, each times
$\sum_{t<14}\gamma^t\approx13.1$.  Their difference is a proxy for the
left-hand side of the theorem.  It treats realized cost as spread evenly over an
episode, which understates bursty windows, so it is a check on average transfer,
not a worst-case one.

\begin{table*}[t]
\centering
\small
\setlength{\tabcolsep}{3pt}
\begin{tabular}{@{}lclccccccc@{}}
\toprule
Env & $d$ & Price & \begin{tabular}[b]{@{}c@{}}KL per\\agent\end{tabular} & \begin{tabular}[b]{@{}c@{}}TV per\\agent\end{tabular} & \begin{tabular}[b]{@{}c@{}}$\hat B_c$\\{[}$10^{-4}${]}\end{tabular} & \begin{tabular}[b]{@{}c@{}}$\hat B_V$\\{[}$10^{-3}${]}\end{tabular} & \begin{tabular}[b]{@{}c@{}}Imagined\\14-step cost\end{tabular} & \begin{tabular}[b]{@{}c@{}}Real\\14-step cost\end{tabular} & \begin{tabular}[b]{@{}c@{}}Real $-$\\imagined\end{tabular} \\
\midrule
\texttt{8m} & 0 & Learned & $4.1 \mathbin{\pm} 0.1$ & $0.66 \mathbin{\pm} 0.01$ & $3.4 \mathbin{\pm} 1.5$ & $63.3 \mathbin{\pm} 2.7$ & $0.728 \mathbin{\pm} 0.017$ & $1.083 \mathbin{\pm} 0.128$ & $0.354 \mathbin{\pm} 0.132$ ($+33\%$) \\
\midrule
HalfCheetah $2{\times}3$ & 5 & Parity start & $5.6 \mathbin{\pm} 1.0$ & $0.76 \mathbin{\pm} 0.04$ & $-1.0 \mathbin{\pm} 1.9$ & $-0.0 \mathbin{\pm} 0.2$ & $0.014 \mathbin{\pm} 0.030$ & $0.068 \mathbin{\pm} 0.112$ & $0.054 \mathbin{\pm} 0.084$ ($+79\%^{\diamond}$) \\
 & 5 & Fixed $\beta{=}1$ & $5.4 \mathbin{\pm} 0.6$ & $0.72 \mathbin{\pm} 0.07$ & $2.0 \mathbin{\pm} 2.5$ & $0.1 \mathbin{\pm} 0.2$ & $-0.011 \mathbin{\pm} 0.009$ & $0.009 \mathbin{\pm} 0.004$ & $0.020 \mathbin{\pm} 0.012$ ($+220\%$) \\
 & 25 & Learned & $4.9 \mathbin{\pm} 0.8$ & $0.73 \mathbin{\pm} 0.04$ & $3.6 \mathbin{\pm} 6.0$ & $-0.2 \mathbin{\pm} 0.5$ & $0.037 \mathbin{\pm} 0.017$ & $0.056 \mathbin{\pm} 0.015$ & $0.020 \mathbin{\pm} 0.020$ ($+35\%^{\diamond}$) \\
 & 25 & Parity start & $4.1 \mathbin{\pm} 1.0$ & $0.68 \mathbin{\pm} 0.06$ & $0.3 \mathbin{\pm} 6.7$ & $0.0 \mathbin{\pm} 1.2$ & $0.092 \mathbin{\pm} 0.046$ & $0.281 \mathbin{\pm} 0.109$ & $0.189 \mathbin{\pm} 0.093$ ($+67\%$) \\
 & 25 & Fixed $\beta{=}1$ & $5.9 \mathbin{\pm} 0.0$ & $0.77 \mathbin{\pm} 0.01$ & $0.1 \mathbin{\pm} 2.3$ & $0.3 \mathbin{\pm} 0.2$ & $0.006 \mathbin{\pm} 0.028$ & $0.035 \mathbin{\pm} 0.025$ & $0.029 \mathbin{\pm} 0.009$ ($+83\%$) \\
\bottomrule
\end{tabular}
\caption{World-model error on the runs whose logs record every episode's exact length, three runs (seeds 1--3) per row.  Each quantity is a run's average over its last $10\%$ of interactions (100k for \texttt{8m}, 1M for HalfCheetah), then mean $\mathbin{\pm}$ sample standard deviation over runs.  \emph{Learned} is the measured-cost multiplier of the main results; \emph{Parity start} runs the same update from $\beta_0=1$; \emph{Fixed $\beta{=}1$} holds the price at parity.  KL is between the prior and the posterior over one agent's latent ($32$ categoricals), TV a Monte Carlo estimate of their total variation.  $\hat B_c$ and $\hat B_V$ are per-step estimates on replay, real minus model as in Theorem~\ref{app:thm:cost-transfer}.  Imagined and real costs are discounted windows of the $14$ costed imagined steps ($\sum_{t<14}0.99^t\approx13.1$): imagined team cost per step, and realized episode cost divided by exact episode length, each times that sum.  The cost head is an unconstrained regressor, so imagined cost can fall slightly below zero where real cost is near zero.  The percentage is $(\bar R-\bar I)/\bar R$ from the row means $\bar R$ and $\bar I$, positive when the model under-predicts; $\diamond$ marks rows whose mean gap is within one standard deviation of the per-run gaps.}
\label{tab:model-error}
\end{table*}

The model under-predicts the window cost in every row of
Table~\ref{tab:model-error} and in $17$ of the $18$ runs; the exception, a
parity-start HalfCheetah run at $d=5$, has a realized window cost of $0.004$.
The three runs of a row are too few to separate the price rules.

\paragraph{Direct diagnostics.}
The table's $\hat B_c$, $\hat B_V$, and total-variation columns, and the
\texttt{8m} runs at $d=4$ below, come from three estimates we log on replay
sequences every $50$ model updates.  $\hat B_c$ is the mean realized minus
predicted team cost.  $\hat B_V$ is the cost critic's value at the posterior
latent minus its mean over $8$ latents sampled from the prior, which estimates
the transition term $B_V$ of the theorem with the critic in place of
$\widehat V_{H_{\mathrm{im}}-t-1}$; the critic attends over all agents, so the
estimate includes the inter-agent dependence $\eta$ of
Proposition~\ref{app:prop:factorized}.  The third is a Monte Carlo estimate of
each agent's total variation between posterior and prior over the $32$
categorical variables.  By convexity it upper-bounds that agent's one-step
transition error under the replay distribution, and like the divergence it is
positive even for an exact model.  The sampling uses its own seeded generator
and restores the training random state, so training is unaffected.

On HalfCheetah $2{\times}3$ at $d=25$ under the learned multiplier (means $\pm$
sample standard deviations over three runs, signs as in the theorem, real minus
model), $\hat B_c=(3.6\pm6.0)\cdot10^{-4}$ per step, $2\%\pm14\%$ of the
realized cost; $\hat B_V=(-2.3\pm5.2)\cdot10^{-4}$ per step, with mean absolute
value $0.0086$ against a cost-critic value of $0.24$; and the per-agent total
variation is $0.73\pm0.04$; this upper estimate of $\epsilon_P^i$ is far above the
$0.16$ beyond which part~(ii) is vacuous, so the logs cannot certify a useful
$\epsilon_P$.  Over the $14$-step window the two one-step terms sum
to $\sum_{t<14}\gamma^t(\hat B_c+\gamma\hat B_V)\approx0.002$, while realized
cost exceeds imagined cost by $0.020\pm0.020$ ($35\%$ of the mean realized
cost; $68\%$, $23\%$, and $4\%$ in the three runs).  The estimates pool replay
states, whereas the theorem's terms are taken along the real process from the
imagination start and use the model's own cost-to-go
$\widehat V_{H_{\mathrm{im}}-t-1}$ rather than the critic, so the difference
bounds neither term.  The multiplier ends at $\beta=0.44\pm0.12$.

On \texttt{8m} at $d=0$ the cost head is as accurate, $\hat B_c=(3.4\pm1.5)
\cdot10^{-4}$ per step against a realized $0.083$, but $\hat B_V$ is
$0.063\pm0.003$ per step, $7\%$ to $8\%$ of the critic's value, and positive in
all three runs: the model under-predicts the cost-to-go of the next state, in
the same direction as the window gap of $0.35\pm0.13$, a third of the realized
cost.  The multiplier ends at $\beta=0.54\pm0.02$.

On \texttt{8m} at $d=4$ (Table~\ref{tab:transition-diag}), averaged over the
second half of training, $\hat B_V$ is positive for every multiplier rule of Section~\ref{sec:dual-ablation}, run at this limit, from
$2.5\%$ to $5\%$ of the critic's cost value; for the measured-cost multiplier it
is $0.069\pm0.005$ per step against a value of $1.85$.  The model therefore
under-predicts the cost-to-go of the next state.  The per-agent total variation
is $0.45$ to $0.66$, again too large to certify $\epsilon_P<0.16$.  The \texttt{8m} export does not
include the cost head's average prediction, so $\hat B_c$ is not available
there.

\begin{table}[t]
\centering
\small
\setlength{\tabcolsep}{4pt}
\begin{tabular}{@{}lccccc@{}}
\toprule
Multiplier & $\hat{B}_V$ & $\overline{|\delta|}$ & $\bar{V}_C$ & $\widehat{\mathrm{TV}}$ & $\widehat{\mathrm{TV}}_{\max}$ \\
\midrule
Imagined & \begin{tabular}[t]{@{}c@{}}$0.06$\\$\pm 0.01$\end{tabular} & \begin{tabular}[t]{@{}c@{}}$0.23$\\$\pm 0.01$\end{tabular} & \begin{tabular}[t]{@{}c@{}}$2.39$\\$\pm 0.38$\end{tabular} & \begin{tabular}[t]{@{}c@{}}$0.65$\\$\pm 0.00$\end{tabular} & \begin{tabular}[t]{@{}c@{}}$0.70$\\$\pm 0.00$\end{tabular} \\
Measured & \begin{tabular}[t]{@{}c@{}}$0.07$\\$\pm 0.00$\end{tabular} & \begin{tabular}[t]{@{}c@{}}$0.21$\\$\pm 0.01$\end{tabular} & \begin{tabular}[t]{@{}c@{}}$1.85$\\$\pm 0.04$\end{tabular} & \begin{tabular}[t]{@{}c@{}}$0.64$\\$\pm 0.01$\end{tabular} & \begin{tabular}[t]{@{}c@{}}$0.69$\\$\pm 0.01$\end{tabular} \\
PID & \begin{tabular}[t]{@{}c@{}}$0.01$\\$\pm 0.00$\end{tabular} & \begin{tabular}[t]{@{}c@{}}$0.04$\\$\pm 0.02$\end{tabular} & \begin{tabular}[t]{@{}c@{}}$0.28$\\$\pm 0.14$\end{tabular} & \begin{tabular}[t]{@{}c@{}}$0.45$\\$\pm 0.05$\end{tabular} & \begin{tabular}[t]{@{}c@{}}$0.50$\\$\pm 0.07$\end{tabular} \\
$\beta{=}5$ & \begin{tabular}[t]{@{}c@{}}$0.01$\\$\pm 0.00$\end{tabular} & \begin{tabular}[t]{@{}c@{}}$0.02$\\$\pm 0.01$\end{tabular} & \begin{tabular}[t]{@{}c@{}}$0.13$\\$\pm 0.07$\end{tabular} & \begin{tabular}[t]{@{}c@{}}$0.63$\\$\pm 0.04$\end{tabular} & \begin{tabular}[t]{@{}c@{}}$0.69$\\$\pm 0.02$\end{tabular} \\
$\beta{=}1$ & \begin{tabular}[t]{@{}c@{}}$0.04$\\$\pm 0.01$\end{tabular} & \begin{tabular}[t]{@{}c@{}}$0.12$\\$\pm 0.03$\end{tabular} & \begin{tabular}[t]{@{}c@{}}$0.82$\\$\pm 0.29$\end{tabular} & \begin{tabular}[t]{@{}c@{}}$0.52$\\$\pm 0.05$\end{tabular} & \begin{tabular}[t]{@{}c@{}}$0.56$\\$\pm 0.05$\end{tabular} \\
$\beta{=}0$ & \begin{tabular}[t]{@{}c@{}}$0.06$\\$\pm 0.01$\end{tabular} & \begin{tabular}[t]{@{}c@{}}$0.22$\\$\pm 0.02$\end{tabular} & \begin{tabular}[t]{@{}c@{}}$2.19$\\$\pm 0.19$\end{tabular} & \begin{tabular}[t]{@{}c@{}}$0.66$\\$\pm 0.01$\end{tabular} & \begin{tabular}[t]{@{}c@{}}$0.71$\\$\pm 0.01$\end{tabular} \\
\bottomrule
\end{tabular}
\caption{World-model transition diagnostics on SMAC \texttt{8m} at $d=4$, logged on replayed sequences during training and averaged over the second half of training (50k to 100k environment interactions); each entry is a mean over the seeds that logged it, with the sample standard deviation ($\pm$) below it (three per row). For a replay step with deterministic state $h$, $\delta=V_C(h,z^{q})-\frac{1}{K}\sum_{k=1}^{K}V_C(h,z^{p,k})$ compares the cost critic $V_C$ at a posterior latent $z^{q}$ with its mean over $K$ prior latents $z^{p,k}$. $\hat{B}_V$ is the mean of $\delta$, an estimate of $B_V$ in Theorem~\ref{app:thm:cost-transfer}(i) with the cost critic as the cost-to-go. $\overline{|\delta|}$ is the mean of $|\delta|$. $\bar{V}_C$ is the mean critic value at posterior latents, the scale for the first two columns. $\widehat{\mathrm{TV}}$ is the Monte Carlo total-variation distance between an agent's posterior and prior over its whole latent, averaged over agents, which upper-bounds the latent-space per-agent transition error $\epsilon_P^i$ of Proposition~\ref{app:prop:factorized} on replay states. $\widehat{\mathrm{TV}}_{\max}$ is the largest per-agent average. Rows list the arms whose runs logged these diagnostics.}
\label{tab:transition-diag}
\end{table}

\section{Multiplier Dynamics}
\label{app:beta}

Proposition~\ref{app:prop:dual} below says the measured-cost update has the right
sign, and the counterexample of Appendix~\ref{app:counterexample} shows it cannot repair a cost-blind actor gradient.  These trajectories show what the
update does when the cost head is accurate, which
Appendix~\ref{app:cost-accuracy} establishes for these runs.  We log $\beta$ at
every update together with the realized episode cost, over the first 1M
interactions at $d=25$ and $\alpha_\beta=10^{-5}$, mean and one standard
deviation over three seeds.

All three environments show the behavior the dual prescribes, in the three
regimes available.  On HalfCheetah $2{\times}3$ (Figure~\ref{fig:beta-hc})
episode cost overshoots the limit early, $\beta$ climbs from zero to $0.64$ while
it does, cost returns below $d$ near 0.4M interactions, and $\beta$ then decays
and settles.  On Ant $2{\times}4$ (Figure~\ref{fig:beta-ant2x4}) the constraint
is inactive for the first 0.4M interactions and $\beta$ correctly stays at zero,
then rises once cost appears.  On Ant $4{\times}2$
(Figure~\ref{fig:beta-ant4x2}) cost never reaches the limit and $\beta$ never
leaves zero, so the constraint costs the policy nothing where it does not bind.
No run shows oscillation or divergence, and $\beta$ stays below $1.1$
throughout.

This is the regime on the favorable side of the counterexample in
Appendix~\ref{app:counterexample}: there the cost head is blind and $\beta$
diverges without improving the policy, here $\epsilon_c$ is small and the loop
closes.  The trajectories demonstrate the mechanism working under a measured
error, not that a cost-blind failure cannot occur.

\begin{figure}[t]
  \centering
  \includegraphics[width=\columnwidth]{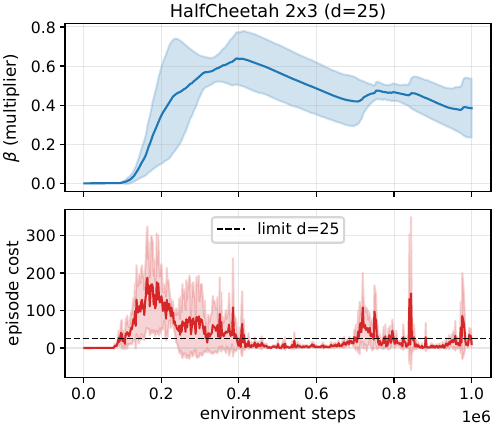}
  \Description{The multiplier beta and episode cost on HalfCheetah 2x3 at d=25
  against training interactions; beta rises while cost exceeds the limit and
  falls once cost is back below it.}
  \caption{HalfCheetah $2{\times}3$, $d=25$: $\beta$ rises while episode cost
  exceeds the limit, then relaxes once cost returns below it.}
  \label{fig:beta-hc}
\end{figure}

\begin{figure}[t]
  \centering
  \includegraphics[width=\columnwidth]{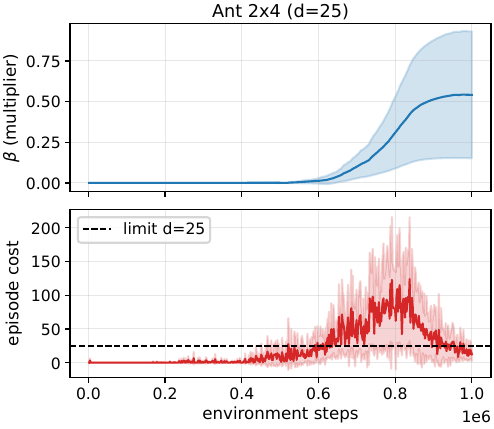}
  \Description{The multiplier beta and episode cost on Ant 2x4 at d=25 against
  training interactions; beta stays at zero while cost is below the limit and
  rises when the limit becomes active.}
  \caption{Ant $2{\times}4$, $d=25$: $\beta$ holds at zero while the constraint
  is inactive and rises when it activates.}
  \label{fig:beta-ant2x4}
\end{figure}

\begin{figure}[t]
  \centering
  \includegraphics[width=\columnwidth]{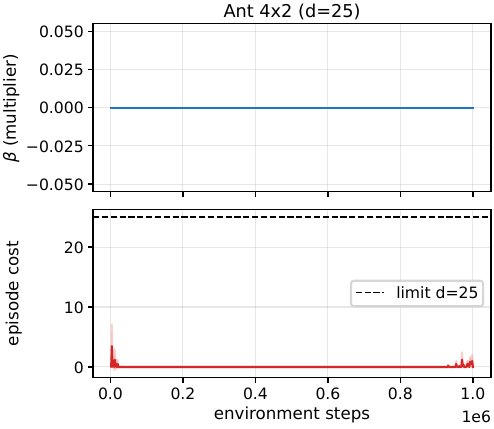}
  \Description{The multiplier beta and episode cost on Ant 4x2 at d=25 against
  training interactions; cost stays below the limit and beta stays at zero.}
  \caption{Ant $4{\times}2$, $d=25$: cost never exceeds the limit and $\beta$
  never leaves zero.}
  \label{fig:beta-ant4x2}
\end{figure}

\paragraph{SMAC at $d=0$.}
Figure~\ref{fig:beta-smac} shows the same quantities on SMAC, from the $d=0$
runs of Table~\ref{tab:full-results-aamas} (Appendix~\ref{app:full-results}), which use the measured-cost multiplier with
$\alpha_\beta=10^{-5}$.  A zero casualty budget is never met in combat, so
$\bar C_k-d$ is never negative and $\beta$ never decreases.  After 100k
interactions it has reached $0.54$ on \texttt{8m}
(Table~\ref{tab:mechanism-8m}), $0.50$ on \texttt{MMM}, $0.30$ on
\texttt{1c3s5z}, $0.13$ on \texttt{3m}, and $0.025$ on \texttt{2m\_vs\_1z},
means over three runs.  The reward and cost advantages are each standardized, so cost is
still weighted below reward when training stops.  Casualty cost falls while $\beta$
rises, which is the mechanism working; on \texttt{8m} the measured-cost multiplier
ends with most of the reduction that the parity price $\beta=1$ gives (Table~\ref{tab:mechanism-8m}), and no rule there meets the zero-casualty limit within 100k.

\begin{figure*}[t]
  \centering
  \includegraphics[width=0.48\textwidth]{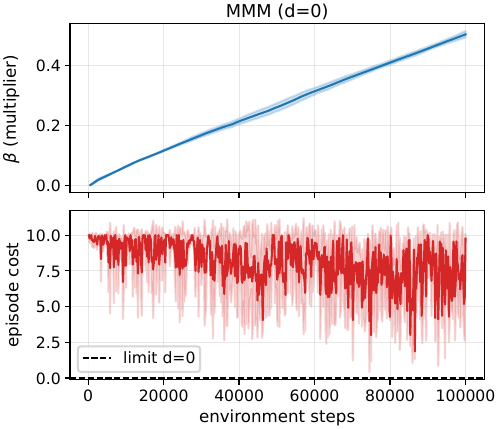}\hfill
  \includegraphics[width=0.48\textwidth]{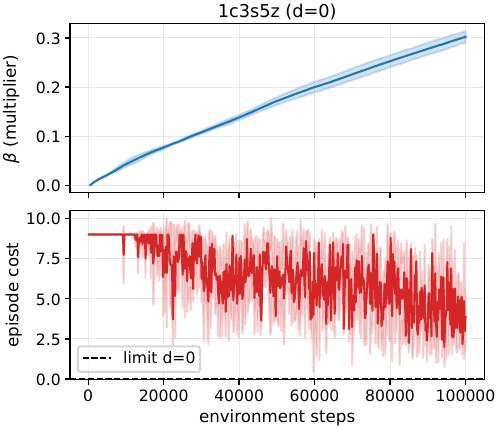}
  \Description{Two panels of multiplier and episode cost against environment
  interactions for MMM and 1c3s5z at a zero casualty limit.}
  \caption{SMAC at $d=0$ with the measured-cost multiplier,
  $\alpha_\beta=10^{-5}$: \texttt{MMM} (left) and \texttt{1c3s5z} (right).  The
  multiplier $\beta$ (top, mean and one standard deviation over three seeds)
  rises linearly and stays below $1$ at 100k interactions, while episode cost
  (bottom) falls but stays above the limit.}
  \label{fig:beta-smac}
\end{figure*}

\section{Evidence Needed for the Remaining Claims}
\label{app:remaining-evidence}

Three questions remain open, and we state what each would take rather than leave
them implicit.

\paragraph{Cost accumulated during learning.}
The endpoint comparisons in this paper say what a policy costs once trained, not
what training itself costs.  We can report the second on MAMuJoCo, and the
answer splits.  Summing episode cost over training on Ant $2{\times}4$ at
$d=25$, CA-MAMBA accumulates roughly $23$k over its 1M interactions against
MACPO's $93$k over 10M, so reaching a stronger policy costs a quarter as much
cost in total.  Restricted to matched interaction counts the ordering reverses:
over the same first 1M interactions MACPO accumulates less total cost and
almost no excess above $d$, while CA-MAMBA's cumulative excess
$E_K=\sum_{e=1}^{K}[C_e-d]_+$ reaches roughly $14.5$k, concentrated between 0.6M
and 0.9M interactions, while the multiplier is still rising from zero
(Table~\ref{tab:mamujoco-schedule}, Figure~\ref{fig:beta-ant2x4}).  Sample efficiency therefore reduces the cost of
\emph{getting} to a policy without making learning safer per
interaction.  Reporting $E_K$ and the violation count
$V_K=\sum_{e=1}^{K}\mathbf 1\{C_e>d\}$ on SMAC, where cost is irreversible, is
the measurement that would tell us whether that overshoot is
specific to a reversible velocity cost.

\paragraph{Positive-event recall under sparse cost.}
Appendix~\ref{app:cost-accuracy} measures $\epsilon_c$ on the continuous
velocity cost.  The SMAC casualty cost is sparser and irreversible, and the
quantity that would settle it is recall on positive events, with area under the
precision--recall curve and the absolute error of the $14$-step cumulative
predicted cost, evaluated on held-out replay without retraining.

\paragraph{Communication trained end to end.}
The communication ablation masks messages only at action selection, from a
model trained with them, and is reported on MAMuJoCo.  Training a self-only
model end to end, with the mask applied in world-model learning and imagination
as well, would show whether a domain needs communication rather than whether a
trained policy depends on it, and would measure how the neighborhood changes the
per-agent transition errors of Proposition~\ref{app:prop:factorized}.

\section{Proofs and Sparse-Cost Boundary Case}
\label{app:theory_proofs}

This section proves the results of Section~\ref{sec:theory} in order: the cost-gap theorem (Appendix~\ref{app:proof-gap}), its extension to bootstrapped returns (Appendix~\ref{app:proof-bootstrap}), the factorized transition error (Appendix~\ref{app:proof-factorized}), the measured-cost dual (Appendix~\ref{app:proof-dual}), and the sparse-cost counterexample (Appendix~\ref{app:counterexample}).

\subsection{Proof of the Imagination-to-Reality Cost Gap}
\label{app:proof-gap}

We restate the main text's imagination-to-reality theorem and prove it.  The
argument is the
telescoping form of the simulation lemma \cite{DBLP:journals/ml/KearnsS02},
applied to the imagined team cost.

\begin{theorem}[Imagination-to-reality cost gap]
\label{app:thm:cost-transfer}
Fix a joint policy $\pi$, an imagination horizon $H_{\mathrm{im}}\geq1$, and a
discount $\gamma\in[0,1]$.  Let $\mathcal X$ be a standard Borel space of joint
information states $x=(x^1,\ldots,x^N)$, where $x^i$ is what agent $i$
conditions on, so that $\pi(a\mid x)=\prod_{i=1}^{N}\pi^i(a^i\mid x^i)$ is well
defined.  Along the real process, let $P_t(\cdot\mid x,a)$ be a regular
conditional law of $x_{t+1}$ given $(x_t,a_t)=(x,a)$, and let $c_t(x,a)$ be the
conditional expectation of the team cost at step $t$ given $(x_t,a_t)=(x,a)$.
Let $\widehat P$ be the learned kernel on $\mathcal X$ and $\widehat c$ the
predicted team cost, and let $c_t$ and $\widehat c$ take values in a common
interval of length $c_{\max}<\infty$.  With
\[
\begin{aligned}
J_{C,H_{\mathrm{im}}}(\pi)
&=\mathbb E\!\left[\sum_{t=0}^{H_{\mathrm{im}}-1}
\gamma^tc_t(x_t,a_t)\right],\\
\widehat J_{C,H_{\mathrm{im}}}(\pi)
&=\mathbb E_{\widehat P,\pi}\!\left[\sum_{t=0}^{H_{\mathrm{im}}-1}
\gamma^t\widehat c(x_t,a_t)\right],
\end{aligned}
\]
the first along the real process, assume the two processes share an
initial-state law and that in both the action $a_t$ is drawn from
$\pi(\cdot\mid x_t)$.  Let $\nu_t$ be the law of $(x_t,a_t)$ under the real
process, and define the model's $k$-step cost-to-go by $\widehat V_0\equiv0$ and
\[
\widehat V_k(x)=\mathbb E_{a\sim\pi(\cdot\mid x)}\Bigl[\widehat c(x,a)
+\gamma\,\mathbb E_{x'\sim\widehat P(\cdot\mid x,a)}\widehat V_{k-1}(x')\Bigr]
\]
for $k\geq1$.  Then \emph{(i)}
\[
J_{C,H_{\mathrm{im}}}(\pi)-\widehat J_{C,H_{\mathrm{im}}}(\pi)
=\sum_{t=0}^{H_{\mathrm{im}}-1}\gamma^t\bigl(B_c(t)+\gamma B_V(t)\bigr),
\]
with $B_c(t)=\mathbb E_{\nu_t}[c_t-\widehat c\,]$ and
$B_V(t)=\mathbb E_{(x,a)\sim \nu_t}[(\mathbb E_{P_t(\cdot\mid x,a)}-\mathbb
E_{\widehat P(\cdot\mid x,a)})\widehat V_{H_{\mathrm{im}}-t-1}]$.  \emph{(ii)} If moreover
$|B_c(t)|\leq\epsilon_c$ and
\[
\mathbb E_{(x,a)\sim\nu_t}\bigl[D_{\mathrm{TV}}(P_t(\cdot\mid x,a),\widehat P(\cdot\mid x,a))\bigr]\leq\epsilon_P
\]
for every $t<H_{\mathrm{im}}$, then
\[
\begin{aligned}
\left|J_{C,H_{\mathrm{im}}}(\pi)
-\widehat J_{C,H_{\mathrm{im}}}(\pi)\right|
&\leq\Delta_{C,H_{\mathrm{im}}},\\
\Delta_{C,H_{\mathrm{im}}}
&:=\epsilon_c\sum_{t=0}^{H_{\mathrm{im}}-1}\gamma^t\\
&\qquad+c_{\max}\epsilon_P\sum_{t=0}^{H_{\mathrm{im}}-1}t\gamma^t,
\end{aligned}
\]
and consequently
$\widehat J_{C,H_{\mathrm{im}}}(\pi)\leq b-\Delta_{C,H_{\mathrm{im}}}$ implies
$J_{C,H_{\mathrm{im}}}(\pi)\leq b$ for any threshold $b$.
\end{theorem}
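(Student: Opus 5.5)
The plan is a telescoping (performance-difference) argument that interpolates between the imagined and the real run one step at a time. Write $H=H_{\mathrm{im}}$. Along the \emph{real} process, define for $t=0,\ldots,H$
\[
G_t=\mathbb E\Bigl[\textstyle\sum_{s=0}^{t-1}\gamma^s c_s(x_s,a_s)+\gamma^t\widehat V_{H-t}(x_t)\Bigr].
\]
At the two ends:
\begin{itemize}
\item Since $\widehat V_0\equiv0$, we have $G_H=J_{C,H}(\pi)$.
\item Unrolling the recursion for $\widehat V_H$ shows that $\mathbb E_{x_0}[\widehat V_H(x_0)]=\widehat J_{C,H}(\pi)$, because both processes share the initial law. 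Hence $G_0=\widehat J_{C,H}(\pi)$.
\end{itemize}
The gap is therefore $\sum_{t=0}^{H-1}(G_{t+1}-G_t)$, and part (i) reduces to the identity $G_{t+1}-G_t=\gamma^t\bigl(B_c(t)+\gamma B_V(t)\bigr)$.

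To prove that identity, write
\[
G_{t+1}-G_t=\gamma^t\,\mathbb E\bigl[c_t(x_t,a_t)+\gamma\widehat V_{H-t-1}(x_{t+1})-\widehat V_{H-t}(x_t)\bigr]
\]
and treat the two value terms separately.
\begin{itemize}
\item For $\widehat V_{H-t}(x_t)$: on the real process $a_t$ is drawn from $\pi(\cdot\mid x_t)$, so the Bellman recursion gives $\mathbb E[\widehat V_{H-t}(x_t)]=\mathbb E_{\nu_t}\bigl[\widehat c+\gamma\,\mathbb E_{\widehat P(\cdot\mid x,a)}\widehat V_{H-t-1}\bigr]$.
\item For $\widehat V_{H-t-1}(x_{t+1})$: the tower property with the regular conditional law $P_t$ gives $\mathbb E[\widehat V_{H-t-1}(x_{t+1})]=\mathbb E_{\nu_t}\bigl[\mathbb E_{P_t(\cdot\mid x,a)}\widehat V_{H-t-1}\bigr]$.
\end{itemize}
Substituting both, the $c_t-\widehat c$ terms give $B_c(t)$ and the two next-state expectations give $\gamma B_V(t)$. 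The step needing the most care is this tower argument. The real $x_{t+1}$ may depend on the whole history, not only on $(x_t,a_t)$. This does no harm, because $P_t$ is \emph{defined} as the conditional law given $(x_t,a_t)$, so conditioning on that pair is all the tower property uses. Standard Borelness guarantees that $P_t$ exists. I also check that each $\widehat V_k$ is bounded and measurable, by induction on $k$, so that every expectation is finite.

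For part (ii), the triangle inequality gives $\bigl|J_{C,H}(\pi)-\widehat J_{C,H}(\pi)\bigr|\le\sum_t\gamma^t|B_c(t)|+\sum_t\gamma^{t+1}|B_V(t)|$, and the first sum is at most $\epsilon_c\sum_t\gamma^t$. For the second, I use $|\mathbb E_pf-\mathbb E_qf|\le D_{\mathrm{TV}}(p,q)\,\mathrm{osc}(f)$, where $\mathrm{osc}(f)=\sup f-\inf f$. Because $\widehat c$ ranges over an interval of length $c_{\max}$, induction on $k$ gives $\mathrm{osc}(\widehat V_k)\le c_{\max}\sum_{j<k}\gamma^j$. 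Averaging over $\nu_t$ then yields
\[
|B_V(t)|\le\epsilon_P\,c_{\max}\sum_{j=0}^{H-t-2}\gamma^j.
\]
Consequently
\[
\sum_t\gamma^{t+1}|B_V(t)|\le c_{\max}\epsilon_P\sum_{t}\sum_{j}\gamma^{t+1+j}.
\]
Reindex by $s=t+1+j$, which ranges over $1,\ldots,H-1$. For each $s$ there are exactly $s$ admissible pairs $(t,j)$, so the double sum equals $\sum_{s}s\gamma^s$. This gives exactly $\Delta_{C,H}$. The safety-margin consequence then follows from $J\le\widehat J+\Delta_{C,H}\le b$.
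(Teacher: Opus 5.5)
Your proposal is correct and is essentially the paper's proof. The paper telescopes $D_t=\gamma^t\,\mathbb E[\widehat V_{H-t}(x_t)]$ along the real process and subtracts $J_{C,H}(\pi)=\sum_t\gamma^t\mathbb E_{\nu_t}[c_t]$ at the end, whereas your hybrid $G_t$ folds those real-cost partial sums into the telescoped quantity; the one-step identities (the recursion for $\widehat V_{H-t}$ under $a_t\sim\pi(\cdot\mid x_t)$, the tower property through $P_t$) and part~(ii)'s range bound on $\widehat V_k$ with the count of $s$ pairs $(t,j)$ satisfying $t+1+j=s$ are the same.
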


\noindent Because $P_t$ and $c_t$ are conditional laws along the real process,
the theorem needs no Markov or sufficiency assumption on the latents: it holds
for the network's own latents $x_t^i=(h_t^i,z_t^i)$ of Section~\ref{sec:method}.  Imagination branches from posterior replay states, so the shared
initial law is the replay distribution, and the result governs an
$H_{\mathrm{im}}$-step cost-to-go from visited states.  The assumptions of the
bound are averages under the real process; pointwise bounds on every reachable
state--action pair imply them, so the bound also holds under that stronger
worst-case reading.  The learned cost head is unclipped regression, and the bound
evaluates $\widehat V_{H_{\mathrm{im}}-t-1}$ at next states drawn from $\widehat P$
as well as from $P_t$, so $c_{\max}$ must cover the range of $\widehat c$ on every
state the model's rollouts reach from visited states, including imagined states
the real system never visits, and not only the range of the real cost.

\begin{proof}
\emph{Decomposition.}  Write $H=H_{\mathrm{im}}$ and, for
$t\in\{0,\ldots,H\}$, let $D_t=\gamma^t\,\mathbb E[\widehat V_{H-t}(x_t)]$,
where $x_t$ follows the real process.  The two processes share the initial
law, so $D_0=\mathbb E[\widehat V_H(x_0)]=\widehat J_{C,H}(\pi)$, and
$D_H=0$ because $\widehat V_0\equiv0$.  In the real process $a_t$ is drawn from
$\pi(\cdot\mid x_t)$, the same policy that defines $\widehat V$, so the
definition of $\widehat V_{H-t}$ gives
\[
\mathbb E\bigl[\widehat V_{H-t}(x_t)\bigr]
=\mathbb E_{(x,a)\sim \nu_t}\Bigl[\widehat c(x,a)
+\gamma\,\mathbb E_{\widehat P(\cdot\mid x,a)}\widehat V_{H-t-1}\Bigr].
\]
By the definition of $P_t$ and the tower property,
\[
\mathbb E\bigl[\widehat V_{H-t-1}(x_{t+1})\bigr]
=\mathbb E_{(x,a)\sim \nu_t}\Bigl[\mathbb E_{P_t(\cdot\mid x,a)}\widehat V_{H-t-1}\Bigr].
\]
Subtracting $\gamma^{t+1}$ times the second identity from $\gamma^t$ times the
first,
\[
D_t-D_{t+1}=\gamma^t\,\mathbb E_{\nu_t}\Bigl[\widehat c
+\gamma\bigl(\mathbb E_{\widehat P}-\mathbb E_{P_t}\bigr)\widehat V_{H-t-1}\Bigr].
\]
Summing over $t=0,\ldots,H-1$ telescopes the left side to
$D_0-D_H=\widehat J_{C,H}(\pi)$.  By the tower property,
$J_{C,H}(\pi)=\sum_{t=0}^{H-1}\gamma^t\,\mathbb E_{\nu_t}[c_t]$, so subtraction
gives
\[
J_{C,H}(\pi)-\widehat J_{C,H}(\pi)
=\sum_{t=0}^{H-1}\gamma^t\Bigl(\mathbb E_{\nu_t}[c_t-\widehat c\,]
+\gamma\,\mathbb E_{\nu_t}\bigl[(\mathbb E_{P_t}-\mathbb E_{\widehat P})
\widehat V_{H-t-1}\bigr]\Bigr),
\]
which is the decomposition.

\emph{Bound.}  If $\widehat c$ takes values in $[m,m+c_{\max}]$, induction on
$k$ shows that $\widehat V_k$ takes values in
$[m\sum_{j=0}^{k-1}\gamma^j,(m+c_{\max})\sum_{j=0}^{k-1}\gamma^j]$, an interval
of length $c_{\max}\sum_{j=0}^{k-1}\gamma^j$.  For any measurable $f$ with values
in an interval of length $L$ and probability laws $p$ and $q$,
$|\mathbb E_p[f]-\mathbb E_q[f]|\leq L\,D_{\mathrm{TV}}(p,q)$.  Hence, at every
$(x,a)$,
\[
\bigl|(\mathbb E_{P_t}-\mathbb E_{\widehat P})\widehat V_{H-t-1}\bigr|
\leq c_{\max}\Bigl(\sum_{j=0}^{H-t-2}\gamma^j\Bigr)
D_{\mathrm{TV}}\bigl(P_t(\cdot\mid x,a),\widehat P(\cdot\mid x,a)\bigr),
\]
with an empty sum, and hence zero, at $t=H-1$.  Averaging over $\nu_t$ and using
the assumption, $|B_V(t)|\leq c_{\max}\epsilon_P\sum_{j=0}^{H-t-2}\gamma^j$.
With $|B_c(t)|\leq\epsilon_c$ and the triangle inequality,
\[
\left|J_{C,H}(\pi)-\widehat J_{C,H}(\pi)\right|
\leq\epsilon_c\sum_{t=0}^{H-1}\gamma^t
+c_{\max}\epsilon_P\sum_{t=0}^{H-1}\sum_{j=0}^{H-t-2}\gamma^{t+1+j}.
\]
In the double sum, each exponent $s=t+1+j$ with $1\leq s\leq H-1$ arises from
exactly $s$ pairs $(t,j)$, namely $t=0,\ldots,s-1$.  The double sum therefore
equals $\sum_{s=0}^{H-1}s\gamma^s$, and the right side is
$\Delta_{C,H}$.

For $\gamma<1$, if the assumptions hold for every $t\geq0$, absolute
convergence permits $H\to\infty$, and
\[
\sum_{t=0}^{\infty}\gamma^t=\frac{1}{1-\gamma},
\qquad
\sum_{t=0}^{\infty}t\gamma^t
=
\frac{\gamma}{(1-\gamma)^2}.
\]
Substitution gives the infinite-horizon margin
\[
\Delta_{C,\infty}
=
\frac{\epsilon_c}{1-\gamma}
+
\frac{\gamma c_{\max}\epsilon_P}{(1-\gamma)^2},
\]
which the main text does not use because the implementation imagines a fixed
$H_{\mathrm{im}}=14$ costed steps.  Setting $\gamma=1$ instead gives the undiscounted
specialization over the same finite horizon,
\[
\Delta_{C,H}
=
H\epsilon_c+\frac{H(H-1)}{2}c_{\max}\epsilon_P.
\]
Finally, if $\widehat J_{C,H}(\pi)\leq b-\Delta_{C,H}$, then
\[
J_{C,H}(\pi)
\leq
\widehat J_{C,H}(\pi)+\Delta_{C,H}
\leq b.
\]
Each implication uses only the assumptions stated in the theorem.
\end{proof}

\subsection{Bootstrapped Returns}
\label{app:proof-bootstrap}

The actor's targets are $\lambda$-returns that bootstrap from the cost critic at
the imagination horizon.  This subsection extends the decomposition to them.

\begin{corollary}[Bootstrapped returns]
\label{app:cor:bootstrap}
Keep the setting of Theorem~\ref{app:thm:cost-transfer}, let $\gamma<1$, let
the real team costs be bounded, and let $V:\mathcal X\to\mathbb R$ be bounded
and measurable (the implementation's critic head is unconstrained in sign).  Define $\widehat V^{V}_0=V$ and, for $k\geq1$,
\[
\widehat V^{V}_k(x)=\mathbb E_{a\sim\pi(\cdot\mid x)}\Bigl[\widehat c(x,a)
+\gamma\,\mathbb E_{x'\sim\widehat P(\cdot\mid x,a)}\widehat V^{V}_{k-1}(x')\Bigr],
\]
so that $\widehat J^{(n)}_C(\pi)=\mathbb E[\widehat V^{V}_n(x_0)]
=\mathbb E_{\widehat P,\pi}[\sum_{t<n}\gamma^t\widehat c(x_t,a_t)+\gamma^nV(x_n)]$.
Along the real process, let $J^\gamma_C(\pi)=\mathbb
E[\sum_{t\geq0}\gamma^tc_t(x_t,a_t)]$ and let
$G_n=\sum_{s\geq n}\gamma^{s-n}c_s$ be the realized discounted cost from step
$n$, where $c_s$ is the realized team cost at step $s$.  Then, for every $n\geq1$,
\[
\begin{aligned}
J^\gamma_C(\pi)-\widehat J^{(n)}_C(\pi)
={}&\sum_{t=0}^{n-1}\gamma^t\Bigl(B_c(t)+\gamma\,\mathbb E_{\nu_t}\bigl[
(\mathbb E_{P_t}-\mathbb E_{\widehat P})\widehat V^{V}_{n-t-1}\bigr]\Bigr)\\
&+\gamma^n\,\mathbb E\bigl[G_n-V(x_n)\bigr].
\end{aligned}
\]
For the truncated $\lambda$-return
$\widehat J^{\lambda}_C=\sum_{n=1}^{H_{\mathrm{im}}}w_n\widehat J^{(n)}_C$ with
$w_n=(1-\lambda)\lambda^{n-1}$ for $n<H_{\mathrm{im}}$ and
$w_{H_{\mathrm{im}}}=\lambda^{H_{\mathrm{im}}-1}$,
$J^\gamma_C-\widehat J^{\lambda}_C=\sum_{n=1}^{H_{\mathrm{im}}}w_n\bigl(J^\gamma_C-\widehat
J^{(n)}_C\bigr)$.
\end{corollary}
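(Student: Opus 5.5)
The plan is to rerun the telescoping argument from the proof of Theorem~\ref{app:thm:cost-transfer}, with $\widehat V$ replaced by the bootstrapped cost-to-go $\widehat V^{V}$ and with a tail term kept at step $n$ in place of $\widehat V_0\equiv0$. Fix $n\geq1$ and set $D_t=\gamma^t\,\mathbb E[\widehat V^{V}_{n-t}(x_t)]$ for $t\in\{0,\ldots,n\}$, with $x_t$ following the real process. Because the two processes share the initial law, $D_0=\mathbb E[\widehat V^{V}_n(x_0)]=\widehat J^{(n)}_C(\pi)$. At the other end, $D_n=\gamma^n\mathbb E[V(x_n)]$, which no longer vanishes.

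The one-step identity goes through as before. The real action is drawn from $\pi(\cdot\mid x_t)$, so the recursion defining $\widehat V^{V}_{n-t}$ together with the tower property through $P_t$ gives
\[
D_t-D_{t+1}=\gamma^t\,\mathbb E_{\nu_t}\bigl[\widehat c+\gamma(\mathbb E_{\widehat P}-\mathbb E_{P_t})\widehat V^{V}_{n-t-1}\bigr].
\]
Summing over $t<n$ yields $\widehat J^{(n)}_C-\gamma^n\mathbb E[V(x_n)]$.

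Next I would split the real discounted cost at step $n$:
\[
J^\gamma_C=\sum_{t<n}\gamma^t\,\mathbb E_{\nu_t}[c_t]+\gamma^n\,\mathbb E[G_n].
\]
The first sum uses $\mathbb E[c_t\mid x_t,a_t]=c_t(x_t,a_t)$, as in the theorem. For the tail, the realized costs are bounded and $\gamma<1$, so $G_n$ is bounded and Fubini lets me exchange the expectation with the infinite sum. Subtracting the telescoped expression from this split gives the stated identity, and the tail term $\gamma^n\mathbb E[G_n-V(x_n)]$ collects the critic's bias at the cutoff state.

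Integrability is the only step that needs care, and I expect it to be the main obstacle. I would check by induction that each $\widehat V^{V}_k$ is bounded and measurable. This needs $\widehat c$ bounded, which follows from the common-interval assumption, and $V$ bounded, which the corollary assumes. These bounds make every expectation above finite and justify each tower-property step.

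The $\lambda$-return statement then follows from linearity. The weights satisfy $\sum_n w_n=(1-\lambda)\sum_{n<H_{\mathrm{im}}}\lambda^{n-1}+\lambda^{H_{\mathrm{im}}-1}=1$, so $J^\gamma_C=\sum_n w_nJ^\gamma_C$. Subtracting $\widehat J^{\lambda}_C=\sum_n w_n\widehat J^{(n)}_C$ then gives the claimed weighted sum of per-$n$ gaps.
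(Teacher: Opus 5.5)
Your proposal is correct and follows the paper's proof essentially step for step. It uses the same telescoping sequence $D_t=\gamma^t\,\mathbb E[\widehat V^{V}_{n-t}(x_t)]$ with the nonvanishing endpoint $D_n=\gamma^n\mathbb E[V(x_n)]$, the same split of $J^\gamma_C$ at step $n$ into a tower-property head and the tail $\gamma^n\mathbb E[G_n]$, and the same weights-sum-to-one linearity argument for the $\lambda$-return, with your induction on the boundedness of $\widehat V^{V}_k$ spelling out the paper's one-line finiteness remark.
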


\begin{proof}
All quantities are finite: the costs and $V$ are bounded and $\gamma<1$.  Let
$D_t=\gamma^t\,\mathbb E[\widehat V^{V}_{n-t}(x_t)]$ along the real process for
$t\in\{0,\ldots,n\}$.  The initial laws agree, so $D_0=\widehat J^{(n)}_C(\pi)$,
and $D_n=\gamma^n\,\mathbb E[V(x_n)]$ because $\widehat V^{V}_0=V$.  The two
one-step identities in the proof of Theorem~\ref{app:thm:cost-transfer} use only
the recursion defining $\widehat V$ for $k\geq1$, not its terminal value, so
they hold here for $t\leq n-1$ and give
\[
D_t-D_{t+1}=\gamma^t\,\mathbb E_{\nu_t}\Bigl[\widehat c
+\gamma\bigl(\mathbb E_{\widehat P}-\mathbb E_{P_t}\bigr)\widehat V^{V}_{n-t-1}\Bigr].
\]
Summing over $t=0,\ldots,n-1$,
\[
\widehat J^{(n)}_C(\pi)-\gamma^n\,\mathbb E[V(x_n)]
=\sum_{t=0}^{n-1}\gamma^t\,\mathbb E_{\nu_t}\Bigl[\widehat c
+\gamma\bigl(\mathbb E_{\widehat P}-\mathbb E_{P_t}\bigr)\widehat V^{V}_{n-t-1}\Bigr].
\]
For the real process, split the discounted sum at $n$ and use linearity, which
absolute convergence allows, and the tower property for the first $n$ terms:
\[
J^\gamma_C(\pi)=\sum_{t=0}^{n-1}\gamma^t\,\mathbb E_{\nu_t}[c_t]
+\gamma^n\,\mathbb E[G_n].
\]
No Markov property is needed.  Subtracting the previous display from this one
gives the stated identity.  For the $\lambda$-return, with $H=H_{\mathrm{im}}$,
$\sum_{n=1}^{H}w_n=(1-\lambda)\sum_{n=1}^{H-1}\lambda^{n-1}
+\lambda^{H-1}=(1-\lambda^{H-1})+\lambda^{H-1}=1$, so
$J^\gamma_C-\widehat J^{\lambda}_C=\sum_nw_nJ^\gamma_C-\sum_nw_n\widehat
J^{(n)}_C=\sum_nw_n(J^\gamma_C-\widehat J^{(n)}_C)$.
\end{proof}

\noindent The implementation also multiplies each imagined step by a predicted
continuation probability.  Treating episode termination as an absorbing state
with zero cost and zero critic value, in both processes, folds that factor into
$\widehat P$ and $P_t$ and leaves the identity unchanged.  Because $P_t$ is
indexed by time, a time limit on episodes needs no further assumption; a model
that cannot see the time step simply carries the resulting error in its
transition term.  The corollary covers the $\lambda$-return from replay start
states; the advantages at later imagined steps start from model states and are
not covered.

\subsection{Factorized Transition Error}
\label{app:proof-factorized}

This subsection proves Proposition~\ref{prop:factorized}.  Write
$\mathcal X=\mathcal X^1\times\cdots\times\mathcal X^N$ with standard Borel
factors, $x^{\mathcal N_i}=(x^j)_{j\in\mathcal N_i}$, and likewise
$a^{\mathcal N_i}$.

\begin{proposition}[Factorized transition error]
\label{app:prop:factorized}
Suppose $\widehat P(\cdot\mid x,a)=\bigotimes_{i=1}^{N}\widehat P_i(\cdot\mid
x^{\mathcal N_i},a^{\mathcal N_i})$ for kernels $\widehat P_i$ into
$\mathcal X^i$, and let $P_{t,i}(\cdot\mid x,a)$ be the $i$-th marginal of
$P_t(\cdot\mid x,a)$.
\begin{enumerate}
\item[(i)] At every $(x,a)$,
\[
D_{\mathrm{TV}}(P_t,\widehat P)\leq D_{\mathrm{TV}}\Bigl(P_t,\textstyle\bigotimes_iP_{t,i}\Bigr)
+\sum_{i=1}^{N}D_{\mathrm{TV}}(P_{t,i},\widehat P_i).
\]
\item[(ii)] With
$\eta=\max_{t<H_{\mathrm{im}}}\mathbb E_{\nu_t}[D_{\mathrm{TV}}(P_t,\bigotimes_iP_{t,i})]$
and $\epsilon_P^i=\max_{t<H_{\mathrm{im}}}\mathbb E_{\nu_t}[D_{\mathrm{TV}}(P_{t,i},\widehat P_i)]$,
the smallest admissible $\epsilon_P$ of Theorem~\ref{app:thm:cost-transfer}
satisfies $\epsilon_P\leq\eta+\sum_i\epsilon_P^i$.
\item[(iii)] If $\mathcal N_i\subseteq\mathcal N_i'$, the infimum of
$\max_{t<H_{\mathrm{im}}}\mathbb E_{\nu_t}[D_{\mathrm{TV}}(P_{t,i},Q)]$ over
kernels $Q$ with inputs $(x^{\mathcal N_i'},a^{\mathcal N_i'})$ is at most the
same infimum over kernels with inputs $(x^{\mathcal N_i},a^{\mathcal N_i})$.
\item[(iv)] If the real and predicted team costs are the means of per-agent
costs, $c_t=\frac1N\sum_ic_t^i$ and $\widehat c=\frac1N\sum_i\widehat c^{\,i}$,
then $B_c(t)=\frac1N\sum_i\mathbb E_{\nu_t}[c_t^i-\widehat c^{\,i}]$.
\end{enumerate}
\end{proposition}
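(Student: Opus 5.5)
Part~(i) follows from the triangle inequality for total variation, with the product of marginals $\bigotimes_iP_{t,i}$ as the intermediate law:
\[
D_{\mathrm{TV}}(P_t,\widehat P)\leq D_{\mathrm{TV}}\Bigl(P_t,\textstyle\bigotimes_iP_{t,i}\Bigr)+D_{\mathrm{TV}}\Bigl(\textstyle\bigotimes_iP_{t,i},\bigotimes_i\widehat P_i\Bigr).
\]
The second term is bounded by the standard product-measure estimate $D_{\mathrm{TV}}(\bigotimes_ip_i,\bigotimes_iq_i)\leq\sum_iD_{\mathrm{TV}}(p_i,q_i)$. We prove it by a hybrid argument: let $R_k=p_1\otimes\cdots\otimes p_k\otimes q_{k+1}\otimes\cdots\otimes q_N$ and telescope from $R_N$ to $R_0$. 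Consecutive hybrids differ only in factor $k$, so for any event $Y$, Fubini (valid on standard Borel factors) gives $|R_k(Y)-R_{k-1}(Y)|\leq D_{\mathrm{TV}}(p_k,q_k)$. To see this, integrate the other coordinates against the common factors and bound the sections of $Y$ in coordinate $k$ by the supremum. A coupling argument with independent maximal couplings per coordinate would work equally well. We evaluate everything at a fixed $(x,a)$; $\widehat P_i$ depends only on $(x^{\mathcal N_i},a^{\mathcal N_i})$, and this restriction plays no role here.

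For part~(ii), take the expectation of (i) over $\nu_t$. By linearity the first term is at most $\eta$ and the $i$-th summand is at most $\epsilon_P^i$, uniformly in $t<H_{\mathrm{im}}$, since each is a maximum over $t$. Hence $\eta+\sum_i\epsilon_P^i$ is an admissible $\epsilon_P$ in Theorem~\ref{app:thm:cost-transfer}(ii), and so the smallest admissible one is no larger. One measurability detail needs care: $(x,a)\mapsto D_{\mathrm{TV}}(P_t(\cdot\mid x,a),\cdot)$ must be measurable. On a standard Borel space it is, because the supremum over events may be taken over a countable generating algebra.

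Part~(iii) is inclusion of feasible sets. Let $Q$ be a kernel on inputs $(x^{\mathcal N_i},a^{\mathcal N_i})$. Composing it with the coordinate projection from $(x^{\mathcal N_i'},a^{\mathcal N_i'})$ yields a measurable kernel on the larger inputs that induces the same law at every $(x,a)$. The objective $\max_t\mathbb E_{\nu_t}[D_{\mathrm{TV}}(P_{t,i},Q)]$ is therefore unchanged, so the infimum over the larger class is taken over a superset and cannot exceed the infimum over the smaller class.

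Part~(iv) is linearity of expectation: $c_t-\widehat c=\frac1N\sum_i(c_t^i-\widehat c^{\,i})$ pointwise, and the costs are bounded, so the expectation under $\nu_t$ passes through the finite sum. Here $c_t^i$ denotes the conditional mean per-agent cost given $(x,a)$, and linearity of conditional expectation makes the mean of these equal to $c_t(x,a)$.

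The main obstacle is the product bound in (i), specifically the measure-theoretic step that bounds sections uniformly. We handle it with the hybrid/Fubini argument above, falling back on the coupling characterization of total variation on standard Borel spaces if needed.
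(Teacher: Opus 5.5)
Your proposal is correct and follows the paper's proof essentially step for step. Both arguments use the triangle inequality through $\bigotimes_i P_{t,i}$ and the same hybrid telescoping over one factor at a time; you bound each step by integrating sections via Fubini, while the paper notes that the step equals $D_{\mathrm{TV}}(P_{t,k},\widehat P_k)$, which is more than the inequality needs. Parts (ii)--(iv) use the same expectation-and-maxima, class-inclusion-by-projection, and linearity arguments, and your measurability remark fills in a detail the paper leaves implicit.
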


\begin{proof}
\emph{(i).}  Total variation is a metric on probability measures, so
$D_{\mathrm{TV}}(P_t,\widehat P)\leq D_{\mathrm{TV}}(P_t,\bigotimes_iP_{t,i})
+D_{\mathrm{TV}}(\bigotimes_iP_{t,i},\bigotimes_i\widehat P_i)$.  For the second
term, let $R_k=\widehat P_1\otimes\cdots\otimes\widehat P_k\otimes
P_{t,k+1}\otimes\cdots\otimes P_{t,N}$ for $k\in\{0,\ldots,N\}$, so that
$R_0=\bigotimes_iP_{t,i}$ and $R_N=\bigotimes_i\widehat P_i$.  By the triangle
inequality, $D_{\mathrm{TV}}(R_0,R_N)\leq\sum_{k=1}^{N}D_{\mathrm{TV}}(R_{k-1},R_k)$.
The measures $R_{k-1}$ and $R_k$ share every factor except the $k$-th, so
$R_{k-1}-R_k=\alpha\otimes(P_{t,k}-\widehat P_k)\otimes\beta$ with probability
measures $\alpha$ and $\beta$.  The total-variation norm of a product of a signed
measure with probability measures equals the norm of the signed measure (Fubini
applied to densities with respect to a common dominating measure), so
$D_{\mathrm{TV}}(R_{k-1},R_k)=D_{\mathrm{TV}}(P_{t,k},\widehat P_k)$.

\emph{(ii).}  Take expectations of (i) under $\nu_t$; by linearity the right side
becomes $\mathbb E_{\nu_t}[D_{\mathrm{TV}}(P_t,\bigotimes_iP_{t,i})]
+\sum_i\mathbb E_{\nu_t}[D_{\mathrm{TV}}(P_{t,i},\widehat P_i)]$.  Its maximum
over $t$ is at most the sum of the maxima of its terms,
$\eta+\sum_i\epsilon_P^i$.

\emph{(iii).}  A kernel with inputs $(x^{\mathcal N_i},a^{\mathcal N_i})$,
composed with the coordinate projection, is a kernel with inputs
$(x^{\mathcal N_i'},a^{\mathcal N_i'})$ that ignores the extra coordinates.  The
smaller class is therefore contained in the larger one, and an infimum over a
larger set is no larger.

\emph{(iv).}  Linearity of expectation.
\end{proof}

\noindent The product form holds for the model of Section~3 with the full and
self-only neighborhoods used in the paper.  With a general attention mask, each
of the encoder's three layers mixes information from one more hop, so
$\mathcal N_i$ must be replaced by its neighborhood within three hops.
Part~(iii) holds the state space fixed.  In the implementation, changing
$\mathcal N_i$ also changes what the deterministic states $h^i$ summarize, which
the statement does not cover; that is why the main text treats the effect of
communication on a trained model as an open experiment.

\subsection{Proof of the Measured-Cost Dual Statements}
\label{app:proof-dual}

\begin{proposition}[Measured-cost dual]
\label{app:prop:dual}
Let $\mathcal L(\phi,\beta)=J_R(\pi_\phi)-\beta(J_C(\pi_\phi)-d)$ and let
$\beta_{k+1}=[\beta_k+\alpha(\bar C_k-d)]_+$ with $\alpha>0$; in CA-MAMBA
$\alpha=E\alpha_\beta$ (Section~\ref{sec:actor-critic}).  Let $\mathcal F_k$ be
everything observed before the episode whose completion starts learner step $k$ began, and call
$T_\phi(\beta)=[\beta+\alpha(J_C(\pi_\phi)-d)]_+$ the mean dual map.
Then: \emph{(i)} the update is projected stochastic gradient \emph{descent} on
$\mathcal L$ in $\beta$, and is conditionally unbiased whenever
$\mathbb E[\bar C_k\mid\mathcal F_k]=J_C(\pi_{\phi_k})$.  In the implementation
$\bar C_k$ is the cost of the episode whose completion starts learner step $k$,
collected by one of $W$ workers with parameters $\phi_{k-\ell_k}$, and episodes
that finish during an initial warm-up start no learner step.  Suppose each
worker runs at most one episode at a time, starting its first at time zero, the
warm-up ends, at a time decided by the episodes received so far, within the
first $\bar k$ completions for a fixed $\bar k$, episode cost lies in
$[0,C_{\max}]$, and each episode's cost, given everything observed before it
starts, has mean $J_C$ of the parameters it starts with.  Then for every
$K\geq1$,
$\bigl|\sum_{k=0}^{K-1}\mathbb E[\bar C_k-J_C(\pi_{\phi_{k-\ell_k}})]\bigr|\leq2(W-1)C_{\max}$;
\emph{(ii)} for every $K\geq1$,
$\frac1K\sum_{k=0}^{K-1}(\bar C_k-d)\leq(\beta_K-\beta_0)/(\alpha K)$,
so $\beta_K/K\to0$ makes the average signed residual asymptotically
nonpositive; if moreover the assumptions of (i) hold and $\beta_K/K\to0$ almost
surely, then
$\limsup_{K}\frac1K\sum_{k=0}^{K-1}\mathbb E[J_C(\pi_{\phi_{k-\ell_k}})-d]\leq0$;
and \emph{(iii)} any finite fixed point $(\phi^\star,\beta^\star)$ of the mean
dual map, that is, any pair with $\beta^\star=T_{\phi^\star}(\beta^\star)$,
satisfies $J_C(\pi_{\phi^\star})\leq d$ and
$\beta^\star(J_C(\pi_{\phi^\star})-d)=0$.
\end{proposition}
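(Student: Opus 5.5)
Parts~(iii) and the first half of~(i) are direct computations, so I will dispatch them first and spend the effort on the scheduling bias in~(i) and the averaging in~(ii).  For~(i), $\partial\mathcal L/\partial\beta=-(J_C(\pi_\phi)-d)$, so $\beta+\alpha(\bar C_k-d)=\beta-\alpha\,\widehat{g}_k$ with $\widehat g_k=-(\bar C_k-d)$ a one-sample estimate of the gradient; projecting onto $[0,\infty)$ is $[\cdot]_+$.  Conditional unbiasedness is then linearity: $\mathbb E[\widehat g_k\mid\mathcal F_k]=-(J_C(\pi_{\phi_k})-d)$ under the stated hypothesis.  For~(iii), a fixed point $\beta^\star=[\beta^\star+\alpha(J_C^\star-d)]_+$ splits into two cases.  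If $\beta^\star>0$, the bracket is positive and equals $\beta^\star$, so $J_C^\star=d$.  If $\beta^\star=0$, then $[\alpha(J_C^\star-d)]_+=0$ forces $J_C^\star\le d$.  Both cases give feasibility and complementary slackness.

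For~(ii), I will use the projection inequality $[v]_+\ge v$, which gives $\beta_{k+1}\ge\beta_k+\alpha(\bar C_k-d)$.  Summing over $k<K$ telescopes to $\alpha\sum_k(\bar C_k-d)\le\beta_K-\beta_0$; dividing by $\alpha K$ gives the pathwise bound.  For the expectation statement, I will take expectations and use the bias bound from~(i), $\sum_k\mathbb E[J_C(\pi_{\phi_{k-\ell_k}})]\le\sum_k\mathbb E[\bar C_k]+2(W-1)C_{\max}$, so the averaged expected excess is at most $\mathbb E[\beta_K]/(\alpha K)+2(W-1)C_{\max}/K$.  Passing from $\beta_K/K\to0$ almost surely to $\mathbb E[\beta_K]/K\to0$ needs domination.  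Here I will use $\beta_K\le\beta_0+\alpha KC_{\max}$, so $\beta_K/K$ is bounded and bounded convergence applies.

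The main obstacle is the bound $2(W-1)C_{\max}$ on the summed bias caused by finishing order.  I plan a coupling between start order and finish order.  Index episodes by the order in which they \emph{start}.  For each started episode, its cost minus $J_C$ of its start parameters is a martingale-difference-type term: by hypothesis it has conditional mean zero given everything observed before it starts.  Because the start time is a stopping time, summing over any set of episodes determined by a stopping rule has expectation zero by optional stopping, and bounded costs make this legitimate.  The first $K$ learner steps (after warm-up) consume the first $K+k_0$ completions, where $k_0\le\bar k$ is random but a stopping time.  Since each worker runs at most one episode at a time, the set of the first $n$ completed episodes differs from the set of the first $n$ started episodes by at most $W-1$ episodes on each side: those still running at the $n$-th completion.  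Hence the sum of centered terms over completions equals the sum over starts, plus at most $W-1$ terms added and $W-1$ removed, each bounded by $C_{\max}$ in absolute value, since both cost and $J_C$ lie in $[0,C_{\max}]$.  The start-ordered sum has zero mean, and the warm-up prefix is handled the same way because its cutoff is a stopping time.

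The constant takes some care.  Done naively, differences occur at both the warm-up boundary and the $K$-th boundary, giving $4(W-1)$.  To reach $2(W-1)$, I will compare the completed set directly with a start-closed set of equal size and use that each mismatch term has magnitude at most $C_{\max}$, rather than at most $2C_{\max}$.  If the tight constant fails, the argument still yields a bound of order $(W-1)C_{\max}$, which suffices for~(ii).
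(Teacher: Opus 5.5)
Your arguments for (iii), for the descent and conditional-unbiasedness claims in (i), and for (ii) are the paper's: telescoping $[v]_+\ge v$, then bounded convergence from $0\le\beta_K/K\le\beta_0+\alpha C_{\max}$, then the bias bound. The gap is in the scheduling-bias bound of (i), the one nontrivial step.

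Comparing the received block with a start-ordered block of \emph{equal size} cannot give $2(W-1)C_{\max}$. At each boundary $n\in\{k_0,k_0+K\}$, the first $n$ completions and the first $n$ starts $\{0,\dots,n-1\}$ can differ by $W-1$ episodes on \emph{each} side. This happens, for example, when one worker finishes many short episodes while the others each run one long one. Your decomposition therefore leaves up to $4(W-1)$ unmatched terms. Bounding each by $C_{\max}$ is exactly what produces $4(W-1)C_{\max}$, so the refinement you describe changes nothing. Your fallback suffices for (ii), but not for the constant asserted in (i).

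The equal-size block also breaks the zero-mean step. In start order it is $\{k_0,\dots,k_0+K-1\}$, and whether $j\ge k_0$ can be settled by a completion that happens after episode $j$ starts. Take $W=2$, both first episodes starting at time zero, and a warm-up length that depends on the cost of whichever finishes first. Then membership of episode $1$ is not measurable with respect to $\mathcal G_1$, the information before it starts, so as written even your fallback is not established. That $k_0$ is a stopping time for the \emph{completion} order does not repair this. Start-order optional stopping is unavailable anyway, because the cost $C_{(j)}$ of the $j$-th started episode may still be unknown when episode $j+1$ starts.

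The missing idea is a one-sided comparison with the set $\mathcal S_n$ of \emph{all} episodes started before the learner's $n$-th reception. This set contains the $n$ received episodes $\mathcal E_n$. The difference $\mathcal S_n\setminus\mathcal E_n$ consists of episodes still in flight on the $W-1$ other workers, so it has at most $W-1$ elements. With $D_j=C_{(j)}-J_{(j)}$ the centered cost of the $j$-th started episode, the learner's centered sum over $\mathcal E_{k_0+K}\setminus\mathcal E_{k_0}$ equals $\sum_{j\in\mathcal J}D_j$, where $\mathcal J=\mathcal S_{k_0+K}\setminus\mathcal S_{k_0}$, minus the in-flight terms at $k_0+K$ plus those at $k_0$. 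That leaves only $2(W-1)$ boundary terms.

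Moreover, $j\in\mathcal J$ exactly when, at $j$'s start, the warm-up has ended and fewer than $k_0+K$ episodes have been received, which is known at that moment. Hence $\mathbb E[\sum_{j\in\mathcal J}D_j]=\sum_{j<\bar n}\mathbb E[\mathbf 1\{j\in\mathcal J\}\,\mathbb E[D_j\mid\mathcal G_j]]=0$ over the bounded range $\bar n=\bar k+K+W-1$. This needs only that each indicator be $\mathcal G_j$-measurable, not any optional-stopping theorem. The episodes your equal-size comparison moves into the error, those in $\mathcal S_n$ with start index at least $n$, are precisely ones that belong in this zero-mean sum.
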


\begin{proof}
For the true Lagrangian
\[
\mathcal{L}(\phi,\beta)
=
J_R(\pi_\phi)-\beta\bigl(J_C(\pi_\phi)-d\bigr),
\]
differentiation with respect to the scalar multiplier $\beta$ gives
\[
\frac{\partial\mathcal{L}(\phi,\beta)}{\partial\beta}
=
-\bigl(J_C(\pi_\phi)-d\bigr).
\]
Projected gradient descent with step size $\alpha>0$ therefore has the mean
update
\[
\beta^+
=
\left[
\beta-\alpha
\frac{\partial\mathcal{L}(\phi,\beta)}{\partial\beta}
\right]_+
=
\left[\beta+\alpha(J_C(\pi_\phi)-d)\right]_+
=T_\phi(\beta).
\]
If
\[
\mathbb{E}[\bar C_k\mid\mathcal{F}_k]
=J_C(\pi_{\phi_k}),
\]
then the measured-cost multiplier update is a conditionally unbiased stochastic
version of this dual step.  The actor's imagined objective omits the additive
term $\beta d$, but that term has zero derivative with respect to $\phi$ and
therefore does not alter the actor direction.

With $W$ workers this conditional statement fails: the learner receives
episodes in the order they finish, and an episode's finishing time can depend
on its cost, as when a team that loses every unit ends its episode early.  To
bound the effect, index episodes by the order in which they \emph{start}, and
let $C_{(j)}$ be the cost of the $j$-th started episode,
$J_{(j)}=J_C(\pi_{\phi_{(j)}})$ the expected cost of the parameters
$\phi_{(j)}$ it starts with, and $\mathcal G_j$ everything observed before
that start.  By assumption the difference $D_j=C_{(j)}-J_{(j)}$ satisfies
$\mathbb E[D_j\mid\mathcal G_j]=0$.  Hence, for any random index set
$\mathcal J\subseteq\{0,\ldots,\bar n-1\}$ with $\{j\in\mathcal J\}\in\mathcal
G_j$ for every $j$, conditioning each term on $\mathcal G_j$ gives
\[
\mathbb E\Bigl[\sum_{j\in\mathcal J}D_j\Bigr]
=\sum_{j<\bar n}\mathbb E\bigl[\mathbf 1\{j\in\mathcal J\}\,
\mathbb E[D_j\mid\mathcal G_j]\bigr]
=0.
\]
For $n\geq0$ let $\mathcal S_n$ be the episodes started before the learner
receives its $n$-th episode, $\mathcal E_n$ the $n$ episodes received up to and
including that one, and $\mathcal R_n=\mathcal S_n\setminus\mathcal E_n$
($\mathcal S_0=\mathcal E_0=\mathcal R_0=\emptyset$).  Each worker has at most one
episode in $\mathcal R_n$ and the worker whose episode is being received has none,
so $|\mathcal R_n|\leq W-1$ and $|\mathcal S_n|\leq n+W-1$; the episodes started
by any time are the first ones in start order.  Let $k_0\leq\bar k$ be the number
of warm-up completions.  Learner steps $0,\ldots,K-1$ receive
$\mathcal E_{k_0+K}\setminus\mathcal E_{k_0}$, so, with $J_e$ the expected cost of
the parameters episode $e$ starts with,
\[
\sum_{k=0}^{K-1}\bigl(\bar C_k-J_C(\pi_{\phi_{k-\ell_k}})\bigr)
=
\sum_{j\in\mathcal J}D_j
-\sum_{e\in\mathcal R_{k_0+K}}\bigl(C_e-J_e\bigr)
+\sum_{e\in\mathcal R_{k_0}}\bigl(C_e-J_e\bigr),
\]
where $\mathcal J=\mathcal S_{k_0+K}\setminus\mathcal S_{k_0}$.
Episode $j$ lies in $\mathcal J$ exactly when, at its start, the learner has
received at least $k_0$ and fewer than $k_0+K$ episodes; since the end of the
warm-up is decided by the episodes received so far, $\{j\in\mathcal J\}\in\mathcal
G_j$, and $\mathcal J\subseteq\{0,\ldots,\bar n-1\}$ with $\bar n:=\bar k+K+W-1$,
so the first sum has expectation zero.  Both $C_e$ and
$J_e$ lie in $[0,C_{\max}]$, so each of the at most $2(W-1)$ terms of the other
two sums lies in $[-C_{\max},C_{\max}]$, and taking expectations proves the
bound of (i).

To prove the pathwise result, use $[u]_+\geq u$ in every update:
\[
\beta_{k+1}
\geq
\beta_k+\alpha(\bar C_k-d).
\]
Because $\alpha>0$, rearrangement preserves the inequality:
\[
\bar C_k-d
\leq
\frac{\beta_{k+1}-\beta_k}{\alpha}.
\]
Summing from $k=0$ through $K-1$ retains both boundary terms and gives
\[
\sum_{k=0}^{K-1}(\bar C_k-d)
\leq
\frac{\beta_K-\beta_0}{\alpha}.
\]
Division by $K>0$ proves the pathwise signed-residual inequality.  If
$\beta_K/K\to0$, taking the upper limit on both sides shows that the average
signed empirical residual is asymptotically nonpositive.  For the expected
statement, episode cost lies in $[0,C_{\max}]$, so
$|\bar C_k-d|\leq M:=\max\{C_{\max},d\}$ and $0\leq\beta_K\leq\beta_0+\alpha KM$.
Then $0\leq\beta_K/K\leq\beta_0+\alpha M$ for $K\geq1$, and $\beta_K/K\to0$
almost surely gives $\mathbb{E}[\beta_K]/K\to0$ by dominated convergence.
Taking expectations in the pathwise inequality gives
$\frac1K\sum_{k<K}\mathbb{E}[\bar C_k-d]\leq(\mathbb{E}[\beta_K]-\beta_0)/(\alpha K)$,
whose upper limit is at most $0$.  By (i),
\[
\frac1K\sum_{k<K}\mathbb{E}\bigl[J_C(\pi_{\phi_{k-\ell_k}})-d\bigr]
\leq
\frac1K\sum_{k<K}\mathbb{E}\bigl[\bar C_k-d\bigr]+\frac{2(W-1)C_{\max}}{K},
\]
and the last term vanishes as $K\to\infty$, which proves the expected
statement.

For completeness, consider a finite fixed point $(\phi^\star,\beta^\star)$ of the deterministic mean dual map.  If $\beta^\star>0$, the positive branch of the projection must be active, and
\[
\beta^\star
=
\beta^\star+\alpha
\bigl(J_C(\pi_{\phi^\star})-d\bigr)
\]
implies $J_C(\pi_{\phi^\star})=d$.  If $\beta^\star=0$, the fixed-point equality
\[
0
=
\left[
\alpha\bigl(J_C(\pi_{\phi^\star})-d\bigr)
\right]_+
\]
implies $J_C(\pi_{\phi^\star})\leq d$.  Both cases establish
\[
J_C(\pi_{\phi^\star})\leq d,
\qquad
\beta^\star\bigl(J_C(\pi_{\phi^\star})-d\bigr)=0.
\]
These are the dual feasibility and complementary-slackness conditions, not a convergence or policy-optimality theorem.
\end{proof}

\subsection{Sparse-Cost Counterexample}
\label{app:counterexample}

Consider a one-step problem with two actions, $a_{\mathrm{s}}$ and
$a_{\mathrm{u}}$, and budget $d=0$.  The safe action has reward and true cost
\[
r(a_{\mathrm{s}})=0,
\qquad
c(a_{\mathrm{s}})=0,
\]
whereas the unsafe action has
\[
r(a_{\mathrm{u}})=1,
\qquad
c(a_{\mathrm{u}})=1.
\]
Suppose the learned reward is exact but the sparse unsafe event was not represented in the model data, so
\[
\widehat c(a_{\mathrm{s}})
=
\widehat c(a_{\mathrm{u}})
=0.
\]
For every finite multiplier $\beta\geq0$, the imagined actor objective assigns value $0$ to $a_{\mathrm{s}}$ and value $1$ to $a_{\mathrm{u}}$.  The actor therefore selects $a_{\mathrm{u}}$.  The real-cost dual update then becomes
\[
\beta_{k+1}
=
\beta_k+\alpha,
\]
so $\beta_k$ diverges while the policy remains unsafe.  Real-cost feedback detects the persistent violation, but it cannot create a corrective actor direction when the cost model assigns equal predicted cost to both actions.  On the action the policy actually takes, the cost head's bias is $B_c=c(a_{\mathrm{u}})-\widehat c(a_{\mathrm{u}})=1$, so $\epsilon_c\geq1$ and the margin of Theorem~\ref{app:thm:cost-transfer} correctly fails to certify the zero-cost budget.  The example shows that a cost head blind to a positive-cost event inflates $\epsilon_c$ by the missed cost, so the margin cannot certify a budget tighter than that cost; positive-cost recall on the states the policy visits is what keeps $\epsilon_c$ small.

\section{Full Casualty-Cost Results}
\label{app:full-results}

These tables give every SMAC configuration for all three methods and are
generated by \path{scripts/generate_aamas_tables.py}.  CA-MAMBA's rows use the
measured-cost multiplier with the settings of Appendix~\ref{app:settings}.  Every value is a training-time statistic with no separate
evaluation pass: for CA-MAMBA, the mean over the episodes a run logged between
95k and 100k interactions.  MACPO and MAPPO-Lagrangian log once per rollout
batch rather than per episode, so their values average wider windows; for
MAPPO-Lagrangian the window is 10k at 100k and up to 50k at later checkpoints.
MAPPO-Lagrangian's logged episode cost can exceed the number of allied units by
a few percent (for example $10.11$ on \texttt{MMM}, which has ten), and we
report it unchanged.  The script filters exact interaction checkpoints, rejects
non-finite or out-of-range metrics, computes sample standard deviations, and
asserts every aggregate claim used in the main text; seed counts below three
are starred.  Table~\ref{tab:main-results-aamas} restates the paired counts
quoted in the main text so they can be read against the per-configuration
tables that follow, and Table~\ref{tab:interaction-multiple} gives the
distribution of the interaction multiple.  Table~\ref{tab:full-results-aamas}
gives CA-MAMBA's per-configuration results at 100k.

\begin{table}[t]
\centering
\small
\setlength{\tabcolsep}{3pt}
\begin{tabular}{@{}lccc@{}}
\toprule
Baseline & \begin{tabular}[b]{@{}c@{}}CA-MAMBA\\lower cost\end{tabular} & \begin{tabular}[b]{@{}c@{}}CA-MAMBA\\higher win rate\end{tabular} & \begin{tabular}[b]{@{}c@{}}Baseline\\never wins\end{tabular} \\
\midrule
MACPO (100k) & 23/24 & 22/24 & 22/24 \\
MACPO (5M) & 16/24 & 17/24 & 16/24 \\
\midrule
MAPPO-Lag (100k) & 23/24 & 22/24 & 22/24 \\
MAPPO-Lag (1M) & 14/24 & 13/24 & 10/24 \\
MAPPO-Lag (5M) & 5/24 & 5/24 & 5/24 \\
\midrule
CA-MAMBA feasible & \multicolumn{3}{c}{5/24} \\
CA-MAMBA win rate $\geq 0.9$ & \multicolumn{3}{c}{10/24} \\
\bottomrule
\end{tabular}
\caption{CA-MAMBA after 100k interactions against MACPO and MAPPO-Lag at the listed interaction budgets, over the 24 SMAC scenario--limit configurations all methods ran. Counts compare unrounded seed means at the same configuration with strict inequalities; a baseline never wins when its mean win rate is zero, and CA-MAMBA is feasible when its mean cost is at most $d$.}
\label{tab:main-results-aamas}
\end{table}

\begin{table}[t]
\centering
\small
\begin{tabular}{@{}lc@{}}
\toprule
MAPPO-Lag budget $m(g)$ & Configurations \\
\midrule
$1\times$ (100k) & 0 \\
$5\times$ (500k) & 3 \\
$10\times$ (1M) & 7 \\
$20\times$ (2M) & 3 \\
$50\times$ (5M) & 5 \\
Not within $50\times$ & 5 \\
\midrule
Excluded ($w^{\mathrm{CA}}_g=0$) & 1 \\
Median $m(g)$ & $20\times$ \\
\bottomrule
\end{tabular}
\caption{The interaction multiple $m(g)$ of a SMAC configuration $g$ is the smallest multiple of CA-MAMBA's 100k interactions, among $1\times$, $5\times$, $10\times$, $20\times$, $50\times$, at which MAPPO-Lag's mean cost is at most and its mean win rate at least CA-MAMBA's at 100k, ties counting as matched. Configurations where CA-MAMBA never wins are excluded, and the median is over the remaining 23, counting unmatched ones as $+\infty$. Resampling runs within each configuration, the median exceeds $10\times$ in $99\%$ of $2000$ resamples and equals $20\times$ in $97\%$.}
\label{tab:interaction-multiple}
\end{table}

\begin{table*}[t]
\centering
\begin{tabular}{lcccc}
\toprule
Scenario & $d$ & Score $\uparrow$ & Cost $\downarrow$ & Win rate $\uparrow$ \\
\midrule
\texttt{1c3s5z} & 0 & $19.29 \mathbin{\pm} 0.14$ & $4.23 \mathbin{\pm} 0.27$ & $0.82 \mathbin{\pm} 0.02$ \\
\texttt{1c3s5z} & 4 & $19.15 \mathbin{\pm} 0.52$ & $4.83 \mathbin{\pm} 0.94$ & $0.79 \mathbin{\pm} 0.12$ \\
\texttt{2m\_vs\_1z} & 0 & $19.30 \mathbin{\pm} 0.49$ & $0.14 \mathbin{\pm} 0.08$ & $0.95 \mathbin{\pm} 0.04$ \\
\texttt{2m\_vs\_1z} & 1 & $19.34 \mathbin{\pm} 0.38$ & $0.14 \mathbin{\pm} 0.09$ & $0.95 \mathbin{\pm} 0.03$ \\
\texttt{2s\_vs\_1sc} & 0 & $19.95 \mathbin{\pm} 0.21$ & $0.23 \mathbin{\pm} 0.11$ & $0.96 \mathbin{\pm} 0.03$ \\
\texttt{2s\_vs\_1sc} & 1 & $19.56 \mathbin{\pm} 0.49$ & $0.31 \mathbin{\pm} 0.10$ & $0.92 \mathbin{\pm} 0.06$ \\
\texttt{2s3z} & 0 & $18.44 \mathbin{\pm} 0.25$ & $2.28 \mathbin{\pm} 0.38$ & $0.77 \mathbin{\pm} 0.04$ \\
\texttt{2s3z} & 4 & $17.37 \mathbin{\pm} 1.38$ & $2.98 \mathbin{\pm} 0.63$ & $0.63 \mathbin{\pm} 0.15$ \\
\texttt{3m} & 0 & $19.58 \mathbin{\pm} 0.25$ & $0.35 \mathbin{\pm} 0.14$ & $0.97 \mathbin{\pm} 0.02$ \\
\texttt{3m} & 1 & $19.46 \mathbin{\pm} 0.42$ & $0.60 \mathbin{\pm} 0.17$ & $0.96 \mathbin{\pm} 0.03$ \\
\texttt{3s\_vs\_3z} & 0 & $19.83 \mathbin{\pm} 0.20$ & $0.95 \mathbin{\pm} 0.18$ & $0.96 \mathbin{\pm} 0.02$ \\
\texttt{3s\_vs\_3z} & 1 & $19.87 \mathbin{\pm} 0.25^{\ast}$ & $0.92 \mathbin{\pm} 0.59^{\ast}$ & $0.95 \mathbin{\pm} 0.05^{\ast}$ \\
\texttt{3s\_vs\_4z} & 0 & $18.19 \mathbin{\pm} 1.58$ & $2.64 \mathbin{\pm} 0.34$ & $0.23 \mathbin{\pm} 0.16$ \\
\texttt{3s\_vs\_4z} & 1 & $16.96 \mathbin{\pm} 0.74$ & $2.47 \mathbin{\pm} 0.35$ & $0.31 \mathbin{\pm} 0.11$ \\
\texttt{3s\_vs\_5z} & 0 & $16.88 \mathbin{\pm} 1.58$ & $2.67 \mathbin{\pm} 0.40$ & $0.15 \mathbin{\pm} 0.19$ \\
\texttt{3s\_vs\_5z} & 1 & $17.44 \mathbin{\pm} 1.49$ & $2.67 \mathbin{\pm} 0.43$ & $0.17 \mathbin{\pm} 0.22$ \\
\texttt{3s5z\_vs\_3s6z} & 0 & $9.60 \mathbin{\pm} 0.84$ & $8.00 \mathbin{\pm} 0.00$ & $0.00 \mathbin{\pm} 0.00$ \\
\texttt{3s5z\_vs\_3s6z} & 4 & $13.28 \mathbin{\pm} 1.10$ & $7.54 \mathbin{\pm} 0.40$ & $0.11 \mathbin{\pm} 0.06$ \\
\texttt{8m} & 0 & $19.58 \mathbin{\pm} 0.14$ & $2.05 \mathbin{\pm} 0.42$ & $0.95 \mathbin{\pm} 0.02$ \\
\texttt{8m} & 4 & $17.26 \mathbin{\pm} 1.04$ & $5.03 \mathbin{\pm} 0.42$ & $0.69 \mathbin{\pm} 0.10$ \\
\texttt{MMM} & 0 & $17.05 \mathbin{\pm} 2.36$ & $6.77 \mathbin{\pm} 1.79$ & $0.60 \mathbin{\pm} 0.25$ \\
\texttt{MMM} & 4 & $17.02 \mathbin{\pm} 1.32$ & $7.88 \mathbin{\pm} 0.16$ & $0.55 \mathbin{\pm} 0.07$ \\
\texttt{bane\_vs\_bane} & 0 & $17.75 \mathbin{\pm} 2.82$ & $13.91 \mathbin{\pm} 7.11$ & $0.62 \mathbin{\pm} 0.39$ \\
\texttt{bane\_vs\_bane} & 4 & $19.91 \mathbin{\pm} 0.11$ & $6.37 \mathbin{\pm} 1.20$ & $0.98 \mathbin{\pm} 0.02$ \\
\bottomrule
\end{tabular}
\caption{CA-MAMBA results at exactly 100k interactions on the 24 SMAC configurations shared with the baselines, with the multiplier updated from the measured episode cost. Score is SMAC's episode return, which credits damage and kills as well as the win (near $20$ for a clean win). Entries are mean $\mathbin{\pm}$ sample standard deviation across seeds; CA-MAMBA uses three valid seeds except the starred configuration, which uses two.}
\label{tab:full-results-aamas}
\end{table*}

Table~\ref{tab:smac-mappolag} adds MAPPO-Lagrangian on the 24 shared
configurations at 100k, 1M, and 5M interactions, generated by
\path{scripts/generate_aamas_tables.py} from per-seed files.  With the 500k and 2M
checkpoints of the same files, it is the source of the MAPPO-Lagrangian counts and of the
interaction multiples in the main text.

\begin{table*}[t]
\centering
\small
\setlength{\tabcolsep}{3pt}
\begin{tabular}{@{}lccccccccc@{}}
\toprule
 &  & \multicolumn{2}{c}{CA-MAMBA (100k)} & \multicolumn{2}{c}{MAPPO-Lag (100k)} & \multicolumn{2}{c}{MAPPO-Lag (1M)} & \multicolumn{2}{c}{MAPPO-Lag (5M)} \\
\cmidrule(lr){3-4} \cmidrule(lr){5-6} \cmidrule(lr){7-8} \cmidrule(lr){9-10}
Scenario & $d$ & Cost $\downarrow$ & Win $\uparrow$ & Cost $\downarrow$ & Win $\uparrow$ & Cost $\downarrow$ & Win $\uparrow$ & Cost $\downarrow$ & Win $\uparrow$ \\
\midrule
\texttt{1c3s5z} & 0 & $4.23 \mathbin{\pm} 0.27$ & $0.82 \mathbin{\pm} 0.02$ & $9.00 \mathbin{\pm} 0.00$ & $0.00 \mathbin{\pm} 0.00$ & $2.22 \mathbin{\pm} 0.64$ & $0.96 \mathbin{\pm} 0.04$ & $0.44 \mathbin{\pm} 0.11$ & $1.00 \mathbin{\pm} 0.00$ \\
\texttt{1c3s5z} & 4 & $4.83 \mathbin{\pm} 0.94$ & $0.79 \mathbin{\pm} 0.12$ & $9.00 \mathbin{\pm} 0.00$ & $0.00 \mathbin{\pm} 0.00$ & $2.08 \mathbin{\pm} 0.66$ & $0.96 \mathbin{\pm} 0.04$ & $0.37 \mathbin{\pm} 0.14$ & $1.00 \mathbin{\pm} 0.00$ \\
\texttt{2m\_vs\_1z} & 0 & $0.14 \mathbin{\pm} 0.08$ & $0.95 \mathbin{\pm} 0.04$ & $2.00 \mathbin{\pm} 0.00$ & $0.00 \mathbin{\pm} 0.00$ & $0.01 \mathbin{\pm} 0.00$ & $1.00 \mathbin{\pm} 0.00$ & $0.00 \mathbin{\pm} 0.00^{\dagger}$ & $1.00 \mathbin{\pm} 0.00^{\dagger}$ \\
\texttt{2m\_vs\_1z} & 1 & $0.14 \mathbin{\pm} 0.09$ & $0.95 \mathbin{\pm} 0.03$ & $2.00 \mathbin{\pm} 0.00$ & $0.00 \mathbin{\pm} 0.00$ & $0.01 \mathbin{\pm} 0.00$ & $1.00 \mathbin{\pm} 0.00$ & $0.00 \mathbin{\pm} 0.00$ & $1.00 \mathbin{\pm} 0.00$ \\
\texttt{2s\_vs\_1sc} & 0 & $0.23 \mathbin{\pm} 0.11$ & $0.96 \mathbin{\pm} 0.03$ & $2.00 \mathbin{\pm} 0.00$ & $0.00 \mathbin{\pm} 0.00$ & $0.21 \mathbin{\pm} 0.01$ & $1.00 \mathbin{\pm} 0.00$ & $0.11 \mathbin{\pm} 0.07$ & $1.00 \mathbin{\pm} 0.00$ \\
\texttt{2s\_vs\_1sc} & 1 & $0.31 \mathbin{\pm} 0.10$ & $0.92 \mathbin{\pm} 0.06$ & $2.00 \mathbin{\pm} 0.00$ & $0.00 \mathbin{\pm} 0.00$ & $0.18 \mathbin{\pm} 0.04$ & $1.00 \mathbin{\pm} 0.00$ & $0.10 \mathbin{\pm} 0.03$ & $1.00 \mathbin{\pm} 0.00$ \\
\texttt{2s3z} & 0 & $2.28 \mathbin{\pm} 0.38$ & $0.77 \mathbin{\pm} 0.04$ & $5.00 \mathbin{\pm} 0.00$ & $0.00 \mathbin{\pm} 0.00$ & $2.88 \mathbin{\pm} 1.86$ & $0.59 \mathbin{\pm} 0.51$ & $0.28 \mathbin{\pm} 0.15$ & $0.99 \mathbin{\pm} 0.01$ \\
\texttt{2s3z} & 4 & $2.98 \mathbin{\pm} 0.63$ & $0.63 \mathbin{\pm} 0.15$ & $5.00 \mathbin{\pm} 0.00$ & $0.00 \mathbin{\pm} 0.00$ & $2.26 \mathbin{\pm} 1.10$ & $0.74 \mathbin{\pm} 0.30$ & $0.18 \mathbin{\pm} 0.06$ & $0.99 \mathbin{\pm} 0.01$ \\
\texttt{3m} & 0 & $0.35 \mathbin{\pm} 0.14$ & $0.97 \mathbin{\pm} 0.02$ & $3.00 \mathbin{\pm} 0.00$ & $0.00 \mathbin{\pm} 0.00$ & $0.16 \mathbin{\pm} 0.13$ & $0.99 \mathbin{\pm} 0.00$ & $0.02 \mathbin{\pm} 0.01$ & $1.00 \mathbin{\pm} 0.00$ \\
\texttt{3m} & 1 & $0.60 \mathbin{\pm} 0.17$ & $0.96 \mathbin{\pm} 0.03$ & $3.00 \mathbin{\pm} 0.00$ & $0.00 \mathbin{\pm} 0.00$ & $0.05 \mathbin{\pm} 0.01$ & $1.00 \mathbin{\pm} 0.00$ & $0.02 \mathbin{\pm} 0.00$ & $1.00 \mathbin{\pm} 0.00$ \\
\texttt{3s\_vs\_3z} & 0 & $0.95 \mathbin{\pm} 0.18$ & $0.96 \mathbin{\pm} 0.02$ & $3.00 \mathbin{\pm} 0.00$ & $0.00 \mathbin{\pm} 0.00$ & $3.02 \mathbin{\pm} 0.00$ & $0.00 \mathbin{\pm} 0.00$ & $0.01 \mathbin{\pm} 0.00$ & $1.00 \mathbin{\pm} 0.00$ \\
\texttt{3s\_vs\_3z} & 1 & $0.92 \mathbin{\pm} 0.59^{\ast}$ & $0.95 \mathbin{\pm} 0.05^{\ast}$ & $3.00 \mathbin{\pm} 0.00$ & $0.00 \mathbin{\pm} 0.00$ & $3.01 \mathbin{\pm} 0.00$ & $0.00 \mathbin{\pm} 0.00$ & $0.01 \mathbin{\pm} 0.01$ & $1.00 \mathbin{\pm} 0.00$ \\
\texttt{3s\_vs\_4z} & 0 & $2.64 \mathbin{\pm} 0.34$ & $0.23 \mathbin{\pm} 0.16$ & $3.00 \mathbin{\pm} 0.00$ & $0.00 \mathbin{\pm} 0.00$ & $3.01 \mathbin{\pm} 0.01$ & $0.00 \mathbin{\pm} 0.00$ & $3.00 \mathbin{\pm} 0.00$ & $0.00 \mathbin{\pm} 0.00$ \\
\texttt{3s\_vs\_4z} & 1 & $2.47 \mathbin{\pm} 0.35$ & $0.31 \mathbin{\pm} 0.11$ & $3.00 \mathbin{\pm} 0.00$ & $0.00 \mathbin{\pm} 0.00$ & $3.02 \mathbin{\pm} 0.01$ & $0.00 \mathbin{\pm} 0.00$ & $3.00 \mathbin{\pm} 0.00$ & $0.00 \mathbin{\pm} 0.00$ \\
\texttt{3s\_vs\_5z} & 0 & $2.67 \mathbin{\pm} 0.40$ & $0.15 \mathbin{\pm} 0.19$ & $3.00 \mathbin{\pm} 0.00$ & $0.00 \mathbin{\pm} 0.00$ & $3.02 \mathbin{\pm} 0.01$ & $0.00 \mathbin{\pm} 0.00$ & $3.01 \mathbin{\pm} 0.01$ & $0.00 \mathbin{\pm} 0.00$ \\
\texttt{3s\_vs\_5z} & 1 & $2.67 \mathbin{\pm} 0.43$ & $0.17 \mathbin{\pm} 0.22$ & $3.00 \mathbin{\pm} 0.00$ & $0.00 \mathbin{\pm} 0.00$ & $3.02 \mathbin{\pm} 0.02$ & $0.00 \mathbin{\pm} 0.00$ & $3.00 \mathbin{\pm} 0.00$ & $0.00 \mathbin{\pm} 0.00$ \\
\texttt{3s5z\_vs\_3s6z} & 0 & $8.00 \mathbin{\pm} 0.00$ & $0.00 \mathbin{\pm} 0.00$ & $8.00 \mathbin{\pm} 0.00$ & $0.00 \mathbin{\pm} 0.00$ & $8.07 \mathbin{\pm} 0.02$ & $0.00 \mathbin{\pm} 0.00$ & $8.00 \mathbin{\pm} 0.00$ & $0.00 \mathbin{\pm} 0.00$ \\
\texttt{3s5z\_vs\_3s6z} & 4 & $7.54 \mathbin{\pm} 0.40$ & $0.11 \mathbin{\pm} 0.06$ & $8.00 \mathbin{\pm} 0.00$ & $0.00 \mathbin{\pm} 0.00$ & $8.08 \mathbin{\pm} 0.00$ & $0.00 \mathbin{\pm} 0.00$ & $8.00 \mathbin{\pm} 0.00$ & $0.00 \mathbin{\pm} 0.00$ \\
\texttt{8m} & 0 & $2.05 \mathbin{\pm} 0.42$ & $0.95 \mathbin{\pm} 0.02$ & $8.00 \mathbin{\pm} 0.00$ & $0.00 \mathbin{\pm} 0.00$ & $5.67 \mathbin{\pm} 0.46$ & $0.61 \mathbin{\pm} 0.08$ & $0.90 \mathbin{\pm} 0.17$ & $1.00 \mathbin{\pm} 0.00$ \\
\texttt{8m} & 4 & $5.03 \mathbin{\pm} 0.42$ & $0.69 \mathbin{\pm} 0.10$ & $8.00 \mathbin{\pm} 0.00$ & $0.00 \mathbin{\pm} 0.00$ & $5.96 \mathbin{\pm} 1.20$ & $0.53 \mathbin{\pm} 0.25$ & $1.10 \mathbin{\pm} 0.36$ & $0.99 \mathbin{\pm} 0.00$ \\
\texttt{MMM} & 0 & $6.77 \mathbin{\pm} 1.79$ & $0.60 \mathbin{\pm} 0.25$ & $10.00 \mathbin{\pm} 0.00$ & $0.00 \mathbin{\pm} 0.00$ & $10.11 \mathbin{\pm} 0.01$ & $0.00 \mathbin{\pm} 0.00$ & $0.97 \mathbin{\pm} 0.61$ & $0.99 \mathbin{\pm} 0.02$ \\
\texttt{MMM} & 4 & $7.88 \mathbin{\pm} 0.16$ & $0.55 \mathbin{\pm} 0.07$ & $10.00 \mathbin{\pm} 0.00$ & $0.00 \mathbin{\pm} 0.00$ & $10.11 \mathbin{\pm} 0.02$ & $0.00 \mathbin{\pm} 0.00$ & $1.41 \mathbin{\pm} 1.07$ & $0.97 \mathbin{\pm} 0.03$ \\
\texttt{bane\_vs\_bane} & 0 & $13.91 \mathbin{\pm} 7.11$ & $0.62 \mathbin{\pm} 0.39$ & $21.36 \mathbin{\pm} 0.10$ & $0.89 \mathbin{\pm} 0.01$ & $10.28 \mathbin{\pm} 0.40$ & $0.96 \mathbin{\pm} 0.00$ & $5.17 \mathbin{\pm} 0.19^{\dagger}$ & $1.00 \mathbin{\pm} 0.00^{\dagger}$ \\
\texttt{bane\_vs\_bane} & 4 & $6.37 \mathbin{\pm} 1.20$ & $0.98 \mathbin{\pm} 0.02$ & $21.46 \mathbin{\pm} 0.20$ & $0.87 \mathbin{\pm} 0.02$ & $9.13 \mathbin{\pm} 1.18$ & $0.98 \mathbin{\pm} 0.01$ & $5.22 \mathbin{\pm} 0.17$ & $1.00 \mathbin{\pm} 0.00$ \\
\bottomrule
\end{tabular}
\caption{CA-MAMBA after 100k environment interactions against MAPPO-Lag after 100k, 1M and 5M interactions on the 24 SMAC scenario--limit configurations both methods ran. Read each row across: at the same scenario and limit $d$, lower cost and higher win rate are better, and a method is feasible when its mean cost is at most $d$. Entries are mean $\mathbin{\pm}$ sample standard deviation across seeds. MAPPO-Lag uses three seeds at every checkpoint; CA-MAMBA uses three valid seeds except the starred configuration, which uses two. Each daggered entry includes one run that stopped before 5M and contributes its last logged window (\texttt{2m\_vs\_1z} at $d=0$, seed 1, stopped at 3.01M; \texttt{bane\_vs\_bane} at $d=0$, seed 1, stopped at 4.52M).}
\label{tab:smac-mappolag}
\end{table*}

Table~\ref{tab:smac-macpo} gives MACPO on the same configurations at 100k and
5M interactions, the source of the MACPO counts in the main text.

\begin{table*}[t]
\centering
\small
\setlength{\tabcolsep}{3pt}
\begin{tabular}{@{}lccccccc@{}}
\toprule
 &  & \multicolumn{2}{c}{CA-MAMBA (100k)} & \multicolumn{2}{c}{MACPO (100k)} & \multicolumn{2}{c}{MACPO (5M)} \\
\cmidrule(lr){3-4} \cmidrule(lr){5-6} \cmidrule(lr){7-8}
Scenario & $d$ & Cost $\downarrow$ & Win $\uparrow$ & Cost $\downarrow$ & Win $\uparrow$ & Cost $\downarrow$ & Win $\uparrow$ \\
\midrule
\texttt{1c3s5z} & 0 & $4.23 \mathbin{\pm} 0.27$ & $0.82 \mathbin{\pm} 0.02$ & $9.00 \mathbin{\pm} 0.00$ & $0.00 \mathbin{\pm} 0.00$ & $9.00 \mathbin{\pm} 0.00^{\dagger}$ & $0.00 \mathbin{\pm} 0.00^{\dagger}$ \\
\texttt{1c3s5z} & 4 & $4.83 \mathbin{\pm} 0.94$ & $0.79 \mathbin{\pm} 0.12$ & $9.00 \mathbin{\pm} 0.00$ & $0.00 \mathbin{\pm} 0.00$ & $9.00 \mathbin{\pm} 0.00^{\dagger}$ & $0.00 \mathbin{\pm} 0.00^{\dagger}$ \\
\texttt{2m\_vs\_1z} & 0 & $0.14 \mathbin{\pm} 0.08$ & $0.95 \mathbin{\pm} 0.04$ & $2.00 \mathbin{\pm} 0.00$ & $0.00 \mathbin{\pm} 0.00$ & $0.02 \mathbin{\pm} 0.02$ & $0.99 \mathbin{\pm} 0.01$ \\
\texttt{2m\_vs\_1z} & 1 & $0.14 \mathbin{\pm} 0.09$ & $0.95 \mathbin{\pm} 0.03$ & $2.00 \mathbin{\pm} 0.00$ & $0.00 \mathbin{\pm} 0.00$ & $0.00 \mathbin{\pm} 0.00$ & $1.00 \mathbin{\pm} 0.00$ \\
\texttt{2s\_vs\_1sc} & 0 & $0.23 \mathbin{\pm} 0.11$ & $0.96 \mathbin{\pm} 0.03$ & $2.00 \mathbin{\pm} 0.00$ & $0.00 \mathbin{\pm} 0.00$ & $2.00 \mathbin{\pm} 0.00$ & $0.00 \mathbin{\pm} 0.00$ \\
\texttt{2s\_vs\_1sc} & 1 & $0.31 \mathbin{\pm} 0.10$ & $0.92 \mathbin{\pm} 0.06$ & $2.00 \mathbin{\pm} 0.00$ & $0.00 \mathbin{\pm} 0.00$ & $0.16 \mathbin{\pm} 0.03$ & $0.99 \mathbin{\pm} 0.01$ \\
\texttt{2s3z} & 0 & $2.28 \mathbin{\pm} 0.38$ & $0.77 \mathbin{\pm} 0.04$ & $5.00 \mathbin{\pm} 0.00$ & $0.00 \mathbin{\pm} 0.00$ & $5.00 \mathbin{\pm} 0.00$ & $0.00 \mathbin{\pm} 0.00$ \\
\texttt{2s3z} & 4 & $2.98 \mathbin{\pm} 0.63$ & $0.63 \mathbin{\pm} 0.15$ & $5.00 \mathbin{\pm} 0.00$ & $0.00 \mathbin{\pm} 0.00$ & $1.94 \mathbin{\pm} 0.28$ & $0.93 \mathbin{\pm} 0.04$ \\
\texttt{3m} & 0 & $0.35 \mathbin{\pm} 0.14$ & $0.97 \mathbin{\pm} 0.02$ & $3.00 \mathbin{\pm} 0.00$ & $0.00 \mathbin{\pm} 0.00$ & $3.00 \mathbin{\pm} 0.00$ & $0.00 \mathbin{\pm} 0.00$ \\
\texttt{3m} & 1 & $0.60 \mathbin{\pm} 0.17$ & $0.96 \mathbin{\pm} 0.03$ & $3.00 \mathbin{\pm} 0.00$ & $0.00 \mathbin{\pm} 0.00$ & $0.27 \mathbin{\pm} 0.29$ & $1.00 \mathbin{\pm} 0.00$ \\
\texttt{3s\_vs\_3z} & 0 & $0.95 \mathbin{\pm} 0.18$ & $0.96 \mathbin{\pm} 0.02$ & $3.00 \mathbin{\pm} 0.00$ & $0.00 \mathbin{\pm} 0.00$ & $3.00 \mathbin{\pm} 0.00$ & $0.00 \mathbin{\pm} 0.00$ \\
\texttt{3s\_vs\_3z} & 1 & $0.92 \mathbin{\pm} 0.59^{\ast}$ & $0.95 \mathbin{\pm} 0.05^{\ast}$ & $3.00 \mathbin{\pm} 0.00$ & $0.00 \mathbin{\pm} 0.00$ & $0.82 \mathbin{\pm} 0.86$ & $0.82 \mathbin{\pm} 0.22$ \\
\texttt{3s\_vs\_4z} & 0 & $2.64 \mathbin{\pm} 0.34$ & $0.23 \mathbin{\pm} 0.16$ & $3.00 \mathbin{\pm} 0.00$ & $0.00 \mathbin{\pm} 0.00$ & $3.00 \mathbin{\pm} 0.00$ & $0.00 \mathbin{\pm} 0.00$ \\
\texttt{3s\_vs\_4z} & 1 & $2.47 \mathbin{\pm} 0.35$ & $0.31 \mathbin{\pm} 0.11$ & $3.00 \mathbin{\pm} 0.00^{\ast}$ & $0.00 \mathbin{\pm} 0.00^{\ast}$ & $3.00 \mathbin{\pm} 0.00^{\ast}$ & $0.00 \mathbin{\pm} 0.00^{\ast}$ \\
\texttt{3s\_vs\_5z} & 0 & $2.67 \mathbin{\pm} 0.40$ & $0.15 \mathbin{\pm} 0.19$ & $3.00 \mathbin{\pm} 0.00$ & $0.00 \mathbin{\pm} 0.00$ & $3.00 \mathbin{\pm} 0.00^{\dagger}$ & $0.00 \mathbin{\pm} 0.00^{\dagger}$ \\
\texttt{3s\_vs\_5z} & 1 & $2.67 \mathbin{\pm} 0.43$ & $0.17 \mathbin{\pm} 0.22$ & $3.00 \mathbin{\pm} 0.00$ & $0.00 \mathbin{\pm} 0.00$ & $3.00 \mathbin{\pm} 0.00^{\dagger}$ & $0.00 \mathbin{\pm} 0.00^{\dagger}$ \\
\texttt{3s5z\_vs\_3s6z} & 0 & $8.00 \mathbin{\pm} 0.00$ & $0.00 \mathbin{\pm} 0.00$ & $8.00 \mathbin{\pm} 0.00$ & $0.00 \mathbin{\pm} 0.00$ & $8.00 \mathbin{\pm} 0.00^{\dagger}$ & $0.00 \mathbin{\pm} 0.00^{\dagger}$ \\
\texttt{3s5z\_vs\_3s6z} & 4 & $7.54 \mathbin{\pm} 0.40$ & $0.11 \mathbin{\pm} 0.06$ & $8.00 \mathbin{\pm} 0.00$ & $0.00 \mathbin{\pm} 0.00$ & $8.00 \mathbin{\pm} 0.00^{\dagger}$ & $0.00 \mathbin{\pm} 0.00^{\dagger}$ \\
\texttt{8m} & 0 & $2.05 \mathbin{\pm} 0.42$ & $0.95 \mathbin{\pm} 0.02$ & $8.00 \mathbin{\pm} 0.00$ & $0.00 \mathbin{\pm} 0.00$ & $8.00 \mathbin{\pm} 0.00$ & $0.00 \mathbin{\pm} 0.00$ \\
\texttt{8m} & 4 & $5.03 \mathbin{\pm} 0.42$ & $0.69 \mathbin{\pm} 0.10$ & $8.00 \mathbin{\pm} 0.00$ & $0.00 \mathbin{\pm} 0.00$ & $8.00 \mathbin{\pm} 0.00$ & $0.00 \mathbin{\pm} 0.00$ \\
\texttt{MMM} & 0 & $6.77 \mathbin{\pm} 1.79$ & $0.60 \mathbin{\pm} 0.25$ & $10.00 \mathbin{\pm} 0.00$ & $0.00 \mathbin{\pm} 0.00$ & $10.00 \mathbin{\pm} 0.00$ & $0.00 \mathbin{\pm} 0.00$ \\
\texttt{MMM} & 4 & $7.88 \mathbin{\pm} 0.16$ & $0.55 \mathbin{\pm} 0.07$ & $10.00 \mathbin{\pm} 0.00$ & $0.00 \mathbin{\pm} 0.00$ & $10.00 \mathbin{\pm} 0.00$ & $0.00 \mathbin{\pm} 0.00$ \\
\texttt{bane\_vs\_bane} & 0 & $13.91 \mathbin{\pm} 7.11$ & $0.62 \mathbin{\pm} 0.39$ & $22.55 \mathbin{\pm} 0.02$ & $0.79 \mathbin{\pm} 0.02$ & $7.79 \mathbin{\pm} 0.59$ & $0.99 \mathbin{\pm} 0.01$ \\
\texttt{bane\_vs\_bane} & 4 & $6.37 \mathbin{\pm} 1.20$ & $0.98 \mathbin{\pm} 0.02$ & $22.40 \mathbin{\pm} 0.08$ & $0.80 \mathbin{\pm} 0.02$ & $11.80 \mathbin{\pm} 3.67^{\dagger}$ & $0.96 \mathbin{\pm} 0.04^{\dagger}$ \\
\bottomrule
\end{tabular}
\caption{CA-MAMBA after 100k environment interactions against MACPO after 100k and 5M interactions on the 24 SMAC scenario--limit configurations all methods ran. Read each row across: at the same scenario and limit $d$, lower cost and higher win rate are better. Entries are mean $\mathbin{\pm}$ sample standard deviation over three seeds, or two where starred. Daggered MACPO (5M) entries include runs that stopped before 5M, whose tracked values predate 5M (\texttt{1c3s5z} at $d=0$, seed 1, last logged at 2.155M; \texttt{1c3s5z} at $d=4$, seed 2, last logged at 2.025M; \texttt{3s\_vs\_5z} at $d=0$, seed 1, last logged at 4.25M; \texttt{3s\_vs\_5z} at $d=0$, seed 2, last logged at 4.28M; \texttt{3s\_vs\_5z} at $d=1$, seed 1, last logged at 4.05M; \texttt{3s5z\_vs\_3s6z} at $d=0$, seed 2, last logged at 2.55M; \texttt{3s5z\_vs\_3s6z} at $d=4$, seed 2, last logged at 2.295M; \texttt{bane\_vs\_bane} at $d=4$, seed 2, last logged at 0.735M; \texttt{bane\_vs\_bane} at $d=4$, seed 3, last logged at 1.29M).}
\label{tab:smac-macpo}
\end{table*}

\FloatBarrier

\section{Baseline Hyperparameters}
\label{app:baseline-hyperparams}

Table~\ref{tab:macpo-hyperparams} lists the hyperparameters of the
MACPO baseline on SMAC: the defaults of its code base and the values
our SMAC runs override.

\par\medskip\noindent
\begin{minipage}{\columnwidth}
\makeatletter\def\@captype{table}\makeatother
\centering
\small
\caption{Hyperparameters of the MACPO baseline on SMAC.  SMAC-specific overrides are listed separately; all other parameters use the defaults shown.}
\label{tab:macpo-hyperparams}
\begin{tabular}{@{}lcc@{}}
\toprule
Parameter & Default & SMAC override \\
\midrule
\multicolumn{3}{@{}l}{\emph{Network}} \\
Hidden size & 512 & 128 \\
Layers & 2 & 1 \\
Activation & ReLU & -- \\
Orthogonal init (gain) & 0.01 & -- \\
Feature normalization & yes & -- \\
\midrule
\multicolumn{3}{@{}l}{\emph{Optimisation}} \\
Actor lr & $9 \times 10^{-5}$ & -- \\
Critic lr & $5 \times 10^{-3}$ & -- \\
Adam $\epsilon$ & $10^{-5}$ & -- \\
Weight decay & 0 & -- \\
Learning iters / update & 5 & 15 \\
Mini-batches & 1 & -- \\
Max grad norm & 10 & -- \\
\midrule
\multicolumn{3}{@{}l}{\emph{RL}} \\
$\gamma$ & 0.96 & 0.99 \\
GAE $\lambda$ & 0.95 & -- \\
Entropy coeff & 0.0 & 0.01 \\
Clip $\epsilon$ & 0.2 & -- \\
Value loss coeff & 1 & -- \\
\midrule
\multicolumn{3}{@{}l}{\emph{Safety / MACPO}} \\
Cost limit & 0 & per configuration ($d$) \\
Safety $\gamma$ & 0.09 & 0.2 \\
Target KL & 0.016 & 0.01 \\
Searching steps & 10 & -- \\
CG iterations & 10 & -- \\
Accept ratio & 0.5 & -- \\
Step fraction & 0.5 & -- \\
$g$-step dir coeff & 0.1 & -- \\
$b$-step dir coeff & 0.1 & -- \\
Fraction coeff & 0.1 & -- \\
\midrule
\multicolumn{3}{@{}l}{\emph{Rollout}} \\
Rollout length per update & 8 & 1000 \\
Rollout threads & 5 & 5 \\
Env steps & $10^{8}$ & $10^{7}$ \\
\bottomrule
\end{tabular}
\end{minipage}\par

\FloatBarrier

\end{document}